\definecolor{bgcolor}{rgb}{0.93,0.99,1}
\DeclareMathOperator{\Tr}{Tr}
\newcommand{\ee}{\mathbb{E}}
\newcommand{\mb}{\boldsymbol}
\newcommand{\eg}{\emph{e.g.}}
\newcommand{\tabincell}[2]{
\begin{tabular}{@{}#1@{}}#2\end{tabular}
}
\newlength{\myfootnotespace}
\newcommand\freefootnote[1]{%
  \let\thefootnote\relax%
  \footnotetext{\hspace{\myfootnotespace}#1}%
  \let\thefootnote\svthefootnote%
}
\newcommand{\cA}{\mathcal{A}}
\newcommand{\cD}{\mathcal{D}}
\newcommand{\cH}{\mathcal{H}}
\newcommand{\cR}{\mathcal{R}}
\newcommand{\bE}{\mathbb{E}}
\newcommand{\bR}{\mathbb{R}}
\par\vspace{4mm}}
\newcommand*{\argmin}{\mathop{\mathrm{argmin}}}
\newcommand{\Nbb}{\mathbb{N}}
\newcommand{\BlackBox}{\rule{1.5ex}{1.5ex}}  %
\def\QED{~\rule[-1pt]{5pt}{5pt}\par\medskip}
\newtheorem{theorem}{Theorem}[section]
\newtheorem{example}{Example}[section]
\newtheorem{lemma}{Lemma}[section]
\newtheorem{corollary}{Corollary}[section]
\newtheorem{definition}{Definition}[section]
\newtheorem{assumption}{Assumption}
\author{%
  Jing Xu$^*$   \\
  IIIS, Tsinghua University \\
  \texttt{xujing21@mails.tsinghua.edu.cn} \\
  \And
  Jiaye Teng$^*$  \\
  IIIS, Tsinghua University \\
  \texttt{tjy20@mails.tsinghua.edu.cn} 
  \And
  Yang Yuan \\
  IIIS, Tsinghua University \\
  Shanghai Artificial Intelligence Laboratory \\
  Shanghai Qi Zhi Institute \\
  \texttt{yuanyang@tsinghua.edu.cn}
  \And
  Andrew Chi-Chih Yao \\
  IIIS, Tsinghua University \\
  Shanghai Artificial Intelligence Laboratory \\
  Shanghai Qi Zhi Institute \\
  \texttt{andrewcyao@tsinghua.edu.cn}\\ 
}
\begin{document}

\title{Towards Data-Algorithm Dependent Generalization:\\ a Case Study on Overparameterized Linear Regression}

\date{}

\maketitle

\begin{abstract}

One of the major open problems in machine learning is to characterize generalization in the overparameterized regime, where most traditional generalization bounds become inconsistent even for overparameterized linear regression~\citep{DBLP:conf/nips/NagarajanK19}. In many scenarios, this failure can be attributed to obscuring the crucial interplay between the training algorithm and the underlying data distribution.
This paper demonstrate that the generalization behavior of overparameterized model should be analyzed in a both data-relevant and algorithm-relevant manner. 
To make a formal characterization,
We introduce a notion called data-algorithm compatibility, which considers the generalization behavior of the entire data-dependent training trajectory, instead of traditional last-iterate analysis. 
We validate our claim by studying the setting of solving overparameterized linear regression with gradient descent.
Specifically, we perform a data-dependent trajectory analysis and derive a sufficient condition for compatibility in such a setting. Our theoretical results demonstrate that if we take early stopping iterates into consideration, generalization can hold with significantly weaker restrictions on the problem instance than the previous last-iterate analysis.\freefootnote{$^*$Equal Contribution}
\end{abstract}
\section{Introduction}
Although deep neural networks achieve great success in practice \citep{DBLP:journals/nature/SilverSSAHGHBLB17,DBLP:conf/naacl/DevlinCLT19,DBLP:conf/nips/BrownMRSKDNSSAA20}, their remarkable generalization ability is still among the essential mysteries in the deep learning community. 
One of the most intriguing features of deep neural networks is overparameterization, which confers a level of tractability to the training problem, but leaves traditional generalization theories failing to work.
In generalization analysis, both the training algorithm and the data distribution play essential roles~\citep{goldt2019dynamics, DBLP:conf/iclr/JiangNMKB20}.
For instance, a line of work~\citep{DBLP:journals/cacm/ZhangBHRV21,DBLP:conf/nips/NagarajanK19} highlights the role of the algorithm by showing that the algorithm-irrelevant uniform convergence bounds can become inconsistent
in deep learning regimes. Another line of work~\citep{bartlett2020benign,tsigler2020benign} on benign overfitting emphasizes the role of data distribution via profound analysis of specific overparameterized models.

\begin{figure*}[t]  
\centering
\subfigure[Linear Regression]{
\begin{minipage}{0.45\linewidth}
\centerline{\includegraphics[width=1\textwidth]{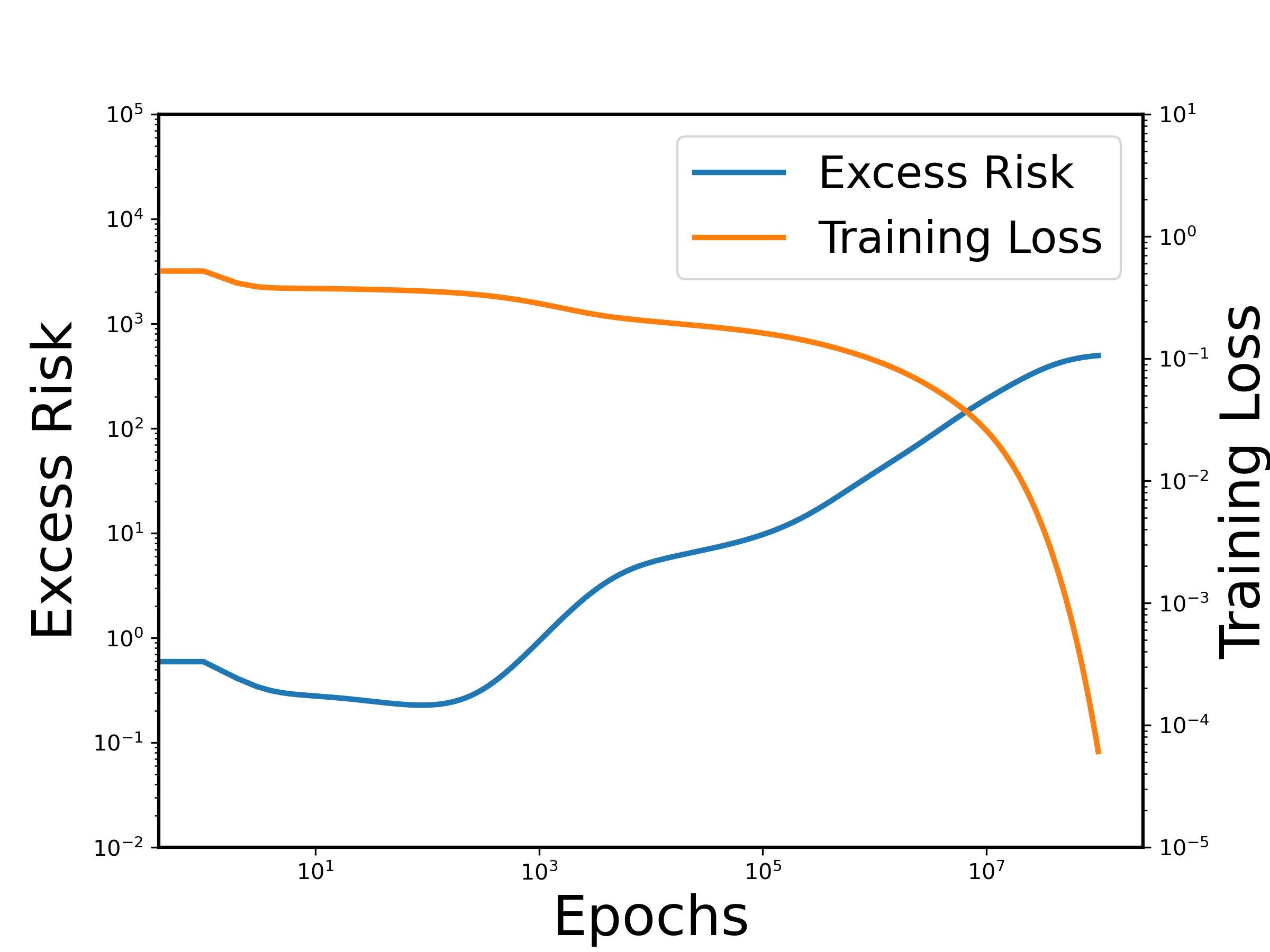}}
\end{minipage}
}
\quad
\subfigure[Corrupted MNIST]{
\begin{minipage}{0.45\linewidth}
\centerline{\includegraphics[width=1\textwidth]{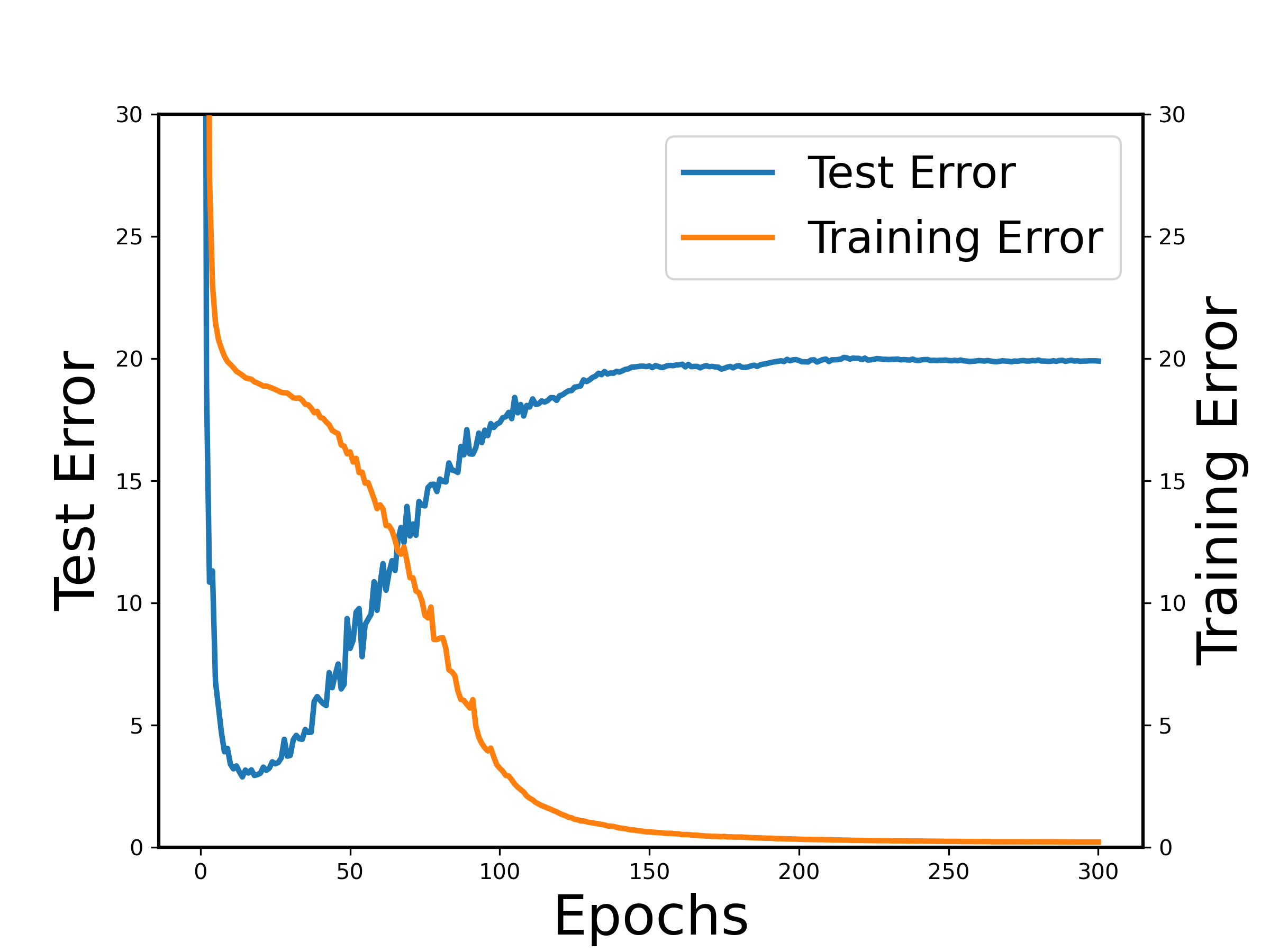}}
\end{minipage}}
\caption{\textbf{(a) The training plot for linear regression with spectrum $\lambda_i=1/i^2$ using GD.} Note that the axes are in the log scale. \textbf{(b) The training plot of CNN on corrupted MNIST with 20\% label noise using SGD.}  Both models successfully learn the useful features in the initial phase of training, but it takes a long time for them to fit the noise in the dataset. The observations demonstrate the power of 
data-dependent trajectory analysis, since the early stopping solutions on the trajectory generalize well but the final iterate fails to generalize.
See Appendix~\ref{section additional experiments} for details.}
\label{fig:intro}
\end{figure*}

Despite the significant role of data and algorithm in generalization analysis, existing theories usually focus on either the data factor~(\eg, uniform convergence~\citep{DBLP:conf/nips/NagarajanK19} and last iterate analysis~\citep{bartlett2020benign,tsigler2020benign}) or the algorithm factor~(\eg, stability-based bounds~\citep{DBLP:conf/icml/HardtRS16}).
Combining both data and algorithm factor into generalization analysis can help derive tighter generalization bounds and explain the generalization ability of overparameterized models observed in practice.
In this sense, a natural question arises:
\begin{center}
    \textit{How to incorporate both data factor and algorithm factor into generalization analysis?}
\end{center}

To gain insight into the interplay between data and algorithms, we provide motivating examples of a synthetic overparameterized linear regression task and a classification task on the corrupted MNIST dataset in figure \ref{fig:intro}. 
In both scenarios, the final iterate with less algorithmic information, which may include the algorithm type (\eg, GD or SGD), hyperparameters (\eg, learning rate, number of epochs), generalizes much worse than the early stopping solutions (see the Blue Line). 
In the linear regression case, the generalization error of the final iterate can be more than $\times 100$ larger than that of the early stopping solution. 
In the MNIST case, the final iterate on the SGD trajectory has 19.9\% test error, much higher than the 2.88\% test error of the best iterate on the GD trajectory. Therefore, the almost ubiquitous strategy of early stopping is a key ingredient in generalization analysis for overparameterized models, whose benefits have been demonstrated both theoretically and empirically \citep{yao2007early,ali2019continuous,DBLP:conf/aistats/LiSO20,advani2020high,ji2021early, DBLP:conf/nips/MallinarSAPBN22}. By focusing on the entire optimization trajectory and performing data-dependent trajectory analysis, both data information and the dynamics of the training algorithm can be exploited to yield consistent generalization bounds.

When we take the algorithm into consideration and analyze the data-dependent training trajectory, generalization occurs if the minimum excess risk of the iterates on the training trajectory converges to zero, as the sample size tends to infinity. This accords with the real practice of training deep neural networks, where one can pick up the best parameter on the training trajectory, by calculating its loss on a validation dataset.
We dub this notion of generalization as as \textit{data-algorithm-compatibility}, which is formally defined in Section~\ref{sec: comp_defi}.

The significance of compatibility comes in three folds.
Firstly, it incorporates both data and algorithm factors into generalization analysis, and is suitable for the overparameterization regime (see Definition~\ref{def: compatibility}).
Secondly, it serves as a minimal condition for generalization, without which one cannot expect to find a consistent solution via standard learning procedures. Consequently, compatibility holds with only mild assumptions and applies to a wide range of problem instances (see Theorem~\ref{thm:compat_main}). 
Thirdly, it captures the algorithmic significance of early stopping in generalization. By exploiting the algorithm information along the entire trajectory, we arrive at better generalization bounds than the last-iterate analysis~(see Table \ref{tab:comparison} and \ref{tab:k1andoptimal} for examples).

To theoretically validate compatibility, we study it under overparameterized linear regression setting. 
Analysis of the overparameterized linear regression is a reasonable starting point to study more complex models like deep neural networks~\cite{DBLP:conf/icml/EmamiSPRF20,DBLP:journals/corr/abs-2201-08082}, since many phenomena of the high dimensional non-linear model are also observed in the linear regime (\eg, Figure~\ref{fig:intro}). Furthermore, the neural tangent kernel (NTK) framework~\citep{DBLP:conf/nips/JacotHG18,DBLP:conf/icml/AroraDHLW19} demonstrates that very wide neural networks trained using gradient descent with appropriate random initialization can be approximated by kernel regression in a reproducing kernel Hilbert space, which rigorously establishes a close relationship between overparameterized linear regression and deep neural network training. 

Specifically, we investigate solving overparameterized linear regression using gradient descent with constant step size, and prove that under some mild regularity conditions, gradient descent is compatible with overparameterized linear regression if the effective dimensions of the feature covariance matrix are asymptotically bounded by the sample size. Compared with the last-iterate analysis~\citep{bartlett2020benign}, the main theorems in this paper require significantly weaker assumptions, which demonstrates the benefits of data-relevant and algorithm-relevant generalization analysis.

We summarize our contributions as follows:
\vspace{-0.5em}
\begin{itemize}
\item We formalize the notion of data-algorithm-compatibility, which highlights the interaction between data and algorithm and serves as a minimal condition for generalization.

\item We derive a sufficient condition for compatibility in solving overparameterized linear regression with gradient descent. 
Our theory shows that generalization of early-stopping iterates requires much weaker restrictions in the considered setting. 

\item Technically, we derive time-variant generalization bounds for overparameterized linear regression via data-dependent trajectory analysis. Empirically, we conduct the various experiments to verify the the theoretical results and demonstrate the benefits of early stopping.
\end{itemize}
\vspace{-0.7em}
\section{Related Works}\label{sec: related works}

\textbf{Data-Dependent Techniques} mainly focus on the data distribution condition for generalization.
One of the most popular data-dependent methods is uniform convergence~\citep{koltchinskii2000rademacher,DBLP:conf/nips/BartlettFT17,DBLP:conf/nips/ZhouSS20,DBLP:journals/cacm/ZhangBHRV21}.
However, recent works~\citep{DBLP:conf/nips/NagarajanK19,DBLP:conf/icml/NegreaD020} point out that uniform convergence may not be powerful enough to explain generalization, because it may only yield inconsistent bound in even linear regression cases.
Another line of works investigates benign overfitting, which mainly studies generalization of overfitting solutions~\citep{bartlett2020benign,zou2021benign,tsigler2020benign,DBLP:journals/corr/abs-2008-02901,DBLP:conf/icassp/WangT21,DBLP:journals/corr/abs-2202-05928, DBLP:conf/nips/ZouWBGK22}. 

\textbf{Algorithm-Dependent Techniques} measure the role of the algorithmic information in generalization.
A line of works derives generalization bounds via algorithm stability~\citep{DBLP:conf/icml/HardtRS16,DBLP:conf/nips/FeldmanV18,DBLP:conf/colt/Mou0Z018,DBLP:conf/colt/FeldmanV19,DBLP:conf/colt/BousquetKZ20,DBLP:conf/iclr/LiLQ20,DBLP:conf/icml/LeiY20,DBLP:conf/nips/BassilyFGT20,DBLP:journals/corr/abs-2106-06153}.
A parallel line of works analyzes the implicit bias of algorithmic information~\citep{DBLP:journals/jmlr/BousquetE02,DBLP:conf/iclr/SoudryHNS18,DBLP:conf/nips/ShahTR0N20,DBLP:conf/nips/HuXAP20,DBLP:conf/iclr/LyuL20,DBLP:conf/nips/LyuLWA21}, which are mainly based on analyzing a specific data distribution (\eg, linear separable).

\textbf{Other Generalization Techniques.} 
Besides the techniques above, there are many other approaches. For example, PAC-Bayes theory performs well empirically and theoretically~\citep{DBLP:conf/colt/Shawe-TaylorW97,DBLP:journals/jmlr/Seeger02,DBLP:conf/colt/McAllester03,DBLP:journals/jmlr/Parrado-HernandezASS12,DBLP:journals/corr/McAllester13,DBLP:conf/uai/DziugaiteR17,DBLP:conf/iclr/NeyshaburBS18} and can yield non-vacuous bounds in deep learning regimes~\citep{DBLP:conf/nips/RivasplataKSS20,DBLP:journals/jmlr/Perez-OrtizRSS21}. 
Furthermore, there are other promising techniques including information theory~\citep{DBLP:conf/aistats/RussoZ16,DBLP:conf/nips/XuR17,DBLP:journals/corr/abs-2105-01747} and compression-based bounds~\citep{DBLP:conf/icml/Arora0NZ18}.

\textbf{Early Stopping} has the potential to improve generalization for various machine learning problems~\citep{DBLP:journals/jmlr/RaskuttiWY14,DBLP:conf/nips/VaskeviciusKR20,DBLP:journals/cacm/ZhangBHRV21,DBLP:conf/nips/LiNHW21,DBLP:conf/colt/KuzborskijS21,DBLP:journals/corr/abs-2106-15853,DBLP:journals/corr/abs-2202-09885, DBLP:journals/corr/abs-2306-02533}.
A line of works studies the rate of early stopping in linear regression and kernel regression with different algorithms, \emph{e.g.}, gradient descent~\citep{yao2007early}, stochastic gradient descent~\citep{DBLP:journals/tit/TarresY14,DBLP:conf/nips/RosascoV15,dieuleveut2016nonparametric,DBLP:journals/jmlr/LinR17,DBLP:conf/nips/Pillaud-VivienR18}, gradient flow~\citep{ali2019continuous}, conjugate gradient~\citep{blanchard2016convergence} and spectral algorithms~\citep{DBLP:journals/neco/GerfoROVV08,DBLP:journals/corr/abs-1801-06720}.
Beyond linear models, early-stopping is also effective for training deep neural networks~\citep{DBLP:conf/aistats/LiSO20,ji2021early}.
Another line of research focuses on the signal for early stopping \citep{DBLP:series/lncs/Prechelt12,DBLP:conf/pkdd/ForouzeshT21}.

\newcommand{\sample}{{\boldsymbol{z}}}
\newcommand{\para}{{\boldsymbol{\theta}}}

\newcommand{\boldZ}{{\boldsymbol{Z}}}

\section{Preliminaries}
\label{sec: compatibility}
In this section, we formally define compatibility between the data distribution and the training algorithm, starting from the basic notations.

\subsection{Notations}

\textbf{Data Distribution.}
Let $\cD$ denote the population distribution and $\sample \sim \cD$ denote a data point sampled from distribution $\cD$.
Usually, $\sample$ contains a feature and its corresponding response. 
Besides, we denote the dataset with $n$ samples as $\boldZ \triangleq \{\sample_i\}_{i \in [n]}$, where $\sample_i\sim \cD$ are i.i.d.~sampled from distribution $\cD$.

\textbf{Loss and Excess Risk.}
Let $\ell(\para;\sample)$ denote the loss on sample $\sample$ with parameter $\para \in \bR^{p}$.
The corresponding population loss is defined as $L(\para; \cD) \triangleq \bE_{\sample \sim \cD} \ell(\para;\sample)$.
When the context is clear, we omit the dependency on $\cD$ and denote the population loss by $L(\para)$.
Our goal is to find the optimal parameter $\para^*$ which minimizes the population loss, i.e., $L(\para^*) = \min_\para L(\para)$.
Measuring how a parameter $\para$ approaches $\para^*$ relies on a term \emph{excess risk} $R(\para)$, defined as $R(\para) \triangleq L(\para) - L(\para^*) $.

\textbf{Algorithm.}
Let $\cA(\cdot)$ denote a iterative algorithm that takes training data $\mb Z$ as input and outputs a sequence of parameters $\{\para_n^{(t)}\}_{t\ge 0}$, where $t$ is the iteration number.
The algorithm can be either deterministic or stochastic, \eg, variants of (S)GD.

\vspace{-0.3em}

\subsection{Definitions of Compatibility}\label{sec: comp_defi}
Based on the above notations, we introduce the notion of compatibility between data distribution and algorithm in Definition~\ref{def: compatibility}.
Informally, compatibility measures whether a consistent excess risk can be reached along the training trajectory.
Note that we omit the role of the loss function in the definition, although the algorithm depends on the loss function.

\begin{definition}[\textbf{Compatibility}]
\label{def: compatibility}
Given a loss function $\ell(\cdot)$ with corresponding excess risk $R(\cdot)$, a data distribution $\cD$ is compatible with an algorithm $\cA$ if there exists nonempty subsets $T_n$ of $\Nbb$, such that  $\sup_{t\in T_n} R(\mb \theta_n^{(t)}) $ converges to zero in probability as sample size $n$ tends to infinity, 

where $\{\mb \theta_{n}^{(t)}\}_{t \geq 0}$ denotes the output of algorithm $\cA$, and the randomness comes from the sampling of training data $\mb Z$ from distribution $\cD$ and the execution of algorithm $\cA$.
That is to say, $(\cD, \cA)$ is compatible if there exists nonempty sets $T_n$, such that 
\begin{equation}
\label{eqn: compatible}
    \sup_{t\in T_n} R(\mb \theta_n^{(t)}) \overset{P}{\to} 0 \text{\quad as \quad } n \to \infty.
\end{equation}
We call $\{T_n\}_{n>0}$ the compatibility region of $(\cD,\cA)$. The distribution $\mathcal{D}$ is allowed to change with $n$. In this case, $\mathcal{D}$ should be understood as a sequence of distributions $\{\mathcal{D}_n\}_{n\ge 1}$. We also allow the dimension of model parameter $\mb \theta$ to be infinity or to grow with $n$. We omit this dependency on $n$ when the context is clear.
\end{definition}

Compatibility serves as a minimal condition for generalization, since if a data distribution is incompatible with the algorithm, one cannot expect to reach a small excess risk even if we allow for \emph{arbitrary} early stopping. However, we remark that considering only the minimal excess risk is insufficient for a practical purpose, as one cannot exactly find the $t$ that minimizes $R(\mb \theta_n^{(t)})$ due to the noise in the validation set. Therefore, it is meaningful to consider a region of time $t$ on which the excess risk is consistent as in Definition~\ref{def: compatibility}. The larger the region is, the more robust the algorithm will be to the noise in its execution.

\textbf{Comparisons with Other Notions.} Compared to classic definitions of learnability, \eg, PAC learning, the definition of compatibility is data-specific and algorithm-specific, and is thus a more fine-grained notion. 
Compared to the concept of \emph{benign} proposed in~\citep{bartlett2020benign}, which studies whether the excess risk at $t=\infty$ converges to zero in probability as the sample size goes to infinity, 
compatibility only requires that there exists a time to derive a consistent excess risk.
We will show later in Section~\ref{sec: compat_thm} that in the overpamameterized linear regression setting, there exist cases such that the problem instance is compatible but not benign.

\section{Analysis of Overparameterized Linear Regression with Gradient Descent}
\label{sec: comp for linear reg}
To validate the meaningfulness of compatibility, 
we study it in the overparameterized linear regression regime.
We first introduce the data distribution, loss, and training algorithm, and then present the main theorem, which provides a sufficient condition for compatibility in this setting.

\subsection{Preliminaries for Overparameterized Linear Regression}\label{sec: linreg_def}
\textbf{Notations.}  Let $O,o,\Omega,\omega$ denote asymptotic notations, with their usual meaning. For example, the argument~$a_n=O(b_n)$ means that there exists a large enough constant $C$, such that $a_n\le C b_n$. We use~$\lesssim$ with the same meaning as the asymptotic notation $O$.
Besides, let $\|\mb x\|$ denote the $\ell_2$ norm for vector $\mb x$, and $\|\mb A\|$ denote the operator norm for matrix $\mb A$.
We allow the vector to belong to a countably infinite-dimensional Hilbert space $\mathcal{H}$, and  with a slight abuse of notation, we use $\mathbb{R}^\infty$ interchangeably with $\mathcal{H}$. In this case, $x^\top z$ denotes inner product and $xz^\top$ denotes tensor product for $x, z\in \mathcal{H}$. A random variable $X$ is called $\sigma$-subgaussian if $\ee[e^{\lambda X}]\le e^{\lambda^2\sigma^2/2}$ for any $\lambda$.

\textbf{Data Distribution.}
Let $(\mb{x} ,y) \in \mathbb{R}^p\times\mathbb{R}$ denote the feature vector and the response, following a joint distribution $\cD$.
Let $\mb \Sigma \triangleq \ee[\mb x \mb x^\top]$ denote the feature covariance matrix, whose eigenvalue decomposition is $\mb \Sigma=\mb V\mb \Lambda \mb V^\top=\sum_{i>0}\lambda_i \mb v_i\mb v_i^\top$ with decreasing eigenvalues $\lambda_1\ge\lambda_2\ge\cdots$.
We make the following assumptions on the distribution of the feature vector.
\begin{assumption}[Assumptions on feature distribution]\label{assumption 1}
We assume that
\begin{enumerate}
    \item $\ee[\mb x]=0$.
    \item $\lambda_1>0, \sum_{i>0}\lambda_i<C$ for some absolute constant $C$.
    \item Let $\Tilde{\mb x}=\mb \Lambda^{-\frac{1}{2}}\mb V^\top \mb x$. The random vector $\Tilde{\mb x}$ has independent $ \sigma_x$-subgaussian entries.
\end{enumerate}
\end{assumption}

\textbf{Loss and Excess Risk.}
We choose square loss as the loss function $\ell$, 
i.e. $\ell(\para,(\mb x,y))= 1/2 (y-\mb x^\top \para)^2$.
The corresponding population loss is denoted by $L(\para) = \bE\ell(\para,(\mb x,y))$
and the optimal parameter is denoted by $\para^* \triangleq \argmin_{\para \in \mathbb{R}^p} L(\para)$. 
We assume that $\|\mb \theta^*\|<C$ for some absolute constant $C$.
If there are multiple such minimizers, we choose an arbitrary one and fix it thereafter. 
We focus on the excess risk of parameter $\para$, defined as
\begin{equation}
    R(\mb \theta) = L(\para) - L(\para^*) = \frac{1}{2}(\para - \para^*)^\top \mb \Sigma (\para - \para^*).
\end{equation}

Let $\varepsilon=y-\mb x^\top \para^*$ denote the noise in data point $(\mb x,y)$.
The following assumptions involve the conditional distribution of the noise.
\begin{assumption}[Assumptions on noise distribution]\label{assumption 2}
We assume that 
\begin{enumerate}
    \item The conditional noise $\varepsilon|\mb x$ has zero mean.
    \item The conditional noise $\varepsilon|\mb x$ is $\sigma_y$-subgaussian.
\end{enumerate}
\end{assumption}
Note that both Assumption~\ref{assumption 1} and Assumption~\ref{assumption 2} are commonly considered in the related literatures~\citep{bartlett2020benign,tsigler2020benign,zou2021benign}.

\textbf{Training Set.}
Given a training set $\{(\mb x_i,y_i)\}_{1\le i \le n}$ with $n$ pairs independently sampled from the population distribution $\cD$, 
we define $\mb X \triangleq (\mb x_1,\cdots, \mb x_n)^\top\in\mathbb{R}^{n\times p}$ as the feature matrix, 
$\mb{Y} \triangleq (y_1,\cdots, y_n)^\top\in\mathbb{R}^n$ as the corresponding noise vector, and $\mb {\varepsilon}  \triangleq \mb Y-\mb X \para^*$ as the residual vector.
Let the singular value decomposition (SVD) of $\mb X$ be $\mb{X}=\mb{U}\mb{\Tilde{\Lambda}}^{\frac{1}{2}}\mb{W}^\top$, with $\mb{ \Tilde{\Lambda}}=\text{diag}\{\mu_1\,\cdots,\mu_n\} \in \mathbb{R}^{n\times n}$, $\mu_1\ge \cdots\ge \mu_n$. 

We consider the overparameterized regime where the feature dimension is larger than the sample size, namely, $p> n$. 
In this regime, 
we assume that $\text{rank}(\mb X)=n$ almost surely as in \citet{bartlett2020benign}. This assumption is equivalent to the invertibility of $XX^\top$.
\begin{assumption}[Linear independent training set]
\label{assump:fullRank}
For any $n < p$, we assume that the features in the training set $\{ \mb x_1, \mb x_2, \cdots, \mb x_n \}$ is linearly independent almost surely.
\end{assumption}

\textbf{Algorithm.}
Given the dataset $(\mb{X}, \mb Y)$, define the empirical loss function as $\hat{L}(\para) \triangleq \frac{1}{2n}\|\mb X\para-\mb Y\|^2.$
We choose full-batch gradient descent on the empirical risk with a constant learning rate $\lambda$ as the algorithm $\cA$ in the previous template. 
In this case, the update rule for the optimization trajectory $\{\para_t\}_{t\ge 0}$ is formulated as
\begin{equation}\label{eq: gd dynamic}
\para_{t+1}=\para_t-\frac{\lambda}{n}\mb X^\top (\mb X\para_t-\mb Y).
\end{equation}

Without loss of generality, we consider zero initialization $\para_0 = \mb 0$ in this paper. 
In this case, for a sufficiently small learning rate $\lambda$, $\para_t$ converges to the \emph{min-norm interpolator} $ \hat{\para}=\mb X^\top(\mb X\mb X^\top)^{-1} \mb Y$ as $t$ goes to infinity, which was well studied previously~\citep{bartlett2020benign}. This paper takes one step further and discuss the excess risk along the entire training trajectory $\{R(\para_t)\}_{t\ge 0}$.

\textbf{Effective Rank and Effective Dimensions.}
We define the effective ranks of the feature matrix $\mb \Sigma$ as 
$
r(\mb\Sigma) \triangleq \frac{\sum_{i>0}\lambda_i}{\lambda_1}
$
and
$R_{k}(\mb \Sigma) \triangleq \frac{(\sum_{i>k}\lambda_i)^2}{\sum_{i>k}\lambda_i^2}$.
Our results depend on two notions of effective dimension of the feature covariance $\mb \Sigma$, defined as 
\begin{align}\label{def: k0}
    &k_0 \triangleq \min\left \{l\ge 0:\lambda_{l+1}\le \frac{c_0\sum_{i>l}\lambda_i}{n}\right\},\\
    &k_1 \triangleq  \min \left \{l\ge 0:\lambda_{l+1}\le \frac{c_1\sum_{i>0}\lambda_i}{n}\right\},
\end{align}

where $c_0,c_1$ are constants independent of the dimension $p$, sample size $n$, and time $t$\footnote{Constants may depend on $\sigma_x$, and we omit this dependency thereafter for clarity.}. We omit the dependency of $k_0,k_1$ on $c_0,c_1,n,\mb\Sigma$ when the context is clear.

\subsection{Main Theorem for Overparameterized Linear Regression with Gradient Descent }\label{sec: compat_thm}
Next, we present the main result of this section, which provides a clean condition for compatibility between gradient descent and overparameterized linear regression.

\begin{theorem}\label{thm:compat_main} 
Consider the overparameterized linear regression setting defined in section \ref{sec: linreg_def}. 
Let Assumption \ref{assumption 1},\ref{assumption 2} and \ref{assump:fullRank} hold. 
Assume the learning rate satisfies $\lambda=O\left(\frac{1}{\Tr(\mb \Sigma)}\right)$.
\begin{itemize}
    \item If the covariance satisfies $k_0=o(n),R_{k_0}(\mb \Sigma)=\omega(n), \ r(\mb \Sigma)=o(n)$, it is compatible in the region $T_n=\left(\omega\left(\frac{1}{\lambda}\right), \infty\right)$.
    \item If the covariance satisfies $k_0=O(n),k_1=o(n),r(\mb \Sigma)=o(n)$, it is compatible in the region $T_n=\left(\omega\left(\frac{1}{\lambda}\right),o\left(\frac{n}{\lambda}\right)\right)$.
    \item If the covariance does not change with $n$, and satisfies $k_0=O(n)$ and $p=\infty$, it is compatible in the region $T_n=\left(\omega\left(\frac{1}{\lambda}\right),o\left(\frac{n}{\lambda}\right)\right)$.
\end{itemize}
\end{theorem}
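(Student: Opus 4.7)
The plan is to analyze the GD trajectory via a bias--variance decomposition that is kept explicitly time-indexed, and then to bound each piece spectrally. Unrolling \eqref{eq: gd dynamic} from $\para_0=\mb{0}$ with $\mb{Y}=\mb{X}\para^*+\mb{\varepsilon}$ gives
\[
\para_t-\para^* = -\mb{M}_t\para^* + \mb{N}_t\mb{\varepsilon},\quad \mb{M}_t\triangleq\bigl(\mb{I}-\tfrac{\lambda}{n}\mb{X}^\top\mb{X}\bigr)^t,\quad \mb{N}_t\triangleq\tfrac{\lambda}{n}\sum_{s=0}^{t-1}\bigl(\mb{I}-\tfrac{\lambda}{n}\mb{X}^\top\mb{X}\bigr)^s\mb{X}^\top,
\]
so that $R(\para_t)\lesssim \para^{*\top}\mb{M}_t\mb{\Sigma}\mb{M}_t\para^* + \mb{\varepsilon}^\top\mb{N}_t^\top\mb{\Sigma}\mb{N}_t\mb{\varepsilon}$. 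Using the SVD $\mb{X}=\mb{U}\tilde{\mb{\Lambda}}^{1/2}\mb{W}^\top$ and the orthogonal projector $\mb{P}=\mb{W}\mb{W}^\top$ onto the row space, these simplify to $\mb{M}_t=(\mb{I}-\mb{P})+\mb{W}(\mb{I}_n-\tfrac{\lambda}{n}\tilde{\mb{\Lambda}})^t\mb{W}^\top$ and $\mb{N}_t=\mb{W}\tilde{\mb{\Lambda}}^{-1/2}[\mb{I}_n-(\mb{I}_n-\tfrac{\lambda}{n}\tilde{\mb{\Lambda}})^t]\mb{U}^\top$. All subsequent analysis amounts to bounding these two scalar quadratic forms as functions of $t$, $n$, and the spectrum $\{\lambda_i\}$ of $\mb{\Sigma}$.

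For the bias I split $\mb{M}_t\para^*$ into the null-space piece $(\mb{I}-\mb{P})\para^*$ and the row-space piece $\mb{W}(\mb{I}_n-\tfrac{\lambda}{n}\tilde{\mb{\Lambda}})^t\mb{W}^\top\para^*$. The first is exactly the bias of the min-norm interpolator, whose $\mb{\Sigma}$-norm vanishes under $k_0=o(n)$ and $r(\mb{\Sigma})=o(n)$ via the subgaussian head/tail concentration arguments of \citep{bartlett2020benign}. The second is damped by the $t$-th power of a diagonal matrix whose entries concentrate below $1-c\lambda\lambda_i$ on the head and $1-c\lambda\sum_{i>k_0}\lambda_i/n$ on the tail; powering then drives this contribution to $o(1)$ as soon as $t=\omega(1/\lambda)$, which is the left edge of the compatibility region in all three cases. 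For the variance, I split $\mb{\Sigma}=\mb{\Sigma}_{\le k}+\mb{\Sigma}_{>k}$ at $k\in\{k_0,k_1\}$ and expand $\sigma_y^2\,\Tr(\mb{N}_t^\top\mb{\Sigma}\mb{N}_t)$ up to concentration error. The head term is $O(k/n)$ via the inverted singular values $\tilde{\mb{\Lambda}}^{-1/2}$, while the tail term interpolates between an early-time estimate of order $(t\lambda/n)\sum_{i>k}\lambda_i$ and the saturated min-norm-interpolator value of order $n/R_{k_0}(\mb{\Sigma})$.

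Reassembling these two estimates yields the three claims almost mechanically. In case~1, $R_{k_0}(\mb{\Sigma})=\omega(n)$ forces the saturated tail variance to vanish, so every iterate after the bias-decay threshold remains consistent and $T_n=(\omega(1/\lambda),\infty)$. In cases~2 and~3 the saturated variance does \emph{not} vanish, so the window must close before the early-time tail bound $(t\lambda/n)\sum_{i>k_1}\lambda_i$ becomes $\Omega(1)$; the hypothesis $k_1=o(n)$ (which, under a fixed $\mb{\Sigma}$ with $p=\infty$ and $\sum_i\lambda_i<\infty$, is automatic from monotone eigenvalue decay) matches the window $(\omega(1/\lambda),o(n/\lambda))$ stated in the theorem. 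The main technical obstacle I anticipate is producing uniform-in-$t$ control of the mixed operator $\mb{W}(\mb{I}_n-\tfrac{\lambda}{n}\tilde{\mb{\Lambda}})^t\mb{W}^\top$ inside $\mb{\Sigma}$-weighted quadratic forms: the concentration machinery of \citep{bartlett2020benign,tsigler2020benign} was built for the $t=\infty$ limit only, and extending their head/tail decomposition to yield bounds that are simultaneously tight over the full compatibility region---while handling the random misalignment between $\mb{W}$ and the eigenbasis of $\mb{\Sigma}$---is where the bulk of the work will sit.
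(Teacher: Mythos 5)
Your overall architecture is the same as the paper's: unroll the GD recursion into $\para_t-\para^*=-\mb M_t\para^*+\mb N_t\mb\varepsilon$ (your $\mb N_t$ is exactly the paper's $\mb X^\dagger[\mb I-(\mb I-\frac{\lambda}{n}\mb X\mb X^\top)^t]$ after telescoping the geometric sum), bound the two quadratic forms separately, control the variance by splitting the spectrum of $\mb X\mb X^\top$ into head and tail with the tail growing like a power of $\lambda t/n$, and dispose of case~1 by monotonicity of the variance in $t$ together with the benign-overfitting bound at $t=\infty$, and of case~3 by showing $k_1=o(n)$ is automatic for a fixed summable spectrum. All of that matches the paper.

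The gap is in your bias argument for the row-space piece. You claim that $\mb W(\mb I_n-\tfrac{\lambda}{n}\tilde{\mb\Lambda})^t\mb W^\top\para^*$ is "damped" to $o(1)$ as soon as $t=\omega(1/\lambda)$ because the diagonal entries sit below $1-c\lambda\lambda_i$ on the head and $1-c\lambda\sum_{i>k_0}\lambda_i/n$ on the tail. That is false for the directions where the empirical eigenvalue $\mu_j/n$ is small: there $(1-\tfrac{\lambda}{n}\mu_j)^t=\exp(-\Theta(\lambda\mu_j t/n))$ stays $\Theta(1)$ throughout the entire window $t=o(n/\lambda)$ (indeed $\mu_j/n$ can be as small as $\Theta(1/n)$ even on the head, since $\lambda_{k_0}\asymp\sum_{i>k_0}\lambda_i/n$), so damping alone proves nothing. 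What saves those directions is not the damping but the smallness of the weight they carry in the $\mb\Sigma$-weighted norm: after replacing $\mb\Sigma$ by $\tfrac1n\mb X^\top\mb X$ up to the concentration error of Lemma~\ref{covariance}, each direction contributes $\tfrac{\mu_j}{n}\bigl(1-\tfrac{\lambda}{n}\mu_j\bigr)^{2t}\|\para^*\|^2$, and the elementary uniform bound $\sigma(1-\sigma)^{2t}\le\tfrac{1}{2t}$ for $\sigma\in[0,1]$ (Lemma~\ref{product of P}) is what yields the $O(1/(\lambda t))$ decay simultaneously over all $\mu_j$. Without this coupling of eigenvalue weight and damping factor your bias bound does not close, and the left endpoint $\omega(1/\lambda)$ of the compatibility region is unsupported. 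A secondary point: you attribute the vanishing of the null-space bias $(\mb I-\mb P)\para^*$ to $k_0=o(n)$ and $r(\mb\Sigma)=o(n)$, but in cases~2 and~3 you only have $k_0=O(n)$; fortunately the null-space term needs only $r(\mb\Sigma)=o(n)$ (it equals $\para^{*\top}(\mb I-\mb P)(\mb\Sigma-\tfrac1n\mb X^\top\mb X)(\mb I-\mb P)\para^*$ since $(\mb I-\mb P)\mb X^\top=0$), so you should drop the spurious $k_0=o(n)$ hypothesis there or the argument would not apply to two of the three cases.
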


The proof of Theorem \ref{thm:compat_main} is given in Appendix \ref{appendix:proof} and sketched in Section~\ref{sec: proof sketch}. The theorem shows that 
gradient descent is compatible with overparameterized linear regression under some mild regularity conditions
on the learning rate, effective rank and effective dimensions.
The condition on the learning rate is natural for optimizing a smooth objective. 
We conjecture that the condition $k_0=O(n)$ can not be removed in general cases, since the effective dimension $k_0$ characterizes the concentration of the singular values of the data matrix $\mb X$ and plays a crucial role in the excess risk of the gradient descent dynamics.

\textbf{Comparison with Benign Overfitting.} The paper \cite{bartlett2020benign} studies overparameterized linear regression and gives the condition for min-norm interpolator to generalize.
They prove that the feature covariance $\mb \Sigma$ is benign if and only if 
\begin{equation}
    k_0=o(n),\ R_{k_0}(\mb \Sigma) =\omega(n), \ r(\mb \Sigma)=o(n)
\end{equation}

As discussed in Section \ref{sec: comp_defi}, benign problem instance also satisfies compatibility, since benign overfitting requires a stronger condition on $k_0$ and an additional assumption on $R_{k_0}(\mb \Sigma)$. The following example shows that this inclusion relationship is strict.
\begin{example}\label{exa: benign_fail}
Under the same assumption as in Theorem~\ref{thm:compat_main},
if the spectrum of $\mb \Sigma$ satisfies 
\begin{equation}
    \lambda_k=\frac{1}{k^\alpha},
\end{equation}
for some $\alpha>1$, we derive that $k_0=\Theta(n)$. Therefore, this problem instance satisfies compatibility, but does not satisfy benign overfitting.
\end{example}

Example \ref{exa: benign_fail} shows the existence of a case where the early stopping solution can generalize but interpolating solution fails. Therefore, Theorem~\ref{thm:compat_main} can characterize generalization for a much wider range of problem instances.

\section{Proof Sketch and Techniques}
\label{sec: proof sketch}

\subsection{A Time Variant Bound}

We further introduce an additional type of effective dimension besides $k_0,k_1$, which is time variant and is utilized to track the optimization dynamics.
\begin{definition}[Effective Dimensions]
Given a feature covariance matrix $\mb \Sigma$, define the effective dimension $k_2$ as
\begin{align}\label{effective dimensions}\begin{aligned}
    &k_2 \triangleq \min\left\{l\ge 0:\sum_{i>l}\lambda_i+n\lambda_{l+1}\le c_2 c(t,n) \sum_{i>0}\lambda_i\right\},
\end{aligned}\end{align}
where $c_2$ is a constant independent of the dimension $p$, sample size $n$, and time $t$.
The term $c(t, n)$ is a function to be discussed later.
When the context is clear, we omit its dependencies on $c_2, c(t, n), n, \mb\Sigma$ and only denote it by $k_2$.
\end{definition}

Based on the effective rank and effective dimensions defined above, we provide a time-variant bound in Theorem~\ref{main thm} for overparameterized linear regression, which further leads to the argument in Theorem~\ref{thm:compat_main}.
Compared to the existing bound~\citep{bartlett2020benign}, Theorem~\ref{main thm} focuses on investigating the role of training epoch $t$ in the excess risk, to give a refined bias-variance decomposition. 

\begin{theorem}[Time Variant Bound]\label{main thm}
Suppose Assumption \ref{assumption 1}, \ref{assumption 2} and \ref{assump:fullRank} hold.
Fix a function $c(t, n)$.
Given $\delta\in(0,1)$, assume that $k_0\le \frac{n}{c}$, $\log \frac{1}{\delta}\le \frac{n}{c}$, $0< \lambda\le \frac{1}{c\sum_{i>0}\lambda_i}$ for a large enough constant $c$. Then with probability at least $1-\delta$, we have for any $t\in \Nbb$,
\begin{equation}
     R(\mb\theta_t)\lesssim B(\mb\theta_t)+V(\mb\theta_t),
\end{equation}
where

\begin{small}
\begin{equation*}
    \begin{split}
        &B(\mb\theta_t)=\left\|\mb\theta^*\right\|^2\left(\frac{1}{\lambda t}+
\|\mb\Sigma\|
\max\left\{\sqrt{\frac{r(\mb\Sigma)}{n}},
\frac{r(\mb\Sigma)}{n},\sqrt{\frac{\log(\frac{1}{\delta})}{n}}\right\}\right),\\
&V(\mb\theta_t)
=\sigma_y^2\log\left(\frac{1}{\delta}\right)
\left(\frac{k_1}{n}+\frac{k_2}{c(t,n)n}+
c(t,n)\left(\frac{\lambda t}{n}\sum_{i>0}\lambda_i\right)^2\right).
    \end{split}
\end{equation*}
\end{small}

\end{theorem}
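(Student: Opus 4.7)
The plan is a bias--variance decomposition of the gradient descent trajectory, followed by bounds on each part that preserve the explicit $t$-dependence. Substituting $\mb Y = \mb X\mb\theta^* + \mb\varepsilon$ into \eqref{eq: gd dynamic} and unrolling from $\mb\theta_0 = \mb 0$ gives the closed form
\begin{equation*}
\mb\theta_t - \mb\theta^* = -\bigl(I - \tfrac{\lambda}{n}\mb X^\top\mb X\bigr)^t \mb\theta^* + \mb X^\top (\mb X\mb X^\top)^{-1}\bigl[I - \bigl(I - \tfrac{\lambda}{n}\mb X\mb X^\top\bigr)^t\bigr]\mb\varepsilon,
\end{equation*}
using Assumption~\ref{assump:fullRank} to invert $\mb X\mb X^\top$. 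Applying $(a+b)^\top\mb\Sigma(a+b)\le 2(a^\top\mb\Sigma a+b^\top\mb\Sigma b)$ decouples $R(\mb\theta_t)$ into a noise-free bias piece and a variance piece quadratic in $\mb\varepsilon$, which I would analyze conditional on $\mb X$ and then combine with a single covariance-concentration event on $\mb X$.

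For the bias piece $\|\mb\Sigma^{1/2}(I-\tfrac{\lambda}{n}\mb X^\top\mb X)^t\mb\theta^*\|^2$, I would insert $(\tfrac{1}{n}\mb X^\top\mb X)^{1/2}$ and use the triangle inequality. The piece with $(\tfrac{1}{n}\mb X^\top\mb X)^{1/2}$ in front is controlled by the spectral estimate $\max_{\mu\in[0,1/\lambda]}\mu(1-\lambda\mu)^{2t}=O(1/(\lambda t))$ (the classical GD training-loss rate on the eigenvalues of $\tfrac{1}{n}\mb X^\top\mb X$, which lie in $[0,1/\lambda]$ by the learning-rate assumption), giving the $\|\mb\theta^*\|^2/(\lambda t)$ term. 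The remaining piece uses the matrix-square-root inequality $\|A^{1/2}-B^{1/2}\|^2 \le \|A-B\|$ for PSD $A,B$, combined with the subgaussian covariance concentration $\|\tfrac{1}{n}\mb X^\top\mb X-\mb\Sigma\|\lesssim \|\mb\Sigma\|\max\{\sqrt{r(\mb\Sigma)/n},\,r(\mb\Sigma)/n,\,\sqrt{\log(1/\delta)/n}\}$ (a Koltchinskii--Lounici-type bound valid under Assumption~\ref{assumption 1}), producing the remaining summand of $B(\mb\theta_t)$.

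For the variance piece, Hanson--Wright applied conditionally on $\mb X$ to the $\sigma_y$-subgaussian $\mb\varepsilon$ (Assumption~\ref{assumption 2}) gives $\mb\varepsilon^\top\mb M\mb\varepsilon\lesssim \sigma_y^2\Tr(\mb M)\log(1/\delta)$. Passing to the SVD $\mb X = \mb U\tilde{\mb\Lambda}^{1/2}\mb W^\top$ and writing the filter $\phi_i \triangleq 1-(1-\lambda\mu_i/n)^t$, the trace reduces to $\Tr(\mb M) = \sum_{i=1}^n \phi_i^2\,\mb w_i^\top\mb\Sigma\mb w_i / \mu_i$. I would split this sum into three blocks at the thresholds $k_1$ and $k_2$. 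For $i\le k_1$, eigenvalue concentration $\mu_i \asymp n\lambda_i$ plus $\mb w_i^\top\mb\Sigma\mb w_i\lesssim \lambda_i$ and $\phi_i\le 1$ yield $k_1/n$. For $k_1 < i \le k_2$, still using $\phi_i\le 1$ but now invoking the defining inequality \eqref{effective dimensions} of $k_2$, the contribution is $k_2/(c(t,n)\,n)$. For $i > k_2$, the Taylor-type bound $\phi_i \le \lambda t\mu_i/n$ reduces each term to $(\lambda t/n)^2\mu_i\mb w_i^\top\mb\Sigma\mb w_i$, and combining the tail eigenvalue bound $\mu_i \lesssim \sum_{j>k_2}\lambda_j + n\lambda_{k_2+1}$ with $\sum_i\mb w_i^\top\mb\Sigma\mb w_i\le \Tr(\mb\Sigma)$ and once again the definition of $k_2$ yields the final $c(t,n)(\lambda t\Tr(\mb\Sigma)/n)^2$ summand.

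The main obstacle will be the variance step: the filter $\phi_i$ depends on random eigenvalues $\mu_i$, the threshold $k_2$ is itself a function of the free parameter $c(t,n)$, and the entire bound must hold uniformly in $t\in\mathbb{N}$ on a single high-probability event over $\mb X$. Getting the sharp rate requires elementwise control of both $\mb w_i^\top\mb\Sigma\mb w_i$ and $\mu_i$ throughout the effective bulk, which is precisely what forces the mild condition $k_0\le n/c$ in the hypotheses. Preserving the explicit $t$ in $\phi_i$ across all three blocks is what lets early stopping exploit the trade-off between the $k_2/(c(t,n)n)$ and $c(t,n)(\lambda t\Tr(\mb\Sigma)/n)^2$ terms, and is ultimately what makes Theorem~\ref{main thm} usable for the compatibility conclusion of Theorem~\ref{thm:compat_main}.
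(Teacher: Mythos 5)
Your overall architecture matches the paper's: the same closed form for $\mb\theta_t$, the same bias/variance decoupling, the same two-part bias bound (GD spectral decay giving $1/(\lambda t)$ on the empirical covariance plus Koltchinskii--Lounici-type concentration for $\mb\Sigma-\tfrac1n\mb X^\top\mb X$; your detour through $\|A^{1/2}-B^{1/2}\|^2\le\|A-B\|$ is a cosmetic variant of the paper's direct add-and-subtract of $\tfrac1n\mb X^\top\mb X$ inside the quadratic form), and the same Hanson--Wright reduction of the variance to $\sigma_y^2\log(1/\delta)\Tr(\mb M)$.

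The gap is in how you split $\Tr(\mb M)=\sum_{j=1}^n\phi_j^2\,\mb w_j^\top\mb\Sigma\mb w_j/\mu_j$. You propose three blocks over the \emph{single} index $j$ of the singular values of $\mb X$, and for $j\le k_1$ you invoke $\mu_j\asymp n\lambda_j$ and $\mb w_j^\top\mb\Sigma\mb w_j\lesssim\lambda_j$. Neither claim is available: the cited eigenvalue bounds give only upper bounds $\mu_{k+1}\lesssim\sum_{i>k}\lambda_i+n\lambda_{k+1}$ and a lower bound on $\mu_n$, and, more seriously, there is no reason for the $j$-th right singular vector $\mb w_j$ of $\mb X$ to localize on $\mb v_j$, so $\mb w_j^\top\mb\Sigma\mb w_j$ can be as large as $\lambda_1$ for $j$ in the bulk. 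The paper avoids per-vector control entirely: it expands the trace as the double sum $\sum_{i\ge1}\sum_{j\le k_3}\frac{\lambda_i}{\mu_j}(\mb v_i^\top\mb w_j)^2$ and splits the \emph{two} indices separately --- the $\mb\Sigma$-eigenindex $i$ at $k_1$ and the singular-value index $j$ at a data-dependent threshold $k_3\le k_2$. For $i\le k_1$ it sums over \emph{all} $j$ to obtain $\lambda_i^2\,\tilde{\mb x}_i^\top(\mb X\mb X^\top)^{-2}\tilde{\mb x}_i\lesssim 1/n$, which is the nontrivial concentration inherited from Lemma~11 of \citet{bartlett2020benign} and is where $k_0\le n/c$ is actually used; for $i>k_1$, $j\le k_3$ it bounds the numerator by $\lambda_{k_1+1}\lesssim\sum_i\lambda_i/n$ and the denominator by $\mu_j\gtrsim c(t,n)\sum_i\lambda_i$, yielding $k_2/(c(t,n)n)$. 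Your single-index split cannot decouple the numerator (controlled through $\lambda_i$) from the denominator (controlled through $\mu_j$), so the first two blocks of your variance bound do not go through as written. Your third block (Bernoulli/Taylor bound on $\phi$ for small $\mu$) is fine and matches the paper.
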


 We provide a high-level intuition behind Theorem~\ref{main thm}.
We decompose $R(\mb\theta_t)$ into the bias term and the variance term. The variance term is then split into the leading part and tailing part based on the sprctrum of the feature covariance $\mb\Sigma$.
The eigenvalues in the tailing part will cause the variance term in the excess risk of the min-norm interpolating solution to be $\Omega(1)$ for fast decaying spectrum, as is the case in~\citep{bartlett2020benign}. However, since the convergence in the tailing eigenspace is slower compared with the leading eigenspace, a proper early stopping strategy will prevent the overfitting in the tailing eigenspace and meanwhile avoid underfitting in the leading eigenspace.

\textbf{The $c(t, n)$ Principle.}
It is worth emphasizing that our bound holds for arbitrary positive function $c(t,n)$. 
Therefore, one can fine-tune the generalization bound by choosing a proper $c(t, n)$.
In the subsequent sections, we show how to derive consistent risk bounds for different time $t$, based on different choices of $c(t,n)$. We present the case of choosing a constant $c(t,n)$ in the next section. We leave the case of choosing a varying $c(t,n)$ to Appendix~\ref{further examples}.

\subsection{Varying \texorpdfstring{$t$}{Lg}, Constant \texorpdfstring{$c(t,n)$}{Lg}}\label{sec:proof_compat}

\begin{table*}[t]
\caption{\textbf{Comparisons of excess risk bound with \citet{bartlett2020benign} and \citet{zou2021benign}.}
We provide four types of feature covariance with eigenvalues $\lambda_k$, including \emph{Inverse Polynomial} ($\lambda_k=\frac{1}{k^\alpha}$, $\alpha>1$), \emph{Inverse Log Polynomial} ($\lambda_k=\frac{1}{k \log^\beta (k+1)}$, $\beta>1$), \emph{Constant} ($\lambda_k=\frac{1}{n^{1+\varepsilon}}$, $1\le k \le n^{1+\varepsilon}$, $\varepsilon>0$), and \emph{Piecewise Constant} ($\lambda_k = \frac{1}{s}$ if $1\leq k \leq s$ and $\lambda_k = \frac{1}{d-s}$ if $s+1 \leq k \leq d$, where $s= n^r,d=n^q,0<r\le 1,q> 1$). 
In light of these bounds, ours outperforms \citet{bartlett2020benign} in all the cases, and outperforms \citet{zou2021benign} in Constant / Piecewise Constant cases if $\varepsilon<\frac{1}{2}$ and $q<\min\{2-r,\frac{3}{2}\}$. We refer to Appendix~\ref{appen:exa} for more details.}
\label{tab:comparison}
\vskip 0.15in
\begin{center}
\begin{small}
\begin{sc}
\setlength{\tabcolsep}{0.15mm}{
\begin{tabular}{lcccr}
\toprule
Distributions & Ours & \citet{bartlett2020benign} & \citet{zou2021benign} \\
\midrule
\midrule
Inverse polynomial& $O\left(n^{- \min \left\{\frac{\alpha-1}{\alpha},\frac{1}{2}\right\}}\right)$& $O(1)$ & $O\left(n^{-\frac{\alpha-1}{\alpha}}\right)$\\
\midrule
Inverse log polynomial& $O\left(\frac{1}{\log^\beta n}\right)$& $o(1)$ & $O\left(\frac{1}{\log^\beta n}\right)$\\
\midrule
Constant & $O\left(n^{-\frac{1}{2}}\right)$ &$O\left(n^{- \min \left\{\varepsilon,\frac{1}{2}\right\}}\right)$ &$O\left(n^{-\min\{\varepsilon,1\}}\right)$ \\
\midrule
Piecewise constant & $O\left(n^{- \min \left\{1-r,\frac{1}{2}\right\}}\right)$ &$O\left(n^{-\min
\left\{1-r,q-1,\frac{1}{2} \right\}}\right)$ &$O\left(n^{-\min\left\{1-r,q-1\right\}}\right)$\\
\bottomrule
\end{tabular}
}
\end{sc}
\end{small}
\end{center}
\end{table*}

Theorem~\ref{main thm} provides an excess risk upper bound uniformly for $t\in\Nbb$.
However, it is still non-trivial to derive Theorem~\ref{thm:compat_main}, where the remaining question is to decide the term $c(t, n)$.
The following corollary shows the generalization bound when setting $c(t, n)$ to a constant.

\begin{corollary}\label{sqrt n}
Let Assumption \ref{assumption 1}, \ref{assumption 2} and \ref{assump:fullRank} hold. Fix a constant $c(t, n)$. Suppose $k_0= O(n)$, $k_1=o(n)$, $r(\mb\Sigma)=o(n)$, $\lambda=O\left(\frac{1}{\sum_{i>0}\lambda_i}\right)$.
Then there exists a sequence of positive constants $\{\delta_n\}_{n\ge 0}$ which converge to 0, such that with probability at least $1-\delta_n$, the excess risk is consistent for $t\in\left(\omega\left(\frac{1}{\lambda}\right),o\left(\frac{n}{\lambda}\right)\right)$, i.e. 
\begin{equation}
    R(\mb\theta_t)=o(1).
\end{equation}
Furthermore, for any positive constant $\delta$, with probability at least $1-\delta$, the minimal excess risk on the training trajectory can be bounded as
\begin{equation}
    \min_t R(\mb\theta_t)\lesssim \frac{\max\{\sqrt{r(\mb\Sigma)},1\}}{\sqrt{n}}
+\frac{\max\{k_1,1\}}{n}.
\end{equation}
\end{corollary}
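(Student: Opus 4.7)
The plan is to apply Theorem~\ref{main thm} with a constant choice $c(t,n)\equiv c_*$ and unpack each summand in the resulting upper bound. First I would check the preconditions of Theorem~\ref{main thm}: the assumption $k_0=O(n)$ gives $k_0\le n/c$ for large enough $n$ (absorbing the universal constant), and the learning-rate condition is immediate from $\lambda=O(1/\sum_{i>0}\lambda_i)$ together with $\sum_{i>0}\lambda_i=\Tr(\mb\Sigma)$. Next, for a sufficiently large $c_*$ (specifically $c_*\ge(1+c_1)/c_2$, with $c_1,c_2$ from the definitions of $k_1$ and $k_2$), I would show $k_2\le k_1$ by evaluating the defining inequality of $k_2$ at $l=k_1$: the relation $n\lambda_{k_1+1}\le c_1\sum_{i>0}\lambda_i$ combined with $\sum_{i>k_1}\lambda_i\le\sum_{i>0}\lambda_i$ yields $\sum_{i>k_1}\lambda_i+n\lambda_{k_1+1}\le(1+c_1)\sum_{i>0}\lambda_i\le c_2c_*\sum_{i>0}\lambda_i$. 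Hence $k_2=o(n)$ whenever $k_1=o(n)$.

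For the first claim, write $u=\lambda t$ and consider any $T_n\subset(\omega(1/\lambda),o(n/\lambda))$, so that $u\to\infty$ and $u/n\to 0$ uniformly over $t\in T_n$. Each summand in Theorem~\ref{main thm} is then handled separately: the optimization bias $\|\mb\theta^*\|^2/u\to 0$; the data-dependent bias floor $\|\mb\Sigma\|\max\{\sqrt{r/n},r/n,\sqrt{\log(1/\delta_n)/n}\}\to 0$ whenever $\log(1/\delta_n)/n\to 0$, using $r(\mb\Sigma)=o(n)$ and $\|\mb\Sigma\|\le C$; the leading variance terms $\log(1/\delta_n)(k_1+k_2)/n$ vanish when $\log(1/\delta_n)=o(n/\max\{k_1,1\})$; and the trailing variance term $\log(1/\delta_n)(u\sum_{i>0}\lambda_i/n)^2$ vanishes since $u\sum_{i>0}\lambda_i/n\to 0$. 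Denoting by $\epsilon_n$ the maximum of these vanishing quantities stripped of the $\log(1/\delta_n)$ factor, the choice $\delta_n=\exp(-\epsilon_n^{-1/2})$ ensures both $\delta_n\to 0$ and $\log(1/\delta_n)\epsilon_n\to 0$. Since the two $t$-dependent terms are monotonic in $u$, their suprema over $T_n$ are attained at the endpoints and are also $o(1)$, giving $\sup_{t\in T_n}R(\mb\theta_t)=o(1)$ with probability at least $1-\delta_n$.

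For the second claim, fix a constant $\delta>0$ and optimize over $t\in\Nbb$. The $t$-dependent pair reduces to $\|\mb\theta^*\|^2/(\lambda t)+\sigma_y^2\log(1/\delta)\,c_*(\lambda t\sum_{i>0}\lambda_i/n)^2$; balancing the two yields $\lambda t_*\asymp(n/\sum_{i>0}\lambda_i)^{2/3}$ and a combined contribution of order $(\sum_{i>0}\lambda_i/n)^{2/3}=O(n^{-2/3})$, since $\sum_{i>0}\lambda_i$ is bounded. Taking $t_*=\lceil(n/\sum_{i>0}\lambda_i)^{2/3}/\lambda\rceil$, which is a valid positive integer for large enough $n$ because $\lambda=O(1/\sum_{i>0}\lambda_i)$. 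Combining this with the $t$-independent data bias floor $\|\mb\Sigma\|\sqrt{r/n}=\lambda_1\sqrt{r/n}\lesssim\sqrt{r(\mb\Sigma)/n}$ (using $\lambda_1\le C$) and the variance floor $\log(1/\delta)(k_1+k_2)/n\lesssim k_1/n$, and observing that $n^{-2/3}$ is of lower order than $\sqrt{r/n}\ge 1/\sqrt{n}$ (valid since $r(\mb\Sigma)\ge 1$ always), one obtains the claimed bound $\min_t R(\mb\theta_t)\lesssim\max\{\sqrt{r(\mb\Sigma)},1\}/\sqrt{n}+\max\{k_1,1\}/n$; the $\max\{\cdot,1\}$ factors absorb the edge cases $k_1=0$ and the $\log(1/\delta)\sqrt{1/n}$ remnant.

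The main obstacle I expect is the careful verification that $k_2\lesssim k_1$ for constant $c(t,n)$, together with the asymptotic bookkeeping showing that the optimization-induced rate $O(n^{-2/3})$ is absorbed into $\sqrt{r(\mb\Sigma)/n}$ and that a single adaptive choice of $\delta_n$ simultaneously tames every $\log(1/\delta_n)\cdot o(1)$ product. Beyond that, the proof is a matter of plugging into Theorem~\ref{main thm} and tracking constants.
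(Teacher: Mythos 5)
Your proposal is correct and follows essentially the same route as the paper: invoke Theorem~\ref{main thm} with a constant $c(t,n)$, establish $k_2\le k_1$ exactly as in the paper's Lemma~\ref{k order}, pick a slowly decaying $\delta_n$ so that every $\log(1/\delta_n)\cdot o(1)$ product vanishes, and bound $\min_t R(\mb\theta_t)$ by evaluating at a specific stopping time. The only cosmetic difference is your balancing point $\lambda t_*\asymp (n/\sum_{i>0}\lambda_i)^{2/3}$ versus the paper's $t=\Theta(\sqrt{n}/\lambda)$; both are dominated by the $\sqrt{r(\mb\Sigma)/n}$ bias floor and yield the same final bound.
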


Lemma~\ref{lem:k1} below shows that $k_1=o(n)$ always holds for fixed distribution.
Therefore, combining Corollary~\ref{sqrt n} and the following Lemma~\ref{lem:k1} completes the proof of Theorem \ref{thm:compat_main}.

\begin{lemma}\label{lem:k1}
For any fixed (i.e. independent of sample size $n$) feature covariance $\mb \Sigma$ satisfying assumption \ref{assumption 1}, we have $k_1(n)=o(n)$.
\end{lemma}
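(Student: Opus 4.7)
\textbf{Proof proposal for Lemma~\ref{lem:k1}.}

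The plan is to reduce the statement to the classical fact that any non-increasing nonnegative summable sequence $(\lambda_i)_{i\ge 1}$ satisfies $i\lambda_i \to 0$ as $i\to\infty$. By Assumption~\ref{assumption 1}, the trace $T \triangleq \sum_{i>0}\lambda_i$ is a fixed finite constant independent of $n$, and the eigenvalues are arranged in non-increasing order. Hence the threshold $\tau_n \triangleq c_1 T/n$ tends to $0$, and the definition
\[
k_1(n) = \min\Bigl\{\, l\ge 0 \,:\, \lambda_{l+1} \le \tau_n \,\Bigr\}
\]
gives (in the nontrivial case $k_1(n)\ge 1$) the two-sided control
\[
\lambda_{k_1(n)} > \tau_n \ge \lambda_{k_1(n)+1}.
\]

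First I would verify the auxiliary claim $i\lambda_i \to 0$. This is standard: since the tail $\sum_{j\ge \lceil i/2\rceil}\lambda_j$ of a convergent series vanishes, and monotonicity yields $\sum_{j=\lceil i/2\rceil}^{i}\lambda_j \ge (i/2)\lambda_i$, one obtains $(i/2)\lambda_i \to 0$.

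The main step is then a contradiction argument. Suppose, for contradiction, that $k_1(n)$ is not $o(n)$: there exist $\varepsilon>0$ and a subsequence $n_j\to\infty$ with $k_1(n_j)\ge \varepsilon n_j$. In particular $k_1(n_j)\to\infty$, so the case $k_1(n_j)=0$ is excluded and the lower bound $\lambda_{k_1(n_j)}>\tau_{n_j} = c_1 T/n_j$ applies. Multiplying both sides by $k_1(n_j)$ gives
\[
k_1(n_j)\,\lambda_{k_1(n_j)} \;>\; \varepsilon n_j \cdot \frac{c_1 T}{n_j} \;=\; \varepsilon\, c_1\, T \;>\; 0,
\]
which contradicts the auxiliary fact $i\lambda_i\to 0$ applied along the index sequence $i=k_1(n_j)\to\infty$. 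Therefore $k_1(n)=o(n)$, as required.

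I do not expect any real obstacle here. The only delicate point is the edge case $k_1(n)=0$, which is harmless because it already gives $k_1(n)=o(n)$ trivially, and in the contradiction argument it is ruled out since $k_1(n_j)\ge \varepsilon n_j\to\infty$. All remaining ingredients are routine manipulations of summable non-increasing sequences.
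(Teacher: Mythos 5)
Your proof is correct. It rests on the same underlying mechanism as the paper's proof — monotonicity plus summability of the spectrum forces $\lambda_l$ to be small for most indices, contradicting the lower bound $\lambda_{k_1(n)} > c_1 T/n$ when $k_1(n)$ is of order $n$ — but the packaging differs. The paper argues by a dyadic block summation: assuming $k_1(n)\ge cn$, it lower-bounds the sum of $\lambda_l$ over each block $(\lfloor cn2^i\rfloor, \lfloor cn2^{i+1}\rfloor]$ by a fixed positive constant and sums over $i$ to contradict $\sum_{i>0}\lambda_i<\infty$. You instead route through the classical fact (Olivier's theorem) that $i\lambda_i\to 0$ for any non-increasing summable nonnegative sequence, and then contradict it along the subsequence $i=k_1(n_j)$. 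Your version is arguably the cleaner of the two, and it is also more careful on one logical point: the negation of $k_1(n)=o(n)$ only yields $k_1(n_j)\ge\varepsilon n_j$ along a subsequence, which your argument handles explicitly, whereas the paper's block summation implicitly uses the bound $k_1(n')\ge cn'$ at every dyadic scale $n'=n2^{i+1}$ without justifying that the assumed inequality propagates to all those scales. Your treatment of the edge case $k_1(n)=0$ is also correct.
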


Next we apply the bound in Corollary~\ref{sqrt n} to several data distributions.
These distributions are widely discussed in \citep{bartlett2020benign,zou2021benign}. 
We also derive the existing excess risk bounds, which focus on the min-norm interpolator \citep{bartlett2020benign} and one-pass SGD iterate \citep{zou2021benign}, of these distributions and compare them with our theorem. The results are summarized in Table \ref{tab:comparison}, which shows that the bound in Corollary~\ref{sqrt n} outperforms previous results for a general class of distributions.

\begin{example}\label{Allexample}
Under the same conditions as Theorem~\ref{main thm}, let $\mb \Sigma$ denote the feature covariance matrix.
We show the following examples:
\begin{enumerate}
    \item \textbf{\emph{(Inverse Polynominal).}} If the spectrum of $\mb \Sigma$ satisfies $\lambda_k=\frac{1}{k^\alpha}$ for some $\alpha>1$, we derive that $k_0=\Theta(n)$, $k_1=\Theta\left(n^{\frac{1}{\alpha}}\right)$. Therefore, $\min_t V(\mb\theta_t)=O\left(n^{\frac{1-\alpha}{\alpha}}\right)$ and $\min_t R(\mb\theta_t)=O\left(n^{- \min \left\{\frac{\alpha-1}{\alpha},\frac{1}{2}\right\}}\right)$.
    
    \item \textbf{\emph{(Inverse Log-Polynominal).}} If the spectrum of $\mb \Sigma$ satisfies $\lambda_k=\frac{1}{k \log^\beta (k+1)}$ for some $\beta>1$, we derive that $k_0=\Theta\left(\frac{n}{\log n}\right)$, $k_1=\Theta\left(\frac{n}{\log^\beta n}\right)$.
    Therefore, $\min_t V(\mb\theta_t)=O\left(\frac{1}{\log^\beta n}\right)$ and $\min_t R(\mb\theta_t)=O\left(\frac{1}{\log^\beta n}\right)$.
    
    \item \textbf{\emph{(Constant).}} If the spectrum of $\mb \Sigma$ satisfies $\lambda_k=\frac{1}{n^{1+\varepsilon}},1\le k \le n^{1+\varepsilon}$,
    for some $\varepsilon>0$, we derive that $k_0=0$, $k_1=0$. Therefore, $\min_t V(\mb\theta_t)=O\left(\frac{1}{n}\right)$ and $\min_t R(\mb\theta_t)=O\left(\frac{1}{\sqrt{n}}\right)$.
    
    \item \textbf{\emph{(Piecewise Constant).}} If the spectrum of $\mb \Sigma$ satisfies $\lambda_k=\begin{cases}\frac{1}{s}&1\le k\le s,\\ \frac{1}{d-s}& s+1\le k\le d,\end{cases}$ where $s= n^r,d=n^q,0<r\le 1,q\ge 1$. We derive that $k_0=n^r$, $k_1=n^r$. Therefore, $\min_t V(\mb\theta_t)=O(n^{r-1})$ and $\min_t R(\mb\theta_t)=O\left(n^{- \min \left\{1-r,\frac{1}{2}\right\}}\right).$
\end{enumerate}
\end{example}

\subsection{Varying \texorpdfstring{$t$}{Lg}, Varying \texorpdfstring{$c(t,n)$}{Lg}}\label{further examples}
Although setting $c(t, n)$ to a constant as in Corollary~\ref{sqrt n} suffices to prove Theorem~\ref{thm:compat_main}, in this section we show that the choice of $c(t, n)$ can be much more flexible.
Specifically, we provide a concrete example and demonstrate that by setting $c(t, n)$ to a non-constant, Theorem~\ref{main thm} can indeed produce larger compatibility regions.

\begin{example}
\label{eg: non-constant Example}
Under the same conditions as Theorem~\ref{main thm}, let $\mb \Sigma$ denote the feature covariance matrix.
If the spectrum of $\mb \Sigma$ satisfies $\lambda_k=\frac{1}{k^\alpha}$ for some $\alpha>1$, we set $c(t,n)=\Theta\left(n^{\frac{\alpha+1-2\alpha\tau}{2\alpha+1}}\right)$ for a given $\frac{\alpha+1}{2\alpha}\le\tau\le \frac{3\alpha+1}{2\alpha+2}$. 
Then for $t = \Theta(n^\tau)$, we derive that $V(\mb\theta_t)=O\left(n^{\frac{2\alpha\tau-3\alpha+2\tau-1}{2\alpha+1}}\right)$.

\end{example}

Example~\ref{eg: non-constant Example} shows that by choosing $c(t, n)$ as a non-constant, we exploit the full power of Theorem~\ref{main thm}, and extend the region to $t=\Theta\left(n^{\frac{3\alpha+1}{2\alpha+2}}\right)=\omega(n)$. In this example, Theorem~\ref{main thm} outperforms all $O\left(\frac{t}{n}\right)$-type bounds, which become vacuous when $t=\omega(n)$.

\section{Experiments}\label{sec: exp}

\begin{table*}[t]
\caption{\textbf{The effective dimension $k_1$, the optimal early stopping excess risk, and the min-norm excess risk for different feature distributions, with sample size $n=100$ , $p= 1000$.}
  The table shows that early stopping solutions generalize significantly better than min-norm interpolators, and reveals a positive correlation between the effective dimension $k_1$ and excess risk of early stopping solution. We calculate the 95\% confidence interval for each excess risk.}
\vspace{-1em}
\label{tab:k1andoptimal}
\vskip 0.15in
\begin{center}
\begin{small}
\begin{sc}
\begin{tabular}{lccc}
\toprule
    \tabincell{c}{Distributions}  & $k_1$ &  \tabincell{c}{Optimal Excess Risk} & \tabincell{c}{Min-norm Excess Risk}\\
\midrule
$\lambda_i = \frac{1}{i}$& $\Theta\left(n\right)$& $2.399\pm 0.0061$  & $24.071\pm 0.2447$\\
$\lambda_i = \frac{1}{i^2}$ &$\Theta\left(n^{\frac{1}{2}}\right)$ & $0.214\pm 0.0050$ & $43.472\pm 0.6463$\\
$\lambda_i = \frac{1}{i^3}$ &$\Theta\left(n^{\frac{1}{3}}\right)$ & $0.077\pm 0.0005$ & $10.401\pm 0.2973$ \\
$\lambda_i = \frac{1}{i\log(i+1)}$& $\Theta\left(\frac{n}{\log n}\right)$& $0.697\pm 0.0053$&
$89.922\pm 0.9591$\\
$\lambda_i = \frac{1}{i\log^2(i+1)}$&$\Theta\left(\frac{n}{\log^2 n}\right)$ & $0.298\pm 0.0054$ & $82.413\pm 0.9270$\\
$\lambda_i = \frac{1}{i\log^3(i+1)}$&$\Theta\left(\frac{n}{\log^3 n}\right)$ & $0.187\pm 0.0047$ & $38.145\pm 0.5862$\\
\bottomrule
\end{tabular}
\end{sc}
\end{small}
\end{center}
\end{table*}
\vspace{-0.5em}
In this section, we provide numerical studies of overparameterized linear regression problems.
We consider overparameterized linear regression instances with input dimension $p=1000$, sample size $n=100$. The features are sampled from Gaussian distribution with different covariances. 
The empirical results
\textbf{(a.) demonstrate the benefits of trajectory analysis underlying the definition of compatibility,} since the optimal excess risk along the algorithm trajectory is significantly lower than that of the min-norm interpolator
\textbf{(b.) validate the statements in Corollary~\ref{sqrt n}}, since the optimal excess risk is lower when the eigenvalues of feature covariance decay faster.
We refer to Appendix~\ref{section additional experiments} for detailed setups, additional results and discussions.

\textbf{Observation One: Early stopping solution along the training trajectory generalizes significantly better than the min-norm interpolator.} 
We calculate the excess risk of optimal early stopping solutions and min-norm interpolators from 1000 independent trials and list the results in Table~\ref{tab:k1andoptimal}. 
The results illustrate that the early stopping solution on the algorithm trajectory enjoys much better generalization properties.
This observation corroborates the importance of data-dependent training trajectory in generalization analysis.

\textbf{Observation Two: The faster covariance spectrum decays, the lower optimal excess risk is.}
Table \ref{tab:k1andoptimal} also illustrates a positive correlation between the decaying rate of $\lambda_i$ and the generalization performance of the early stopping solution.
This accords with Theorem~\ref{main thm}, showing that the excess risk is better for a smaller effective dimension $k_1$, where small $k_1$ indicates a faster-decaying eigenvalue $\lambda_i$.
We additionally note that such a phenomenon also illustrates the difference between min-norm and early stopping solutions in linear regression, since \citet{bartlett2020benign} demonstrate that the min-norm solution is not consistent when the eigenvalues decay too fast. By comparison,  early stopping solutions do not suffer from this restriction.

\section{Conclusion}
In this paper, we investigate how to characterize and analyze generalization in a data-dependent and algorithm-dependent manner. We formalize the notion of data-algorithm compatibility and study it under the regime of overparameterized linear regression with gradient descent. Our theoretical and empirical results demonstrate that one can ease the assumptions and broaden the scope of generalization by fully exploiting the data information and the algorithm information.
Despite linear cases in this paper, compatibility can be a much more general concept. Therefore, we believe this paper will motivate more work on data-dependent trajectory analysis.
\section*{Acknowledgement}

The authors would like to acknowledge the support from the 2030 Innovation Megaprojects of China (Programme on New Generation Artificial Intelligence) under Grant No. 2021AAA0150000.

\bibliographystyle{plainnat}
\bibliography{sample}

\newpage
\appendix

\newpage
\appendix
\begin{center}
\Large\bf Supplementary Materials
\end{center}

\section{Proofs for the Main Results}\label{appendix:proof}
We first sketch the proof in section \ref{app:proofsketch} and give some preliminary lemmas \ref{app:pre}. The following sections \ref{section decomposition}, \ref{section bias} and \ref{section variance} are devoted to the proof of Theorem \ref{main thm}. The proof of Theorem \ref{thm:compat_main} is given in \ref{app:compat}.
\subsection{Proof Sketch}\label{app:proofsketch}

We start with a standard bias-variance decomposition following~\citet{bartlett2020benign}, which derives that the time-variant excess risk $R(\mb \theta_t)$ can be bounded by a bias term and a variance term.
We refer to Appendix~\ref{section decomposition} for more details.

For the bias part, we first decompose it into an optimization error and an approximation error.
For the optimization error, we use the spectrum analysis to bound it with $O\left(1/t\right)$ where $t$ denotes the time.
For the approximation error, we bound it with  $O\left(1/\sqrt{n}\right))$ where $n$ denotes the sample size, inspired by \citet{bartlett2020benign}.
We refer to Appendix~\ref{section bias} for more details.

For the variance part, a key step is to bound the term $(\mb I-\frac{\lambda}{n}\mb X\mb X^\top)^t$, where $\mb X$ is the feature matrix.
The difficulty arises from the different scales of the eigenvalues of $\mb X\mb X^\top$, where the largest eigenvalue has order $\Theta(n)$ while the smallest eigenvalue has order $O(1)$, according to Lemma~10 in \citet{bartlett2020benign}.
To overcome this issue, we divide the matrix  $\mb X\mb X^\top$ based on whether its eigenvalues is larger than $c(t,n)$, which is a flexible term dependent on time $t$ and sample size $n$.
Therefore, we split the variance term based on eigenvalues of covariance matrix $\mb \Sigma$ (leading to the $k_1$-related term) and based on the eigenvalues of $\mb X\mb X^\top$  (leading to the $k_2$-related term).
We refer to Appendix~\ref{section variance} for more details.

\subsection{Preliminaries}\label{app:pre}

The following result comes from \citet{bartlett2020benign}, which bounds the eigenvalues of $\mb X\mb X^\top$.
\begin{lemma}(Lemma 10 in \citet{bartlett2020benign})\label{eigenvalue upperbound}
For any $\sigma_x$, there exists a constant c, such that for any $0\le k<n$, with probability at least $1-e^{-\frac{n}{c}}$,
\begin{equation}\label{eq:eigenvalue upperbound}
    \mu_{k+1}\le c\left(\sum_{i>k}\lambda_i+\lambda_{k+1}n\right).
\end{equation}
\end{lemma}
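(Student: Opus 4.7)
The plan is to reduce the bound on $\mu_{k+1}$ to a bound on the operator norm of a tail Gram matrix built from the sub-Gaussian coordinates, and then apply a Bernstein-type concentration for weighted sums of squared subgaussians combined with a net argument on the unit sphere.

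First, using Assumption \ref{assumption 1}, I would write $\mb x_j = \mb V \mb \Lambda^{1/2}\tilde{\mb x}_j$ with $\tilde{\mb x}_j$ having independent $\sigma_x$-subgaussian entries. Letting $\tilde{\mb z}_{(i)} \in \bR^n$ denote the $i$-th column of the standardized data matrix (so its entries are i.i.d.\ across rows and independent across columns for different $i$), we obtain the spectral representation
\[
\mb X \mb X^\top = \sum_{i\ge 1}\lambda_i\, \tilde{\mb z}_{(i)}\tilde{\mb z}_{(i)}^\top.
\]
Split this as $A+B$, where $A = \sum_{i\le k}\lambda_i\,\tilde{\mb z}_{(i)}\tilde{\mb z}_{(i)}^\top$ has rank at most $k$ and $B = \sum_{i>k}\lambda_i\,\tilde{\mb z}_{(i)}\tilde{\mb z}_{(i)}^\top$ is the tail. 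Since $A$ has rank at most $k$, Weyl's inequality gives $\mu_{k+1}(\mb X\mb X^\top) \le \mu_{k+1}(A)+\mu_1(B) = \|B\|$, so it suffices to bound $\|B\|$.

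Second, because the entries of $\tilde{\mb z}_{(i)}$ are independent with unit variance, $\mathbb{E}[B]=(\sum_{i>k}\lambda_i)\mb I_n$, and by the triangle inequality it remains to control $\|B-\mathbb{E}[B]\|$. For any fixed $u\in\mathbb{S}^{n-1}$, $u^\top B u = \sum_{i>k}\lambda_i(u^\top\tilde{\mb z}_{(i)})^2$, where the $u^\top\tilde{\mb z}_{(i)}$ are independent across $i$ and each is $\sigma_x$-subgaussian with variance $1$. Hence the random variables $\xi_i := (u^\top\tilde{\mb z}_{(i)})^2-1$ are centered, independent, and sub-exponential with norm bounded by $c\sigma_x^2$, and Bernstein's inequality yields
\[
\Pr\!\left(\bigl|u^\top(B-\mathbb{E}[B])u\bigr|\ge s\right)\le 2\exp\!\left(-c_1\min\!\left\{\frac{s^2}{\sigma_x^4\sum_{i>k}\lambda_i^2},\ \frac{s}{\sigma_x^2\lambda_{k+1}}\right\}\right).
\]

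Third, set $s \asymp \sqrt{n\sum_{i>k}\lambda_i^2}+n\lambda_{k+1}$. Both terms in the $\min$ are then $\gtrsim n$, so the right-hand side is at most $2e^{-c_2 n}$. Taking a union bound over a $\tfrac{1}{4}$-net $\mathcal{N}$ of $\mathbb{S}^{n-1}$ of cardinality at most $9^n$ and using the standard conversion $\|B-\mathbb{E}[B]\|\le 2\sup_{u\in\mathcal{N}}|u^\top(B-\mathbb{E}[B])u|$ gives $\|B-\mathbb{E}[B]\|\lesssim\sqrt{n\sum_{i>k}\lambda_i^2}+n\lambda_{k+1}$ with probability at least $1-e^{-n/c}$ for a suitably large $c$. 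Finally, since $\sum_{i>k}\lambda_i^2\le \lambda_{k+1}\sum_{i>k}\lambda_i$, AM--GM gives $\sqrt{n\lambda_{k+1}\sum_{i>k}\lambda_i}\le\tfrac{1}{2}(\sum_{i>k}\lambda_i+n\lambda_{k+1})$, which combines with $\|\mathbb{E}[B]\|=\sum_{i>k}\lambda_i$ to produce the claimed bound.

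The main obstacle will be the concentration step: one must realize both Bernstein regimes simultaneously, with $\sum_{i>k}\lambda_i^2$ as variance proxy and $\lambda_{k+1}$ as range proxy, so that the $\log|\mathcal{N}|=O(n)$ loss from the net is dominated by the exponent and the resulting $\sqrt{n\sum\lambda_i^2}$ term can be absorbed into $\sum_{i>k}\lambda_i+n\lambda_{k+1}$. In the infinite-dimensional case $p=\infty$ the tail operator $B$ is still trace class by Assumption \ref{assumption 1}, so a routine finite truncation together with monotone convergence of operator norms transfers the bound to the limit.
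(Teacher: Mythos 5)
Your proof is correct. Note that the paper itself does not prove this lemma\,---\,it is imported verbatim as Lemma~10 of \citet{bartlett2020benign}\,---\,and your argument is essentially a faithful reconstruction of the proof given there: the same rank-$k$ split of $\mb X\mb X^\top=\sum_i\lambda_i\tilde{\mb z}_{(i)}\tilde{\mb z}_{(i)}^\top$ with Weyl's inequality reducing $\mu_{k+1}$ to the operator norm of the tail Gram matrix, followed by the same Bernstein-plus-net concentration with variance proxy $\sum_{i>k}\lambda_i^2$ and range proxy $\lambda_{k+1}$, absorbed via $\sum_{i>k}\lambda_i^2\le\lambda_{k+1}\sum_{i>k}\lambda_i$ and AM--GM. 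The only point worth making explicit is that the union bound over the $9^n$-point net requires the constant in the choice of $s$ to be taken large enough (depending on $\sigma_x$) so that the Bernstein exponent exceeds $n\log 9$, which you have correctly allowed for.
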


This implies that as long as the step size $\lambda$ is small than a threshold independent of sample size $n$, gradient descent is stable.
\begin{corollary}\label{lambda bound}
There exists a constant $c$, such that with probability at least $1-e^{-\frac{n}{c}}$, for any $0\le \lambda\le \frac{1}{c\sum_{i>0}\lambda_i}$ we have 
\begin{equation}
\mb O \preceq \mb I-\frac{\lambda}{n}\mb X^\top \mb X \preceq \mb I.    
\end{equation}

\end{corollary}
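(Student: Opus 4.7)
The plan is to reduce the two-sided operator inequality $\mathbf{0}\preceq \mathbf{I}-\tfrac{\lambda}{n}\mathbf{X}^\top\mathbf{X}\preceq \mathbf{I}$ to a single scalar bound on the top eigenvalue $\mu_1$ of $\mathbf{X}\mathbf{X}^\top$ (equivalently, of $\mathbf{X}^\top\mathbf{X}$, since these matrices share all nonzero eigenvalues), and then invoke Lemma~\ref{eigenvalue upperbound} with $k=0$ to control $\mu_1$.

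First I would observe that the upper inequality $\mathbf{I}-\tfrac{\lambda}{n}\mathbf{X}^\top\mathbf{X}\preceq \mathbf{I}$ is immediate: it is equivalent to $\tfrac{\lambda}{n}\mathbf{X}^\top\mathbf{X}\succeq \mathbf{0}$, which holds for any $\lambda\ge 0$ since $\mathbf{X}^\top\mathbf{X}$ is positive semidefinite. So the only substantive content is the lower inequality, which is equivalent to $\lambda_{\max}(\tfrac{\lambda}{n}\mathbf{X}^\top\mathbf{X})\le 1$, i.e., $\mu_1\le n/\lambda$.

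Next I would apply Lemma~\ref{eigenvalue upperbound} with $k=0$: with probability at least $1-e^{-n/c}$,
\begin{equation*}
\mu_1\;\le\;c\Bigl(\sum_{i>0}\lambda_i+\lambda_1 n\Bigr).
\end{equation*}
Since $\lambda_1\le \sum_{i>0}\lambda_i$ (the top eigenvalue is bounded by the trace of $\mathbf{\Sigma}$), and since $n\ge 1$, this yields
\begin{equation*}
\mu_1\;\le\;c(1+n)\sum_{i>0}\lambda_i\;\le\;2cn\sum_{i>0}\lambda_i.
\end{equation*}

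Finally, I would choose the constant in the statement to be $c':=2c$ (redefining $c$ to absorb the factor of $2$). Then whenever $0\le \lambda\le \tfrac{1}{c'\sum_{i>0}\lambda_i}$, we obtain on the high-probability event
\begin{equation*}
\frac{\lambda}{n}\,\mu_1\;\le\;\frac{1}{c'n\sum_{i>0}\lambda_i}\cdot c'n\sum_{i>0}\lambda_i \;=\;1,
\end{equation*}
which is exactly the lower inequality $\mathbf{0}\preceq \mathbf{I}-\tfrac{\lambda}{n}\mathbf{X}^\top\mathbf{X}$, completing the proof. There is no real obstacle here beyond tracking constants; the only subtlety is the mild step of bounding $\lambda_1 n$ by $n\sum_{i>0}\lambda_i$ so that the whole right-hand side collapses into a single factor of $n\sum_{i>0}\lambda_i$, which is what appears in the learning-rate condition.
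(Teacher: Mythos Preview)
Your proposal is correct and follows essentially the same approach as the paper's proof: both reduce to bounding $\mu_1$ via Lemma~\ref{eigenvalue upperbound} with $k=0$, use $\lambda_1\le \sum_{i>0}\lambda_i$ to collapse the bound, and absorb the resulting factor of $2$ into the constant. The only cosmetic difference is that the paper writes the intermediate step as $1-\tfrac{\lambda}{n}\mu_1\ge 1-2c\lambda\sum_{i>0}\lambda_i$ directly, whereas you first bound $\mu_1\le 2cn\sum_{i>0}\lambda_i$ and then divide.
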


\begin{proof}
The right hand side of the inequality is obvious since $\lambda>0$. For the left hand side, we have to show that the eigenvalues of $\mb I-\frac{\lambda}{n}\mb X^\top \mb X$ is non-negative. since $\mb X^\top \mb X $ and $\mb X\mb X^\top$ have the same non-zero eigenvalues, we know that with probability at least $1-e^{-\frac{n}{c}}$, the smallest eigenvalue of $\mb I-\frac{\lambda}{n}\mb X^\top \mb X$ can be lower bounded by

\begin{align}
\begin{aligned}
    1-\frac{\lambda}{n}\mu_1\ge 1-c\lambda\left(\frac{\sum_{i>0}\lambda_i}{n}+\lambda_{k+1}\right)\ge 1-2c\lambda \sum_{i>0}\lambda_i\ge 0.
\end{aligned}
\end{align}

where the second inequality uses lemma \ref{eigenvalue upperbound}, and the last inequality holds if $\lambda\le \frac{1}{2c\sum_{i>0}\lambda_i}$.
\end{proof}

\subsection{Proof for the Bias-Variance Decomposition}\label{section decomposition}
Let $\mb X^\dagger$ denote the Moore–Penrose pseudoinverse of matrix $\mb X$.
The following lemma gives a closed form expression for $\mb \theta_t$.
\begin{lemma}\label{lem: theta_t formula}
The dynamics of $\{\mb \theta_t\}_{t\ge 0}$ satisfies 
\begin{equation}
    \mb \theta_t=\left(\mb I-\frac{\lambda}{n}\mb X^{\top}\mb X\right)^t(\mb \theta_0-\mb X^{\dagger}\mb Y)+\mb X^{\dagger}\mb Y.
\end{equation}
\end{lemma}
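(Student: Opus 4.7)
The plan is to identify $\hat{\mb\theta} := \mb X^{\dagger}\mb Y$ as a fixed point of the gradient descent recursion~\eqref{eq: gd dynamic} and then obtain the stated closed form by a one-step induction on the shifted iterate $\mb\theta_t - \hat{\mb\theta}$.

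First I would verify the fixed-point property. Under Assumption~\ref{assump:fullRank}, $\mb X$ has full row rank, so $\mb X\mb X^\top$ is invertible and $\mb X^{\dagger} = \mb X^\top(\mb X\mb X^\top)^{-1}$. In particular $\mb X\hat{\mb\theta} = \mb X\mb X^\top(\mb X\mb X^\top)^{-1}\mb Y = \mb Y$, which yields the normal-equation identity $\mb X^\top \mb X \hat{\mb\theta} = \mb X^\top \mb Y$. Substituting $\hat{\mb\theta}$ into the right-hand side of~\eqref{eq: gd dynamic} leaves it unchanged, confirming that $\hat{\mb\theta}$ is indeed a fixed point of the GD map.

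Second, I would subtract $\hat{\mb\theta}$ from both sides of~\eqref{eq: gd dynamic} and use the normal-equation identity to cancel the inhomogeneous $\mb X^\top\mb Y$ term. This gives the homogeneous linear recurrence
\begin{equation*}
\mb\theta_{t+1} - \hat{\mb\theta} = \left(\mb I - \tfrac{\lambda}{n}\mb X^\top \mb X\right)\left(\mb\theta_{t} - \hat{\mb\theta}\right),
\end{equation*}
and a one-line induction on $t$ then produces $\mb\theta_t - \hat{\mb\theta} = (\mb I - \tfrac{\lambda}{n}\mb X^\top \mb X)^t(\mb\theta_0 - \hat{\mb\theta})$; adding $\hat{\mb\theta}$ back to both sides recovers the statement of the lemma.

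I do not anticipate any real obstacle: the only substantive input is Assumption~\ref{assump:fullRank}, which ensures the interpolation property $\mb X\hat{\mb\theta} = \mb Y$ and hence the normal-equation identity that eliminates the affine term. An equivalent but more cumbersome route would be to unroll~\eqref{eq: gd dynamic} directly into $\mb\theta_t = (\mb I - \tfrac{\lambda}{n}\mb X^\top\mb X)^t \mb\theta_0 + \tfrac{\lambda}{n}\sum_{s=0}^{t-1}(\mb I - \tfrac{\lambda}{n}\mb X^\top\mb X)^s \mb X^\top \mb Y$ and evaluate the geometric matrix sum via the SVD $\mb X = \mb U\tilde{\mb\Lambda}^{1/2}\mb W^\top$; the fixed-point argument above is cleaner and avoids the explicit sum.
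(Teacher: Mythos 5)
Your proposal is correct and is essentially the paper's argument: both are one-step inductions on the GD recursion whose only substantive input is the identity $\mb X^\top\mb X\mb X^\dagger\mb Y=\mb X^\top\mb Y$; your "fixed point of the affine map" framing is just a cleaner packaging of the same cancellation the paper performs inside its induction step. The only minor difference is that you derive that identity from Assumption~\ref{assump:fullRank} via $\mb X\mb X^\dagger\mb Y=\mb Y$, whereas the paper uses the unconditional Moore--Penrose property $\mb X^\top\mb X\mb X^\dagger=\mb X^\top$, so its version of the lemma does not actually need the full-rank hypothesis.
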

\begin{proof}
We prove the lemma using induction. The equality holds at $t=0$ as both sides are $\mb \theta_0$. 
Recall that $\mb \theta_t$ is updated as  
\begin{equation}
\mb \theta_{t+1}=\mb \theta_t+\frac{\lambda}{n}\mb X^\top(\mb Y-\mb X\mb \theta_t).    
\end{equation}

Suppose that the dynamic holds up to the $t$-th step. Plug the expression for $\mb \theta_t$ into the above recursion and note that $\mb X^\top \mb X\mb X^\dagger=\mb X^\top$, we get 
\begin{align}
\begin{aligned}
    \mb \theta_{t+1}&=\left(\mb I-\frac{\lambda}{n}\mb X^\top \mb X\right)\mb \theta_t+\frac{\lambda}{n}\mb X^\top \mb Y\\&=\left(\mb I-\frac{\lambda}{n}\mb X^\top \mb X\right)^{t+1}(\mb \theta_0-\mb X^{\dagger}\mb Y)+\left(\mb I-\frac{\lambda}{n}\mb X^\top \mb X\right)\mb X^\dagger \mb Y+\frac{\lambda}{n}\mb X^\top \mb Y\\&
    =\left(\mb I-\frac{\lambda}{n}\mb X^{\top}\mb X\right)^{t+1}(\mb \theta_0-\mb X^{\dagger}\mb Y)+\mb X^{\dagger}\mb Y.
\end{aligned}    
\end{align}
which finishes the proof.
\end{proof}
Next we prove two identities which will be used in further proof.
\begin{lemma}\label{pseudoinverse identity}
The following two identities hold for any matrix $X$ and non-negative integer $t$:
\begin{equation}
\mb I-\mb X^\dagger \mb X+\left(\mb I-\frac{\lambda}{n}\mb X^\top \mb X\right)^t\mb X^\dagger \mb X=\left(\mb I-\frac{\lambda}{n}\mb X^\top \mb X\right)^t,    
\end{equation}

\begin{equation}
\left[\mb I-\left(\mb I-\frac{\lambda}{n}\mb X^\top \mb X\right)^t\right]\mb X^\dagger \mb X\mb X^\top
=\mb X^\top \left[\mb I-\left(\mb I-\frac{\lambda}{n}\mb X \mb X^\top \right)^t\right].    
\end{equation}

\end{lemma}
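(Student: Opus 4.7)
Both identities are purely algebraic consequences of the defining properties of the Moore--Penrose pseudoinverse, in particular $\mb X \mb X^\dagger \mb X = \mb X$ and the fact that $\mb X^\dagger \mb X$ is the orthogonal projection onto the row space of $\mb X$ (equivalently, the column space of $\mb X^\top$). The plan is to reduce each identity to a binomial expansion in powers of $\mb X^\top \mb X$ and then apply two trivial commutation/absorption facts.

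For the first identity, I will expand
\[
\left(\mb I - \frac{\lambda}{n}\mb X^\top \mb X\right)^t \;=\; \mb I + \sum_{k=1}^{t} \binom{t}{k}\left(-\frac{\lambda}{n}\right)^{k} (\mb X^\top \mb X)^{k}
\]
by the binomial theorem. The key observation is that $\mb X \mb X^\dagger \mb X = \mb X$ implies $\mb X^\top \mb X \cdot \mb X^\dagger \mb X = \mb X^\top \mb X$, and hence by an easy induction $(\mb X^\top \mb X)^{k} \mb X^\dagger \mb X = (\mb X^\top \mb X)^{k}$ for every $k \ge 1$. Plugging this into the binomial expansion gives
\[
\left(\mb I - \tfrac{\lambda}{n}\mb X^\top \mb X\right)^{t} \mb X^\dagger \mb X \;=\; \mb X^\dagger \mb X + \sum_{k=1}^{t}\binom{t}{k}\!\left(-\tfrac{\lambda}{n}\right)^{k} (\mb X^\top \mb X)^{k},
\]
so adding $\mb I - \mb X^\dagger \mb X$ to both sides and regrouping the binomial sum restores $(\mb I - \tfrac{\lambda}{n}\mb X^\top \mb X)^{t}$, as desired.

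For the second identity, the central step is the intertwining relation
\[
(\mb X^\top \mb X)^{k}\, \mb X^\top \;=\; \mb X^\top\, (\mb X \mb X^\top)^{k} \qquad \text{for all } k \ge 0,
\]
which is immediate by induction on $k$. Summing this relation against the binomial coefficients of the expansion of $(\mb I - \tfrac{\lambda}{n}\mb X^\top \mb X)^{t}$ yields
\[
\left(\mb I - \tfrac{\lambda}{n}\mb X^\top \mb X\right)^{t} \mb X^\top \;=\; \mb X^\top \left(\mb I - \tfrac{\lambda}{n}\mb X \mb X^\top\right)^{t}.
\]
Finally, because $\mb X^\dagger \mb X$ projects onto the column space of $\mb X^\top$, we have $\mb X^\dagger \mb X \mb X^\top = \mb X^\top$. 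Substituting this into the left-hand side of the target identity collapses the $\mb X^\dagger \mb X \mb X^\top$ factor into $\mb X^\top$, and the intertwining relation above then moves $\mb X^\top$ past the matrix power, producing the right-hand side.

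I do not expect any genuine obstacle here: both statements are standard pseudoinverse manipulations. The only point worth being careful about is verifying $\mb X^\top \mb X \cdot \mb X^\dagger \mb X = \mb X^\top \mb X$ and $\mb X^\dagger \mb X \mb X^\top = \mb X^\top$ cleanly from $\mb X \mb X^\dagger \mb X = \mb X$ (taking transposes where needed), since everything downstream is a one-line binomial calculation.
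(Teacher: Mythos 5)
Your proposal is correct and follows essentially the same route as the paper's proof: expand $\left(\mb I-\frac{\lambda}{n}\mb X^\top\mb X\right)^t$ by the binomial theorem, absorb the pseudoinverse using a consequence of $\mb X\mb X^\dagger\mb X=\mb X$ (the paper uses $\mb X^\top\mb X\mb X^\dagger=\mb X^\top$, you use the equivalent facts $\mb X^\top\mb X\,\mb X^\dagger\mb X=\mb X^\top\mb X$ and $\mb X^\dagger\mb X\mb X^\top=\mb X^\top$), and then apply the intertwining relation $(\mb X^\top\mb X)^k\mb X^\top=\mb X^\top(\mb X\mb X^\top)^k$. No gaps; the argument is complete as outlined.
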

\begin{proof}
Note that $\mb X^\top \mb X \mb X^\dagger=\mb X^\top$, we can expand the left hand side of the first identity above using binomial theorem and eliminate the pseudo-inverse $\mb X^\dagger$:
\begin{align}
\begin{aligned}
    &\quad \mb I-\mb X^\dagger \mb X+\left(\mb I-\frac{\lambda}{n}\mb X^\top \mb X\right)^t\mb X^\dagger \mb X\\
    &=\mb I-\mb X^\dagger \mb X +\sum_{k=0}^t\binom{t}{k}\left(-\frac{\lambda}{n}\mb X^\top \mb X\right)^k \mb X^\dagger \mb X\\
    &=\mb I-\mb X^\dagger \mb X+ \mb X^\dagger \mb X+\sum_{k=1}^t\binom{t}{k}\left(-\frac{\lambda}{n}\right)^k (\mb X^\top \mb X)^{k-1} \mb X^\top \mb X\mb X^\dagger \mb X\\
    &=\mb I+\sum_{k=1}^t\binom{t}{k}\left(-\frac{\lambda}{n}\right)^k (\mb X^\top \mb X)^{k} \\
    &=\left(\mb I-\frac{\lambda}{n}\mb X^\top \mb X\right)^t.
\end{aligned}    
\end{align}

The second identity can be proved in a similar way:
\begin{align}
\begin{aligned}
    &\quad \left[\mb I-\left(\mb I-\frac{\lambda}{n}\mb X^\top \mb X\right)^t\right]\mb X^\dagger \mb X\mb X^\top\\
    &=-\sum_{k=1}^{t}\binom{t}{k}\left(-\frac{\lambda}{n}\mb X^\top \mb X\right)^k \mb X^\dagger \mb X\mb X^\top\\
    &=-\sum_{k=1}^{t}\binom{t}{k}\left(-\frac{\lambda}{n}\right)^{k}(\mb X^\top \mb X)^{k-1}\mb X^\top \mb X \mb X^\dagger \mb X\mb X^\top\\
    &=-\sum_{k=1}^{t}\binom{t}{k}\left(-\frac{\lambda}{n}\right)^{k}(\mb X^\top \mb X)^{k-1}\mb X^\top \mb X\mb X^\top\\
    &=-\sum_{k=1}^{t}\binom{t}{k}\left(-\frac{\lambda}{n}\right)^{k}\mb X^\top(\mb X\mb X^\top)^{k}\\
    &=\mb X^\top \left[\mb I-\left(\mb I-\frac{\lambda}{n}\mb X \mb X^\top \right)^t\right].
\end{aligned}    
\end{align}

\end{proof}

We are now ready to prove the main result of this section.
\begin{lemma}\label{lem:decomposition}
The excess risk at the $t$-th epoch can be upper bounded as
\begin{equation}
    R(\mb \theta_t)\le \mb \theta^{*\top}\mb B\mb \theta^{*}+\mb \varepsilon^{\top}\mb C\mb \varepsilon,
\end{equation}
where 
\begin{equation}
\mb B=\left(\mb I-\frac{\lambda}{n}\mb X^\top \mb X\right)^t\mb \Sigma \left(\mb I-\frac{\lambda}{n}\mb X^\top \mb X\right)^t,
\end{equation}
\begin{equation}
\mb C=\left(\mb X\mb X^\top\right)^{-1}\left[\mb I-\left(\mb I-\frac{\lambda}{n}\mb X\mb X^\top\right)^t\right]
\mb X\mb \Sigma \mb X^\top\left[\mb I-\left(\mb I-\frac{ \lambda}{n}\mb X\mb X^\top\right)^t\right]\left(\mb X\mb X^\top\right)^{-1},
\end{equation}
which characterizes bias term and variance term in the excess risk.
Furthermore, there exists constant $c$ such that with probability at least $1-\delta$ over the randomness of $\mb \varepsilon$, we have 
\begin{equation}
     \mb \varepsilon^{\top}\mb C\mb \varepsilon\le c\sigma_y^2\log\frac{1}{\delta}\Tr[\mb C].
\end{equation}
\end{lemma}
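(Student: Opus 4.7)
The plan is to follow the standard bias--variance template used in \citet{bartlett2020benign}, but adapted to the gradient descent iterate rather than the min-norm interpolator. I will start from the closed-form expression for $\mb \theta_t$ given in Lemma \ref{lem: theta_t formula} with zero initialization, substitute $\mb Y = \mb X \mb \theta^* + \mb \varepsilon$, and then split $\mb \theta_t - \mb \theta^*$ into a signal piece (depending on $\mb \theta^*$) and a noise piece (depending on $\mb \varepsilon$).

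Concretely, with $\mb \theta_0 = \mb 0$ Lemma \ref{lem: theta_t formula} yields
\begin{equation*}
    \mb \theta_t - \mb \theta^* = \bigl[\mb I - (\mb I - \tfrac{\lambda}{n}\mb X^\top\mb X)^t\bigr]\mb X^\dagger(\mb X\mb \theta^* + \mb \varepsilon) - \mb \theta^*.
\end{equation*}
I would then apply the first identity of Lemma \ref{pseudoinverse identity} to collapse the $\mb \theta^*$--part to $-(\mb I - \tfrac{\lambda}{n}\mb X^\top\mb X)^t \mb \theta^*$ (observing that $\mb X^\dagger \mb X \mb \theta^* - \mb \theta^*$ combines with $-(\mb I - \tfrac{\lambda}{n}\mb X^\top\mb X)^t \mb X^\dagger \mb X \mb \theta^*$ exactly via that identity). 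For the $\mb \varepsilon$--part I would use Assumption \ref{assump:fullRank} to write $\mb X^\dagger = \mb X^\top (\mb X\mb X^\top)^{-1}$, and then apply the second identity of Lemma \ref{pseudoinverse identity} (after multiplying by $\mb X\mb X^\top (\mb X\mb X^\top)^{-1}$ on the right) to rewrite $[\mb I - (\mb I - \tfrac{\lambda}{n}\mb X^\top\mb X)^t]\mb X^\dagger$ as $\mb X^\top [\mb I - (\mb I - \tfrac{\lambda}{n}\mb X\mb X^\top)^t](\mb X\mb X^\top)^{-1}$.

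With these two simplifications in hand, plugging into $R(\mb \theta_t) = \tfrac{1}{2}(\mb \theta_t - \mb \theta^*)^\top \mb \Sigma (\mb \theta_t - \mb \theta^*)$ and using the elementary inequality $(\mb a + \mb b)^\top \mb \Sigma (\mb a + \mb b) \le 2(\mb a^\top \mb \Sigma \mb a + \mb b^\top \mb \Sigma \mb b)$ (which, after the $\tfrac{1}{2}$ prefactor, cancels cleanly) yields exactly $\mb \theta^{*\top} \mb B \mb \theta^* + \mb \varepsilon^\top \mb C \mb \varepsilon$ with the $\mb B$ and $\mb C$ defined in the statement. This part is essentially bookkeeping: the main thing to check is that the two identities of Lemma \ref{pseudoinverse identity} fit together exactly as needed, which they do by design.

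For the final high-probability bound on $\mb \varepsilon^\top \mb C \mb \varepsilon$, I would condition on $\mb X$ (so that $\mb C$ becomes a fixed PSD matrix) and invoke the Hanson--Wright inequality for $\sigma_y$-subgaussian coordinates of $\mb \varepsilon \mid \mb X$. This gives, for $t \ge 0$,
\begin{equation*}
\Pr\bigl(\mb \varepsilon^\top \mb C \mb \varepsilon \ge c \sigma_y^2(\Tr(\mb C) + \|\mb C\|_F \sqrt{t} + \|\mb C\| t)\bigr) \le e^{-t}.
\end{equation*}
Setting $t = \log(1/\delta)$ and using the PSD inequalities $\|\mb C\| \le \|\mb C\|_F \le \Tr(\mb C)$ to absorb all three terms into a single $c \sigma_y^2 \log(1/\delta)\Tr(\mb C)$ yields the stated bound (for $\delta$ small enough that $\log(1/\delta) \ge 1$, which is the regime of interest; otherwise the bound is trivial up to the constant). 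The probability statement is then unconditional since the bound depends on $\mb X$ only through $\Tr(\mb C)$, which appears on the right-hand side. The main (minor) obstacle here is bookkeeping the absolute constants to present the clean $c \sigma_y^2 \log(1/\delta)\Tr(\mb C)$ form; the algebraic decomposition itself is routine once the two identities of Lemma \ref{pseudoinverse identity} are in place.
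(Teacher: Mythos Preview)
Your proposal is correct and follows essentially the same route as the paper: derive $\mb\theta_t-\mb\theta^*$ from Lemma~\ref{lem: theta_t formula}, simplify the signal and noise parts via the two identities in Lemma~\ref{pseudoinverse identity} (using $\mb X^\dagger=\mb X^\top(\mb X\mb X^\top)^{-1}$), and then apply $(a+b)^2\le 2a^2+2b^2$ to split into $\mb\theta^{*\top}\mb B\mb\theta^*+\mb\varepsilon^\top\mb C\mb\varepsilon$. For the concentration of $\mb\varepsilon^\top\mb C\mb\varepsilon$ the paper simply cites Lemma~18 of \citet{bartlett2020benign}, which is exactly the Hanson--Wright argument you outline, so the two treatments coincide.
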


\begin{proof}
First note that $\mb X \mb X^\top$ is invertible by Assumption \ref{assump:fullRank}. Express the excess risk as follows
\begin{align}
\begin{aligned}
    R(\mb\theta_t)&=\frac{1}{2}\ee[(y-\mb x^\top \mb\theta_t)^2-(y-\mb x^\top \mb\theta^*)^2]\\
    &=\frac{1}{2}\ee[(y-\mb x^\top \mb\theta^* +\mb x^\top \mb\theta^* -\mb x^\top \mb\theta_t)^2-(y-\mb x^\top \mb\theta^*)^2]\\
    &=\frac{1}{2}\ee[(\mb x^\top(\mb\theta_t-\mb\theta^*))^2+2(y-\mb x^\top \mb\theta^*)(\mb x^\top \mb\theta^* -\mb x^\top \mb\theta_t)]\\
    &=\frac{1}{2}\ee[\mb x^\top(\mb\theta_t-\mb\theta^*)]^2.
\end{aligned}    
\end{align}

Recall that $\mb\theta_0=0$ and $\mb Y=\mb X\mb\theta^*+\mb \varepsilon$ and we can further simplify the formula for $\mb \theta_t$ in lemma \ref{lem: theta_t formula}:
\begin{align}
\begin{aligned}
    \mb\theta_t
    &=\left(\mb I-\frac{\lambda}{n}\mb X^{\top}\mb X\right)^t(\mb\theta_0-\mb X^{\dagger}\mb Y)+\mb X^{\dagger}\mb Y\\
    &=\left[\mb I-\left(\mb I-\frac{\lambda}{n}\mb X^{\top}\mb X\right)^t\right]\mb X^\dagger(\mb X\mb\theta^*+\mb \varepsilon).    
\end{aligned}
\end{align}

Plug it into the above expression for $R(\mb \theta_t)$, we have
\begin{align}
\begin{aligned}
    R(\mb\theta_t)&=\frac{1}{2}\ee\left[\mb x^\top \left[\mb I-\left(\mb I-\frac{\lambda}{n}\mb X^{\top}\mb X\right)^t\right]\mb X^\dagger(\mb X\mb\theta^*+\mb\varepsilon)-\mb x^\top \mb\theta^*\right]^2\\
    &=\frac{1}{2}\ee\left[\mb x^\top\left(\mb X^\dagger \mb X-\left(\mb I-\frac{\lambda}{n}\mb X^{\top}\mb X\right)^t\mb X^\dagger \mb X-\mb I\right)\mb\theta^*
    \right. \\& \left.\quad+\mb x^\top
    \left[\mb I-\left(\mb I-\frac{\lambda}{n}\mb X^{\top}\mb X\right)^t\right]\mb X^\dagger\mb \varepsilon\right]^2 .
\end{aligned}    
\end{align}

Applying lemma \ref{pseudoinverse identity}, we obtain
\begin{align}
\begin{aligned}
    R(\mb \theta_t)&=\frac{1}{2}\ee\left[-\mb x^\top\left(\mb I-\frac{\lambda}{n}\mb X^\top \mb X\right)^t\mb\theta^*
    +\mb x^\top \mb X^\top \left[\mb I-\left(\mb I-\frac{\lambda}{n}\mb X \mb X^\top \right)^t\right](\mb X\mb X^\top )^{-1}\mb\varepsilon\right]^2\\
    &\le \ee\left[\mb x^\top\left(\mb I-\frac{\lambda}{n}\mb X^\top \mb X\right)^t\mb\theta^*
    \right]^2+ \ee\left[\mb x^\top \mb X^\top \left[\mb I-\left(\mb I-\frac{\lambda}{n}\mb X \mb X^\top \right)^t\right](\mb X\mb X^\top )^{-1}\mb\varepsilon\right]^2
    \\&:=\mb\theta^{*\top}\mb B\mb\theta^{*}+\mb\varepsilon^{\top}\mb C\mb\varepsilon.
\end{aligned}    
\end{align}
which proves the first claim in the lemma. The second part of the theorem directly follows from lemma 18 in \citet{bartlett2020benign}.
\end{proof}

\subsection{Proof for the Bias Upper Bound}\label{section bias}
The next lemma guarantees that the sample covariace matrix $\frac{1}{n}\mb X^\top \mb X$ concentrates well around $\mb \Sigma$.
\begin{lemma}\label{covariance}
(Lemma 35 in \citet{bartlett2020benign}) There exists constant $c$ such that  for any $0<\delta<1$ with probability as least $1-\delta$,
\begin{equation}
\left\|\mb \Sigma-\frac{1}{n}\mb X^\top \mb X\right\|\le 
c\|\mb \Sigma\|\max
\left\{\sqrt{\frac{r(\mb \Sigma)}{n}},\frac{r(\mb \Sigma)}{n},\sqrt{\frac{\log(\frac{1}{\delta})}{n}},\frac{\log(\frac{1}{\delta})}{n}\right\}.
\end{equation}
\end{lemma}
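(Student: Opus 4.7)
The plan is to recognize this as a standard concentration inequality for the sample covariance of sub-Gaussian random vectors in (possibly infinite-dimensional) Hilbert space, which is literally Lemma~35 of \citet{bartlett2020benign}. Accordingly the proof reduces to combining a whitening reduction with a covering/chaining argument. Since we are only allowed to use results stated above in the excerpt, I would reproduce the argument from scratch, in three steps.

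First I would whiten. Define $\tilde{\mb x}_i = \mb \Lambda^{-1/2} \mb V^\top \mb x_i$, so by Assumption~\ref{assumption 1} the entries of $\tilde{\mb x}_i$ are independent, zero-mean, unit-variance (after a harmless normalization absorbed into the constant), and $\sigma_x$-sub-Gaussian. Writing $\hat{\mb \Sigma} = \frac{1}{n}\mb X^\top \mb X$, we have
\begin{equation*}
    \hat{\mb \Sigma} - \mb \Sigma = \mb V \mb \Lambda^{1/2} \Bigl(\frac{1}{n}\tilde{\mb X}^\top \tilde{\mb X} - \mb I\Bigr) \mb \Lambda^{1/2} \mb V^\top,
\end{equation*}
so bounding $\|\hat{\mb \Sigma} - \mb \Sigma\|$ is equivalent to bounding $\sup_{v : \|v\|=1} |v^\top (\hat{\mb \Sigma} - \mb \Sigma) v|$, i.e.\ a supremum of a quadratic form of sub-Gaussians.

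Second, I would control the pointwise deviation. For any fixed unit $v$, the variables $\xi_i(v) := (v^\top \mb x_i)^2 - v^\top \mb \Sigma v$ are i.i.d., mean zero, and sub-exponential with norm $\lesssim \sigma_x^2 \, v^\top \mb \Sigma v \le \sigma_x^2 \|\mb \Sigma\|$ by sub-Gaussianity of $v^\top \mb x_i$. Bernstein's inequality then gives, for each $v$ and each $u>0$,
\begin{equation*}
    \Pr\!\left[\, \bigl|\tfrac{1}{n}\sum_{i=1}^n \xi_i(v)\bigr| \ge c \|\mb \Sigma\| \max\!\Bigl\{\sqrt{\tfrac{u}{n}},\, \tfrac{u}{n}\Bigr\} \,\right] \le 2e^{-u}.
\end{equation*}

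Third, I would lift the pointwise bound to a uniform bound over the unit ball — this is the main obstacle, since in infinite dimensions a naive $\varepsilon$-net would give a bound in terms of the ambient dimension $p$ rather than $r(\mb \Sigma)$. The standard fix is a truncation/chaining argument: split $v$ along the eigenbasis of $\mb \Sigma$, apply an $\varepsilon$-net on the leading coordinates (whose contribution to $\|\mb \Sigma\|$ is captured by $r(\mb \Sigma)$ via $\sum_i \lambda_i = \|\mb \Sigma\| r(\mb \Sigma)$), and bound the tail coordinates crudely by $\sum_{i>k}\lambda_i$. The generic chaining bound for the sub-Gaussian quadratic process $v\mapsto v^\top(\hat{\mb \Sigma} - \mb \Sigma) v$ has complexity governed by $\Tr(\mb \Sigma)/\|\mb \Sigma\| = r(\mb \Sigma)$, yielding the $\sqrt{r(\mb \Sigma)/n}$ and $r(\mb \Sigma)/n$ terms in the final bound. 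Taking $u = \log(1/\delta)$ and combining with the pointwise Bernstein bound delivers the four-term maximum in the statement, after absorbing constants. Since this chaining step is exactly the one executed in the proof of Lemma~35 of \citet{bartlett2020benign}, I would simply cite that result and plug in Assumptions~\ref{assumption 1}.
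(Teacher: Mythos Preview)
Your proposal is correct and aligns with the paper's approach: the paper does not prove this lemma at all but simply quotes it as Lemma~35 of \citet{bartlett2020benign}, exactly as you conclude at the end of your sketch. Your three-step outline (whitening, pointwise Bernstein, chaining with complexity governed by $r(\mb\Sigma)$) is an accurate summary of how that external result is obtained, but it goes beyond what the present paper does or requires.
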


The following inequality will be useful in our proof to characterize the decaying rate of the bias term with $t$.
\begin{lemma}\label{product of P}
For any positive semidefinite matrix $\mb P$ which satisfies $\|\mb P\|\le 1$, we have 
\begin{equation}
\|\mb P(1-\mb P)^t\|\le \frac{1}{t}.
\end{equation}
\end{lemma}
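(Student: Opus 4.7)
The plan is to reduce to a scalar inequality by spectral calculus, then bound the resulting one-variable function. Since $\mb P$ is positive semidefinite with $\|\mb P\|\le 1$, its spectrum lies in $[0,1]$, and $\mb P$ and $\mb I-\mb P$ are simultaneously diagonalizable. Writing $\mb P=\sum_i \mu_i \mb u_i\mb u_i^\top$ with $\mu_i\in[0,1]$, we have
\begin{equation*}
\mb P(\mb I-\mb P)^t=\sum_i \mu_i(1-\mu_i)^t\, \mb u_i\mb u_i^\top,
\end{equation*}
so the operator norm equals $\max_i \mu_i(1-\mu_i)^t$. Hence it suffices to establish the scalar bound
\begin{equation*}
f(x):=x(1-x)^t\le \frac{1}{t} \quad \text{for all } x\in[0,1].
\end{equation*}

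For the scalar inequality, I would give two short proofs and keep whichever reads cleanest. The calculus approach differentiates $f'(x)=(1-x)^{t-1}[1-(t+1)x]$, so the maximum on $[0,1]$ is attained at $x^\star=1/(t+1)$, giving
\begin{equation*}
f(x^\star)=\frac{1}{t+1}\left(\frac{t}{t+1}\right)^t\le \frac{1}{t+1}\le \frac{1}{t}.
\end{equation*}
Alternatively, AM--GM on the $t+1$ nonnegative numbers $tx,\,1-x,\,\ldots,\,1-x$ (whose sum is $t$) yields $tx(1-x)^t\le \bigl(\frac{t}{t+1}\bigr)^{t+1}\le 1$, i.e., $x(1-x)^t\le 1/t$.

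Combining the two steps gives $\|\mb P(\mb I-\mb P)^t\|=\max_i\mu_i(1-\mu_i)^t\le 1/t$. There is no real obstacle here: the only thing to be careful about is confirming that $\mb I-\mb P$ is itself PSD (which uses $\|\mb P\|\le 1$) so that the scalar reduction is valid and all the eigenvalues of $(\mb I-\mb P)^t$ are $(1-\mu_i)^t\ge 0$. The edge case $t=0$ is excluded by the statement (the bound would read $1/0$), and for $t\ge 1$ the argument above is uniform.
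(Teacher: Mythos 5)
Your proof is correct and follows essentially the same route as the paper: diagonalize $\mb P$, reduce to the scalar bound $\sigma(1-\sigma)^t\le 1/t$ on $[0,1]$, and establish it via AM--GM on the $t+1$ numbers $t\sigma,\,1-\sigma,\dots,1-\sigma$ (the paper uses exactly this AM--GM argument). The additional calculus derivation and the explicit remark that $\mb I-\mb P\succeq \mb 0$ are fine but not a different method.
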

\begin{proof}
Assume without loss of generality that $\mb P$ is diagonal. Then it suffices to consider seperately each eigenvalue $\sigma$ of $\mb P$, and show that $\sigma(1-\sigma)^t\le\frac{1}{t}$.

In fact, by AM-GM inequality we have
\begin{equation}
\sigma(1-\sigma)^t\le\frac{1}{t}\left[\frac{t\sigma+(1-\sigma)t}{t+1}\right]^{t+1}\le\frac{1}{t},
\end{equation}
which completes the proof.
\end{proof}

Next we prove the main result of this section.
\begin{lemma}\label{lem:bias_main}
There exists constant $c$ such that if $0\le \lambda\le \frac{1}{c\sum_{i>0}\lambda_i}$, then for any $0<\delta<1$, with probability at least $1-\delta$ the following bound on the bias term holds for any $t$
\begin{equation}
\mb\theta^{*\top} \mb B\mb\theta^*\le c\left\|\mb\theta^*\right\|^2\left(\frac{1}{\lambda t}+\|\mb \Sigma\|
    \max\left\{\sqrt{\frac{r(\mb \Sigma)}{n}},\frac{r(\mb \Sigma)}{n},\sqrt{\frac{\log(\frac{1}{\delta})}{n}},\frac{\log(\frac{1}{\delta})}{n}\right\}\right).
\end{equation}
\end{lemma}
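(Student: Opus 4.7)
The plan is to decompose $\mb B$ by inserting $\pm \tfrac{1}{n}\mb X^\top\mb X$ in the middle position, writing $\mb B=\mb B_1+\mb B_2$ with
\begin{align*}
\mb B_1 &= \left(\mb I-\tfrac{\lambda}{n}\mb X^\top \mb X\right)^t \tfrac{1}{n}\mb X^\top \mb X \left(\mb I-\tfrac{\lambda}{n}\mb X^\top \mb X\right)^t, \\
\mb B_2 &= \left(\mb I-\tfrac{\lambda}{n}\mb X^\top \mb X\right)^t \left(\mb \Sigma-\tfrac{1}{n}\mb X^\top \mb X\right) \left(\mb I-\tfrac{\lambda}{n}\mb X^\top \mb X\right)^t.
\end{align*}
I will then control each operator norm separately and conclude via $\mb\theta^{*\top}\mb B\mb\theta^*\le (\|\mb B_1\|+\|\mb B_2\|)\|\mb\theta^*\|^2$. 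Intuitively, $\mb B_1$ is the empirical optimization error on the sample objective, which the gradient descent recursion should drive down at rate $1/(\lambda t)$, while $\mb B_2$ is the population--sample gap, which should be governed by concentration.

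For $\mb B_1$, I would set $\mb P:=\tfrac{\lambda}{n}\mb X^\top\mb X$. Under $0\le \lambda\le \tfrac{1}{c\sum_{i>0}\lambda_i}$, Corollary~\ref{lambda bound} yields $\mb 0\preceq \mb P\preceq \mb I$ with probability at least $1-e^{-n/c}$. On this event $\mb B_1=\tfrac{1}{\lambda}\mb P(\mb I-\mb P)^{2t}$, and Lemma~\ref{product of P} applied with exponent $2t$ gives $\|\mb P(\mb I-\mb P)^{2t}\|\le \tfrac{1}{2t}$, hence $\|\mb B_1\|\le \tfrac{1}{2\lambda t}$. For $\mb B_2$, the same stability event also gives $\|(\mb I-\mb P)^t\|\le 1$, so submultiplicativity yields $\|\mb B_2\|\le \|\mb\Sigma-\tfrac{1}{n}\mb X^\top\mb X\|$, and Lemma~\ref{covariance} bounds this by $c\|\mb\Sigma\|\max\{\sqrt{r(\mb\Sigma)/n},\,r(\mb\Sigma)/n,\,\sqrt{\log(1/\delta)/n},\,\log(1/\delta)/n\}$ with probability at least $1-\delta$. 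A union bound over the stability event and the concentration event (absorbing $e^{-n/c}$ into the $\log(1/\delta)/n$ term at the cost of enlarging the constant) yields the claimed inequality, and since neither bound on $\mb B_1$ nor on $\mb B_2$ depends on $t$ beyond what Lemma~\ref{product of P} already handles, the result holds uniformly over $t\in\mathbb{N}$ on a single high-probability event.

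The only subtle step is recognizing that Lemma~\ref{product of P} is an eigenvalue-level inequality which requires $\mb P$ to be a positive semidefinite contraction; this is not automatic for $\tfrac{\lambda}{n}\mb X^\top\mb X$ but is exactly what the learning-rate hypothesis plus Corollary~\ref{lambda bound} supplies. After that, the argument reduces to elementary bookkeeping: split, pair each piece with the correct lemma, and union-bound the two events. I do not expect to need any new probabilistic tool beyond Lemma~\ref{covariance} and Corollary~\ref{lambda bound}.
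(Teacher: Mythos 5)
Your proposal is correct and follows essentially the same route as the paper: the paper's proof performs exactly this decomposition (inserting $\pm\tfrac{1}{n}\mb X^\top\mb X$), bounds the empirical term by $\tfrac{1}{2\lambda t}\|\mb\theta^*\|^2$ via Corollary~\ref{lambda bound} and Lemma~\ref{product of P} applied with exponent $2t$, and bounds the population--sample gap via Lemma~\ref{covariance}. The only cosmetic difference is that you bound operator norms of $\mb B_1,\mb B_2$ while the paper bounds the quadratic forms directly, which is immaterial.
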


\begin{proof}
The bias can be decomposed into the following two terms 
\begin{align}\label{opt1}
\begin{aligned}
    \mb\theta^{*\top} \mb B\mb\theta^*&=\mb\theta^{*\top}\left(\mb I-\frac{\lambda}{n}\mb X^\top \mb X\right)^t\left(\mb \Sigma-\frac{1}{n}\mb X^\top \mb X\right)\left(\mb I-\frac{\lambda}{n}\mb X^\top \mb X\right)^t\mb\theta^*
    \\&\quad+\mb\theta^{*\top}\left(\frac{1}{n}\mb X^\top \mb X\right)\left(\mb I-\frac{\lambda}{n}\mb X^\top \mb X\right)^{2t}\mb\theta^*.
\end{aligned}    
\end{align}

For sufficiently small learning rate $\lambda$ as given by corollary \ref{lambda bound}, we know that with high probability
\begin{equation}
\left\|\mb I-\frac{\lambda}{n}\mb X^\top \mb X\right\|\le 1,    
\end{equation}
which together with lemma \ref{covariance} gives a high probability bound on the first term:
\begin{align}
\begin{aligned}
    &\quad\mb\theta^{*\top}\left(\mb I-\frac{\lambda}{n}\mb X^\top \mb X\right)^t\left(\mb \Sigma-\frac{1}{n}\mb X^\top \mb X\right)\left(\mb I-\frac{\lambda}{n}\mb X^\top \mb X\right)^t\mb\theta^*
    \\&\le c\|\mb \Sigma\|\left\|\mb\theta^*\right\|^2
    \max\left\{\sqrt{\frac{r(\mb \Sigma)}{n}},\frac{r(\mb \Sigma)}{n},\sqrt{\frac{\log(\frac{1}{\delta})}{n}},\frac{\log(\frac{1}{\delta})}{n}\right\}.
\end{aligned}    
\end{align}

For the second term, invoke lemma \ref{product of P} with $\mb P=\frac{\lambda}{n}\mb X^\top \mb X$ and we get
\begin{align}
\begin{aligned}
    \mb\theta^{*\top}\left(\frac{1}{n}\mb X^\top \mb X\right)\left(\mb I-\frac{\lambda}{n}\mb X^\top \mb X\right)^{2t}\mb\theta^*
    &\le \frac{1}{\lambda}\left\|\mb\theta^*\right\|^2\left\|\left(\frac{\lambda}{n}\mb X^\top \mb X\right)\left(\mb I-\frac{\lambda}{n}\mb X^\top \mb X\right)^{2t}\right\|\\
    &\le \frac{1}{2\lambda t}\left\|\mb\theta^*\right\|^2.
\end{aligned}    
\end{align}

Putting these two bounds together gives the proof for the main theorem.

\end{proof}

\subsection{Proof for the Variance Upper Bound}\label{section variance}
Recall that $\mb X=\mb U\Tilde{\mb \Lambda}^\frac{1}{2}\mb W^\top$ is the singular value decomposition of data matrix $\mb X$, where $\mb U=(\mb u_1,\cdots, \mb u_n)$, $\mb W=(\mb w_1,\cdots, \mb w_n)$, $\Tilde{\mb \Lambda}=\text{diag}\{\mu_1,\cdots,\mu_n\}$ with $\mu_1\ge\mu_2\ge\cdots\mu_n$. 

Recall that 
\begin{align}
\begin{aligned}
    &k_0=\min\{l\ge 0:\lambda_{l+1}\le \frac{c_0\sum_{i>l}\lambda_i}{n}\},\\
    &k_1 = \min\{l\ge 0:\lambda_{l+1}\le \frac{c_1\sum_{i>0}\lambda_i}{n}\},\\  
    &k_2=\min\{l\ge 0:\sum_{i>l}\lambda_i+n\lambda_{l+1}\le c_2 c(t,n) \sum_{i>0}\lambda_i\}\},  
\end{aligned}
\end{align}

for some constant $c_0,c_1,c_2$ and function $c(t,n)$.

We further define 
\begin{equation}
    k_3=\min\{l\ge 0:\mu_{l+1}\le c_3 c(t,n) \sum_{i>0}\lambda_i\},
\end{equation}
for some constant $c_3$.

The next lemma shows that we can appropriately choose constants to ensure that $k_3\le k_2$ holds with high probability, and in some specific cases we have $k_2\le k_1$.
\begin{lemma}\label{k order}
For any function $c(t,n)$ and constant $c_2$, there exists constants $c,c_3$, such that $k_3\le k_2$ with probability at least $1-e^{-\frac{n}{c}}$. Furthermore, if $c(t,n)$ is a positive constant function, for any $c_1$, there exists $c_2$ such that $k_2\le k_1$.
\end{lemma}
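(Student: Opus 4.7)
The plan is to establish both inequalities by directly evaluating the defining condition for the smaller of the two indices at the value of the larger one, using Lemma \ref{eigenvalue upperbound} to translate between the empirical spectrum $\{\mu_i\}$ and the population spectrum $\{\lambda_i\}$.

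For the first assertion $k_3 \le k_2$, I will test the $k_3$-condition at $l = k_2$. Lemma \ref{eigenvalue upperbound} guarantees that there is an absolute constant $c$ such that with probability at least $1 - e^{-n/c}$,
\begin{equation*}
\mu_{k_2+1} \le c\Bigl(\sum_{i>k_2}\lambda_i + n\,\lambda_{k_2+1}\Bigr).
\end{equation*}
By the definition of $k_2$, the right-hand side is at most $c \cdot c_2\, c(t,n) \sum_{i>0}\lambda_i$. Setting $c_3 := c\, c_2$ therefore yields $\mu_{k_2+1} \le c_3\, c(t,n) \sum_{i>0}\lambda_i$, and the minimality in the definition of $k_3$ forces $k_3 \le k_2$.

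For the second assertion, assume $c(t,n) \equiv C$ is a positive constant. I will test the $k_2$-condition at $l = k_1$. The inequality $\sum_{i>k_1}\lambda_i \le \sum_{i>0}\lambda_i$ is trivial, and by the definition of $k_1$, $n\,\lambda_{k_1+1} \le c_1 \sum_{i>0}\lambda_i$. Combining these,
\begin{equation*}
\sum_{i>k_1}\lambda_i + n\,\lambda_{k_1+1} \le (1+c_1)\sum_{i>0}\lambda_i,
\end{equation*}
so any $c_2 \ge (1+c_1)/C$ makes the $k_2$-condition hold at $l = k_1$, hence $k_2 \le k_1$.

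The argument is essentially a bookkeeping exercise on constants, and I do not anticipate a real obstacle; Lemma \ref{eigenvalue upperbound} does all the heavy lifting. The only subtlety worth flagging is that the first claim is probabilistic (inherited directly from Lemma \ref{eigenvalue upperbound}), while the second is a deterministic statement about the population spectrum, which is precisely why the constancy of $c(t,n)$ is required for the second half but not the first.
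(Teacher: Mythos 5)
Your proposal is correct and follows exactly the same route as the paper: apply Lemma \ref{eigenvalue upperbound} at index $k_2$ and set $c_3 = c\,c_2$ for the first claim, and verify the $k_2$-condition at $l=k_1$ via $\sum_{i>k_1}\lambda_i + n\lambda_{k_1+1}\le(c_1+1)\sum_{i>0}\lambda_i$ with $c_2=(c_1+1)/c(t,n)$ for the second. Your closing remark correctly identifies why the first claim is probabilistic and the second deterministic.
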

\begin{proof}
According to lemma \ref{eigenvalue upperbound}, there exists a constant $c$, with probability at least $1-e^{-\frac{n}{c}}$ we have 
\begin{equation}
    \mu_{k_2+1}\le c(\sum_{i>k_2}\lambda_i+n\lambda_{k_2+1})\le c c_2 c(t,n)\sum_{i>0}\lambda_i.
\end{equation}
Therefore, we know that $k_3\le k_2$ for $c_3=c c_2$.

By the definition of $k_1$, we have 
\begin{equation}
\sum_{i>k_1}\lambda_i+n\lambda_{k_1+1}\le (c_1+1) \sum_{i>0}\lambda_i,    
\end{equation}
which implies that $k_2\le k_1$ for $c_2=\frac{c_1+1}{c(t,n)}$, if $c(t,n)$ is a positive constant.
\end{proof}

Next we bound $\Tr[\mb C]$, which implies an upper bound on the variance term.

\begin{theorem}\label{thm:var_main}
There exist constants $c,c_0,c_1,c_2$ such that if $k_0\le \frac{n}{c}$, then with probability at least $1-e^{-\frac{n}{c}}$, the trace of the variance matrix $C$ has the following upper bound for any $t$:
\begin{equation}
\Tr[\mb C]\le c\left(\frac{k_1}{n}+\frac{k_2}{c(t,n)n}+
c(t,n)\left(\frac{\lambda t}{n}\sum_{i>0}\lambda_i\right)^2\right).
\end{equation}

\end{theorem}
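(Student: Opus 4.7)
\textbf{Proof plan for Theorem~\ref{thm:var_main}.}
My plan is to reduce $\Tr[\mb C]$ to a weighted scalar sum over the singular indices of $\mb X$, then split that sum twice: once along the sample Gram spectrum via the new effective scale $k_3$, and once along the population spectrum via $k_1$. Starting from the SVD $\mb X=\mb U\tilde{\mb\Lambda}^{1/2}\mb W^\top$, the operators $(\mb X\mb X^\top)^{-1}$ and $\mb I-(\mb I-\tfrac{\lambda}{n}\mb X\mb X^\top)^t$ diagonalize simultaneously in the basis $\mb U$, so cyclic invariance of trace gives
\begin{equation*}
\Tr[\mb C]=\sum_{i=1}^n\frac{g_i^2}{\mu_i}\,\mb w_i^\top\mb\Sigma\mb w_i,\qquad g_i:=1-\Bigl(1-\tfrac{\lambda\mu_i}{n}\Bigr)^t\in[0,1].
\end{equation*}
Throughout I will exploit the two elementary bounds $g_i\le 1$ and $g_i\le \lambda\mu_i t/n$ (Bernoulli). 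The first is tight where $\mu_i$ is large and the second where $\mu_i$ is small, so I partition indices at $k_3=\min\{l:\mu_{l+1}\le c_3 c(t,n)\sum_{i>0}\lambda_i\}$; by Lemma~\ref{k order}, $k_3\le k_2$ with probability at least $1-e^{-n/c}$.

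\textbf{Tail block} ($i>k_3$): Apply $g_i^2/\mu_i\le \lambda^2\mu_i t^2/n^2$, use $\mu_i\le c_3 c(t,n)\sum_{i>0}\lambda_i$, and observe that $\mb W^\top\mb W=\mb I_n$ yields the bookkeeping identity $\sum_{i=1}^n \mb w_i^\top\mb\Sigma\mb w_i=\Tr[\mb W^\top\mb\Sigma\mb W]\le \sum_{i>0}\lambda_i$. Multiplying these three inequalities gives
\begin{equation*}
\sum_{i>k_3}\frac{g_i^2}{\mu_i}\mb w_i^\top\mb\Sigma\mb w_i\;\lesssim\;c(t,n)\Bigl(\tfrac{\lambda t}{n}\textstyle\sum_{i>0}\lambda_i\Bigr)^{\!2},
\end{equation*}
which is the third term. \textbf{Head block, small-$\mb\Sigma$-eigenvalues} ($i\le k_3$, $\mb\Sigma_{>k_1}$): Apply $g_i\le 1$ and then use $\|\mb\Sigma_{>k_1}\|=\lambda_{k_1+1}\le c_1\sum_{i>0}\lambda_i/n$ (definition of $k_1$) together with $\mu_i\ge c_3 c(t,n)\sum_{i>0}\lambda_i$ for $i\le k_3$, so each summand is at most $c_1/(c_3 c(t,n) n)$; summing $k_3\le k_2$ terms gives $k_2/(c(t,n)n)$, the second term.

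\textbf{Head block, large-$\mb\Sigma$-eigenvalues} ($i\le k_3$, $\mb\Sigma_{\le k_1}$): This is the main obstacle, since $\|\mb\Sigma_{\le k_1}\|=\lambda_1$ is too large to absorb crudely. I enlarge the index set to all of $[n]$ and rewrite
\begin{equation*}
\sum_{i=1}^n\frac{1}{\mu_i}\mb w_i^\top\mb\Sigma_{\le k_1}\mb w_i\;=\;\Tr\bigl[\mb X^\top(\mb X\mb X^\top)^{-2}\mb X\,\mb\Sigma_{\le k_1}\bigr]\;=\;\sum_{k\le k_1}\lambda_k\,\mb z_k^\top(\mb X\mb X^\top)^{-2}\mb z_k
\end{equation*}
with $\mb z_k=\mb X\mb v_k$. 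Following the strategy of \citet{bartlett2020benign}, for each $k\le k_1\le k_0$ I write $\mb z_k=\sqrt{\lambda_k}\tilde{\mb z}_k$ where $\tilde{\mb z}_k$ is the $k$-th column of the whitened design $\tilde{\mb X}$ with i.i.d.\ subgaussian entries, introduce $\mb A_{-k}=\mb X\mb X^\top-\lambda_k\tilde{\mb z}_k\tilde{\mb z}_k^\top$ (independent of $\tilde{\mb z}_k$), and apply Sherman--Morrison to obtain $(\mb X\mb X^\top)^{-1}\tilde{\mb z}_k=\mb A_{-k}^{-1}\tilde{\mb z}_k/(1+\lambda_k\tilde{\mb z}_k^\top\mb A_{-k}^{-1}\tilde{\mb z}_k)$. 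Combining the Hanson--Wright-type concentration of the quadratic forms $\tilde{\mb z}_k^\top\mb A_{-k}^{-\ell}\tilde{\mb z}_k$ with the high-probability lower bound $\mu_{\min}(\mb A_{-k})\gtrsim\sum_{i>k_0}\lambda_i$ (which holds under $k_0\le n/c$ by the same spectral estimates used in Lemma~\ref{eigenvalue upperbound}) yields $\lambda_k\,\mb z_k^\top(\mb X\mb X^\top)^{-2}\mb z_k\lesssim 1/n$ uniformly for $k\le k_1$. Summing contributes $k_1/n$, the first term, and collecting the three blocks completes the proof.
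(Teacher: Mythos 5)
Your proposal is correct and follows essentially the same route as the paper: the same reduction of $\Tr[\mb C]$ to the scalar sum $\sum_i \frac{g_i^2}{\mu_i}\mb w_i^\top\mb\Sigma\mb w_i$, the same split of the Gram spectrum at $k_3$ with Bernoulli's inequality on the tail, and the same split of the population spectrum at $k_1$ on the head, yielding the three terms in the identical way. The only difference is cosmetic: where the paper simply cites the proof of Lemma~11 of \citet{bartlett2020benign} for the bound $\sum_{k\le k_1}\lambda_k^2\,\tilde{\mb z}_k^\top(\mb X\mb X^\top)^{-2}\tilde{\mb z}_k\lesssim k_1/n$, you correctly sketch its internals (Sherman--Morrison, concentration of the quadratic forms, and the lower bound on $\mu_n$ under $k_0\le n/c$).
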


\begin{proof}

We divide the eigenvalues of $\mb X\mb X^\top$ into two groups based on whether they are greater than $c_3 c(t,n) \sum_{i>0}\lambda_i$. The first group consists of $\mu_1\cdots \mu_{k_3}$, and the second group consists of $\mu_{k_3+1}\cdots \mu_n$. For $1\le j\le k_3$, we have 
\begin{equation}
1-\left(1-\frac{\lambda}{n}\mu_j\right)^t\le 1.    
\end{equation}

Therefore we have the following upper bound on $\left[\mb I-\left(\mb I-\frac{\lambda}{n}\mb X\mb X^\top\right)^t\right]^2$:

\begin{align}
    \begin{aligned}
    &\quad\left[\mb I-\left(\mb I-\frac{\lambda}{n}\mb X\mb X^\top\right)^t\right]^2
    \\&=\mb U\text{diag}\left\{\left[1-\left(1-\frac{\lambda}{n}\mu_1\right)^t\right]^2\cdots \left[1-\left(1-\frac{\lambda}{n}\mu_n\right)^t\right]^2\right\}\mb U^\top\\
    &\preceq \mb U\text{diag}\left\{\overbrace{1,\cdots 1}^{k_3 \text{ times}},\overbrace{\left[1-\left(1-\frac{\lambda}{n}\mu_{k_3+1}\right)^t\right]^2,\cdots \left[1-\left(1-\frac{\lambda}{n}\mu_n\right)^t\right]^2}^{n-k_3 \text{ times}}\right\}\mb U^\top\\
    &= \mb U\text{diag}\left\{\overbrace{1,\cdots 1}^{k_3 \text{ times}},\overbrace{0,\cdots 0}^{n-k_3 \text{ times}}\right\}\mb U^\top\\
    &+\mb U\text{diag}\left\{\overbrace{0,\cdots 0}^{k_3 \text{ times}},\overbrace{\left[1-\left(1-\frac{\lambda}{n}\mu_{k_3+1}\right)^t\right]^2,\cdots \left[1-\left(1-\frac{\lambda}{n}\mu_n\right)^t\right]^2}^{n-k_3 \text{ times}}\right\}\mb U^\top.
    \end{aligned}
\end{align}

For positive semidefinite matrices $\mb P,\mb Q,\mb R$ which satisfies $\mb Q\preceq \mb R$, it holds that $\Tr[\mb P\mb Q]\le \Tr[\mb P\mb R]$ . It implies the following upperbound of $\Tr[\mb C]$:

\begin{equation}
    \begin{split}\label{cut 1}
        &\quad\Tr[\mb C]\\&=\Tr\left[\left[\mb I-\left(\mb I-\frac{\lambda}{n}\mb X\mb X^\top\right)^t\right]^2\left(\mb X\mb X^\top\right)^{-2}\mb X\mb \Sigma \mb X^\top\right]\\
    &\le \underbrace{\Tr\left[\mb U\text{diag}\left\{\overbrace{1,\cdots 1}^{k_3 \text{ times}},\overbrace{0,\cdots 0}^{n-k_3 \text{ times}}\right\}\mb U^\top\left(\mb X\mb X^\top\right)^{-2}\mb X\mb \Sigma \mb X^\top\right]}_{\text{\textcircled{1}}}\\
    &+\underbrace{\Tr\left[\mb U\text{diag}\left\{\overbrace{0,\cdots 0}^{k_3 \text{ times}},\overbrace{\left[1-\left(1-\frac{\lambda}{n}\mu_{k_3+1}\right)^t\right]^2,\cdots \left[1-\left(1-\frac{\lambda}{n}\mu_n\right)^t\right]^2}^{n-k_3 \text{ times}}\right\}
    \mb U^\top\left(\mb X\mb X^\top\right)^{-2}\mb X\mb \Sigma \mb X^\top\right].}_{\text{\textcircled{2}}}
    \end{split}
\end{equation}

\textbf{Bounding \textcircled{1}}

Noticing $\mb X=\mb U\Tilde{\mb \Lambda}^\frac{1}{2}\mb W^\top$ and $\mb \Sigma=\sum_{i\ge 1}\lambda_i \mb v_i\mb v_i^\top$, we express the first term as sums of eigenvector products,
\begin{align}
    \begin{aligned}
    \text{\textcircled{1}}&=\Tr\left[\mb U\text{diag}\left\{\overbrace{1,\cdots 1}^{k_3 \text{ times}},\overbrace{0,\cdots 0}^{n-k_3 \text{ times}}\right\}\mb U^\top\left(\mb X\mb X^\top\right)^{-2}\mb X\mb \Sigma \mb X^\top\right]\\
    &=\Tr\left[\mb U\text{diag}\left\{\overbrace{1,\cdots 1}^{k_3 \text{ times}},\overbrace{0,\cdots 0}^{n-k_3 \text{ times}}\right\}\mb U^\top\mb U\Tilde{\mb \Lambda}^{-2}\mb U^\top \mb U\Tilde{\mb \Lambda}^{\frac{1}{2}}\mb W^\top \mb \Sigma \mb W\Tilde{\mb \Lambda}^{\frac{1}{2}}\mb U^\top\right]\\
    &=\Tr\left[\text{diag}\left\{\overbrace{1,\cdots 1}^{k_3 \text{ times}},\overbrace{0,\cdots 0}^{n-k_3 \text{ times}}\right\}\Tilde{\mb \Lambda}^{-1}\mb W^\top \mb \Sigma \mb W\right]\\
    &=\sum_{i\ge 1} \lambda_i \Tr\left[\text{diag}\left\{\overbrace{1,\cdots 1}^{k_3 \text{ times}},\overbrace{0,\cdots 0}^{n-k_3 \text{ times}}\right\}\Tilde{\mb \Lambda}^{-1}\mb W^\top \mb v_i\mb v_i^\top \mb W\right]\\
    &=\sum_{i\ge 1}\sum_{1\le j\le k_3}\frac{\lambda_i}{\mu_j}\left(\mb v_i^\top \mb w_j\right)^2.
    \end{aligned}
\end{align}

Next we divide the above summation into $1\le i\le k_1$ and $i>k_1$. 
For the first part, notice that 
\begin{align}
    \begin{aligned}
    \sum_{1\le j\le k_3}\frac{\lambda_i}{\mu_j}\left(\mb v_i^\top \mb w_j\right)^2
    &\le \sum_{1\le j\le n}\frac{\lambda_i}{\mu_j}\left(\mb v_i^\top \mb w_j\right)^2\\
    &=\lambda_i \mb v_i^\top \left(\sum_{1\le j\le n}\frac{1}{\mu_j}\mb w_j\mb w_j^\top\right) \mb v_i\\
    &=\lambda_i \mb v_i^\top \mb W\Tilde{\mb \Lambda}^{-1}\mb W^\top \mb v_i\\
    &=\lambda_i \mb v_i^\top \mb W\Tilde{\mb \Lambda}^{\frac{1}{2}}\mb U^\top \mb U \Tilde{\mb \Lambda}^{-2}\mb U^\top \mb U
    \Tilde{\mb \Lambda}^{\frac{1}{2}}\mb W^\top \mb v_i\\
    &=\lambda_i^2 \Tilde{\mb x}_i^\top(\mb X\mb X^\top)^{-2}\Tilde{\mb x}_i,
    \end{aligned}
\end{align}
where $\Tilde{\mb x}_i$ is defined as $\Tilde{\mb x}_i=\frac{\mb X\mb v_i}{\sqrt{\lambda_i}}=\frac{\mb U
    \Tilde{\mb \Lambda}^{\frac{1}{2}}\mb W^\top \mb v_i}{\sqrt{\lambda_i}}$.

From the proof of lemma 11 in \citet{bartlett2020benign}, we know that for any $\sigma_x$, there exists a constant $c_0$ and $c$ such that if $k_0\le \frac{n}{c}$, with probability at least $1-e^{-\frac{n}{c}}$ the first part can be bounded as 

\begin{align}\begin{aligned}\label{var first part }
    \sum_{1\le i\le k_1}\sum_{1\le j\le k_3}\frac{\lambda_i}{\mu_j}\left(\mb v_i^\top \mb w_j\right)^2 \le \sum_{1\le i\le k_1}\lambda_i^2 \mb \Tilde{\mb x}_i(\mb X\mb X^\top)^{-2}\Tilde{\mb x}_i\le c\frac{k_1}{n},
\end{aligned}\end{align}
which gives a bound for the first part.

For the second part we interchange the order of summation and have 
\begin{align}\begin{aligned}\label{var second part}
    \sum_{i\ge k_1}\sum_{1\le j\le k_3}\frac{\lambda_i}{\mu_j}\left(\mb v_i^\top \mb w_j\right)^2
    &=\sum_{1\le j\le k_3}\sum_{i\ge k_1}\frac{\lambda_i}{\mu_j}\left(\mb v_i^\top \mb w_j\right)^2\\
    &\le \frac{1}{c_3 c(t,n)\sum_{i>0}\lambda_i}\sum_{1\le j\le k_3}\sum_{i\ge k_1}\lambda_i\left(\mb v_i^\top \mb w_j\right)^2\\
    &=\frac{\lambda_{k_1+1}}{c_3 c(t,n)\sum_{i>0}\lambda_i}\sum_{1\le j\le k_3}\sum_{i\ge k_1}\left(\mb v_i^\top \mb w_j\right)^2\\
    &\le \frac{\lambda_{k_1+1}}{c_3 c(t,n)\sum_{i>0}\lambda_i}\sum_{1\le j\le k_3}1\\
    &=\frac{\lambda_{k_1+1}k_3}{c_3 c(t,n)\sum_{i>0}\lambda_i}\\
    &\le c \frac{k_3}{c(t,n)n}.
\end{aligned}\end{align}

for $c$ large enough.

Putting \ref{var first part } and \ref{var second part} together, and noting that $k_3\le k_2$ with high probability as given in lemma \ref{k order}, we know there exists a constant $c$ that with probability at least $1-e^{-\frac{n}{c}}$,
\begin{equation}
    \text{\textcircled{1}}\le c\frac{k_1}{n}+c \frac{k_2}{c(t,n)n}.
\end{equation}

\textbf{Bounding \textcircled{2}}

Similar to the first step in bounding \textcircled{1}, we note that 
\begin{align}
    \begin{aligned}
    \text{\textcircled{2}}&=
    \Tr\left[\mb U\text{diag}\left\{\overbrace{0,\cdots 0}^{k_3 \text{ times}},\overbrace{\left[1-\left(1-\frac{\lambda}{n}\mu_{k_3+1}\right)^t\right]^2,\cdots, \left[1-\left(1-\frac{\lambda}{n}\mu_n\right)^t\right]^2}^{n-k_3 \text{ times}}\right\}\right. \\&\left.\quad\quad \mb U\Tilde{\mb \Lambda}^{-2}\mb U^\top \mb U\Tilde{\mb \Lambda}^{\frac{1}{2}}\mb W^\top \mb \Sigma \mb W\Tilde{\mb \Lambda}^{\frac{1}{2}}\mb U^\top\right]\\
    &=
    \Tr\left[\text{diag}\left\{\overbrace{0,\cdots 0}^{k_3 \text{ times}},\overbrace{\frac{1}{\mu_{k_3+1}}\left[1-\left(1-\frac{\lambda}{n}\mu_{k_3+1}\right)^t\right]^2,\cdots, \frac{1}{\mu_n}\left[1-\left(1-\frac{\lambda}{n}\mu_n\right)^t\right]^2}^{n-k_3 \text{ times}}\right\}\right. \\&\left.\quad\quad\mb W^\top \mb \Sigma \mb W\right].
    \end{aligned}
\end{align}

From Bernoulli's inequality and the definition of $k_3$, for any $k_3+1\le j\le n$, we have
\begin{align}\label{bernoulli}
    \frac{1}{\mu_k}\left[1-\left(1-\frac{\lambda}{n}\mu_k\right)^t\right]^2\le
    \frac{1}{\mu_k}\left(\frac{\lambda}{n}\mu_kt\right)^2
    =\left(\frac{\lambda t}{n}\right)^2\mu_k\le c_3 \left(\frac{\lambda t}{n}\right)^2 c(t,n)\sum_{i>0}\lambda_i,
\end{align}
Hence, 

\begin{align}
    \begin{aligned}
        \text{\textcircled{2}}&\le c_3 c(t,n)\left(\frac{\lambda t}{n}\right)^2 \sum_{i>0}\lambda_i \Tr[\mb W^\top \mb \Sigma \mb W]
    \\&=c_3 c(t,n)\left(\frac{\lambda t}{n}\sum_{i>0}\lambda_i\right)^2.
    \end{aligned}
\end{align}

\textbf{Putting things together}

From the bounds for \textcircled{1} and \textcircled{2} given above, we know that there exists a constant $c$ such that with probability at least $1-e^{-\frac{n}{c}}$, the trace of the variance matrix $C$ has the following upper bound
\begin{align}
    \begin{aligned}
            \Tr[C]\le c\left(\frac{k_1}{n}+\frac{k_2}{c(t,n)n}+
    c(t,n)\left(\frac{\lambda t}{n}\sum_{i>0}\lambda_i\right)^2\right).
    \end{aligned}
\end{align}

\end{proof}

\begin{proof}[Proof of theorem \ref{main thm}]
Lemma \ref{lem:decomposition}, \ref{lem:bias_main} and Theorem \ref{thm:var_main} gives the complete proof. Note that the high probability events in the proof are independent of the epoch number $t$, and this implies that the theorem holds uniformly for all $t\in\Nbb$.
\end{proof}

\subsection{Proof of Compatibility Results}\label{app:compat}

\begin{corollary}[Corollary \ref{sqrt n} restated]
Let Assumption \ref{assumption 1}, \ref{assumption 2} and \ref{assump:fullRank} hold. Fix a constant $c(t, n)$. Suppose $k_0= O(n)$, $k_1=o(n)$, $r(\mb\Sigma)=o(n)$, $\lambda=O\left(\frac{1}{\sum_{i>0}\lambda_i}\right)$.
Then there exists a sequence of positive constants $\{\delta_n\}_{n\ge 0}$ which converge to 0, such that with probability at least $1-\delta_n$, the excess risk is consistent for $t\in\left(\omega\left(\frac{1}{\lambda}\right),o\left(\frac{n}{\lambda}\right)\right)$, i.e. 
\begin{equation*}
    R(\mb\theta_t)=o(1).
\end{equation*}
Furthermore, for any positive constant $\delta$, with probability at least $1-\delta$, the minimal excess risk on the training trajectory can be bounded as
\begin{equation*}
    \min_t R(\mb\theta_t)\lesssim \frac{\max\{\sqrt{r(\mb\Sigma)},1\}}{\sqrt{n}}
+\frac{\max\{k_1,1\}}{n}.
\end{equation*}
\end{corollary}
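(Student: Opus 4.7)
The plan is to specialize Theorem~\ref{main thm} to a constant $c(t,n)\equiv c_0$ and then control each piece of the resulting bound using the hypotheses $k_0=O(n)$, $k_1=o(n)$, $r(\mb\Sigma)=o(n)$, and $\lambda\sum_i\lambda_i=O(1)$. Under this choice, Lemma~\ref{k order} yields $k_2\le k_1$ deterministically (with an appropriate choice of $c_2$), and Assumption~\ref{assumption 1} bounds $\|\mb\Sigma\|$, $\sum_i\lambda_i$, and $\|\mb\theta^*\|$ by absolute constants. Substituting these facts into $R(\mb\theta_t)\lesssim B(\mb\theta_t)+V(\mb\theta_t)$ yields, uniformly for $t\in\mathbb{N}$ and with probability at least $1-\delta$,
\begin{equation*}
R(\mb\theta_t)\;\lesssim\;\frac{1}{\lambda t}\;+\;\max\left\{\sqrt{\frac{r(\mb\Sigma)}{n}},\,\sqrt{\frac{\log(1/\delta)}{n}}\right\}\;+\;\log(1/\delta)\left(\frac{k_1}{n}+\left(\frac{\lambda t}{n}\right)^{2}\right).
\end{equation*}

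For the first assertion, fix the window $T_n=(\omega(1/\lambda),\,o(n/\lambda))$ and define $\epsilon_n:=\sup_{t\in T_n}\max\{1/(\lambda t),\,(\lambda t/n)^{2}\}+r(\mb\Sigma)/n+k_1/n$. Each term is $o(1)$ by the hypotheses and the definition of $T_n$, so $\epsilon_n\to 0$. I would then calibrate $\delta_n:=\sqrt{\epsilon_n}$, which is legal because $\epsilon_n$ is deterministic. Then $\delta_n\to 0$, $\log(1/\delta_n)\cdot\epsilon_n=\tfrac{1}{2}\epsilon_n\log(1/\epsilon_n)\to 0$ as $x\log(1/x)\to 0$, and $\log(1/\delta_n)/n=O(\log n / n)\to 0$ because $\epsilon_n\gtrsim 1/n^2$ whenever any term is nontrivial. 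Plugging $\delta=\delta_n$ into the display above then gives $\sup_{t\in T_n}R(\mb\theta_t)=o(1)$ with probability at least $1-\delta_n$.

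For the second assertion, I fix $\delta$ at a positive constant so $\log(1/\delta)=O(1)$ disappears into $\lesssim$. The $t$-independent contribution is $\lesssim\max\{\sqrt{r(\mb\Sigma)},1\}/\sqrt{n}+k_1/n$, which is upper bounded by the claimed $\max\{\sqrt{r(\mb\Sigma)},1\}/\sqrt{n}+\max\{k_1,1\}/n$ (the $\max$ with $1$ is only needed when $k_1=0$). For the remaining $t$-dependent pieces $1/(\lambda t)+(\lambda t/n)^{2}$, I balance them by choosing $\lambda t_n\asymp n^{2/3}$, at which point both equal $\Theta(n^{-2/3})\lesssim 1/\sqrt{n}$ and are absorbed into the bias floor. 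Since Theorem~\ref{main thm} holds simultaneously for all $t\in\mathbb{N}$, $\min_t R(\mb\theta_t)\le R(\mb\theta_{t_n})$ gives the claim.

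The main obstacle is the calibration of $\delta_n$ in the first assertion: I need $\delta_n\to 0$ simultaneously with $\log(1/\delta_n)\cdot\epsilon_n\to 0$, yet the hypotheses supply only unquantified $o(1)$ rates, so $\delta_n$ cannot be prescribed in advance. The choice $\delta_n=\sqrt{\epsilon_n}$ circumvents this by tying the failure probability to the (unknown but data-independent) rate $\epsilon_n$. A secondary bookkeeping issue is the union bound combining the exceptional events of Theorem~\ref{main thm} and Lemma~\ref{k order}, which only costs a constant factor and does not affect asymptotic probabilities.
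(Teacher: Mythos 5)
Your proposal is correct and follows essentially the same route as the paper's proof: specialize Theorem~\ref{main thm} with a constant $c(t,n)$, use Lemma~\ref{k order} to replace $k_2$ by $k_1$, choose a slowly decaying $\delta_n$ so that $\log(1/\delta_n)$ does not overwhelm the $o(1)$ rates, and obtain the $\min_t$ bound by evaluating at a single balancing time (you use $\lambda t_n\asymp n^{2/3}$ where the paper uses $\lambda t_n\asymp\sqrt n$; both are absorbed by the $1/\sqrt n$ bias floor). Your explicit calibration $\delta_n=\sqrt{\epsilon_n}$ is in fact more concrete than the paper's unquantified "mildly decaying sequence."
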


\begin{proof}
According to Lemma \ref{lem:bias_main}, with probability at least $1-\frac{\delta_n}{2}$, the following inequality holds for all $t$:
\begin{equation}\label{eq: bias bound}
B(\mb\theta_t)\lesssim \left(\frac{1}{\lambda t}+
\max\left\{\sqrt{\frac{r(\mb\Sigma)}{n}},
\frac{r(\mb\Sigma)}{n},\sqrt{\frac{\log(\frac{1}{\delta_n})}{n}},\frac{\log(\frac{1}{\delta_n})}{n}\right\}\right).    
\end{equation}
If $\delta_n$ is chosen such that $\log \frac{1}{\delta_n}=o(n)$, we have that with probability at least $1-\frac{\delta_n}{2}$, we have for all $t=\omega\left(\frac{1}{\lambda}\right)$:
\begin{equation}
    B(\mb\theta_t)=o(1),
\end{equation}
in the sample size $n$.

When $c(t,n)$ is a constant, we have $k_2\le k_1$ with high probability as given in lemma \ref{k order}.
Therefore, according to Lemma~\ref{lem:decomposition} and Theorem~\ref{thm:var_main}, we know that if $\log \frac{1}{\delta_n}=O(n)$, with probability at least $1-\frac{\delta_n}{2}$, the following bound holds for all $t$:
\begin{equation}\label{eq: var bound}
    V(\mb\theta_t)\lesssim \log\left(\frac{1}{\delta_n}\right)\left(\frac{k_1}{n}+\frac{\lambda^2 t^2}{n^2}\right).
\end{equation}

Since $k_1=o(n)$, $t=o\left(\frac{n}{\lambda}\right)$, we have
$
    \frac{k_1}{n}+\frac{\lambda^2 t^2}{n^2}=o(1).
$
Therefore, there exists a mildly decaying sequence of $\delta_n$ with $\log\left(\frac{1}{\delta_n}\right)\left(\frac{k_1}{n}+\frac{\lambda^2 t^2}{n^2}\right)=o(1)$, i.e.,
\begin{equation}
    V(\mb\theta_t)=o(1).
\end{equation}

To conclude, $\delta_n$ can be chosen such that 
\begin{equation}
    \log\left(\frac{1}{\delta_n}\right)=\omega(1), \log\left(\frac{1}{\delta_n}\right)=O(n), \log\left(\frac{1}{\delta_n}\right)=O\left(\frac{1}{\frac{k_1}{n}+\frac{\lambda^2 t^2}{n^2}}\right),
\end{equation}
and then with probability at least $1-\delta_n$, the excess risk is consistent for all 
$t\in\left(\omega\left(\frac{1}{\lambda}\right),o\left(\frac{n}{\lambda}\right)\right)$:
\begin{equation}
    R(\mb\theta_t)=B(\mb\theta_t)+V(\mb\theta_t)=o(1).
\end{equation}

This completes the proof for the first claim. The second claim follows from 
Equation~\ref{eq: bias bound} and~\ref{eq: var bound} by setting $t=\Theta\left(\frac{\sqrt{n}}{\lambda}\right)$.
\end{proof}

\begin{lemma}[Lemma \ref{lem:k1} restated]
For any fixed (i.e. independent of sample size $n$) feature covariance $\mb \Sigma$ satisfying assumption \ref{assumption 1}, we have $k_1(n)=o(n)$.
\end{lemma}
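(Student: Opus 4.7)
\begin{proofsketch}
The plan is to derive a contradiction from the assumption $k_1(n) \neq o(n)$, exploiting the fact that the trace $\sum_{i>0} \lambda_i$ is finite (by Assumption~\ref{assumption 1}) and that $\lambda_i$ is nonincreasing.

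The key auxiliary fact I will establish first is that $n \lambda_n \to 0$ as $n \to \infty$. This is a standard consequence of convergence of $\sum_{i>0} \lambda_i$ with $\lambda_i$ decreasing: the tail block $\sum_{i=n+1}^{2n} \lambda_i$ is bounded below by $n \lambda_{2n}$ (since $\lambda_{2n}$ is the smallest term in the block), while by the Cauchy criterion the same tail block tends to $0$. Hence $2n \lambda_{2n} \to 0$, and monotonicity handles odd indices.

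Now I argue by contradiction. Suppose $k_1(n)$ is not $o(n)$, so there exists $\varepsilon > 0$ and a subsequence $n_k \to \infty$ with $k_1(n_k) \ge \varepsilon n_k$. Writing $S := \sum_{i>0} \lambda_i$, the minimality in the definition of $k_1$ gives, for $k_1(n_k) \ge 1$,
\begin{equation*}
\lambda_{k_1(n_k)} \;>\; \frac{c_1 S}{n_k},
\end{equation*}
since otherwise $k_1(n_k) - 1$ would already satisfy the threshold condition. Multiplying by $k_1(n_k) \ge \varepsilon n_k$ yields
\begin{equation*}
k_1(n_k)\, \lambda_{k_1(n_k)} \;>\; \varepsilon\, c_1\, S \;>\; 0.
\end{equation*}
But $k_1(n_k) \to \infty$ along the subsequence, so by the auxiliary fact, $k_1(n_k)\, \lambda_{k_1(n_k)} \to 0$, contradicting the strict positive lower bound. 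Hence $k_1(n) = o(n)$.

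I do not anticipate a significant obstacle; the only nontrivial ingredient is the Abel/Cauchy-style fact $n\lambda_n \to 0$ for nonincreasing summable sequences, which is standard. A minor edge case to handle is that $k_1(n)$ could equal $0$ for small $n$, but this is irrelevant since we take $n \to \infty$ and the contradiction argument only needs $k_1(n_k) \ge 1$, which is implied by $k_1(n_k) \ge \varepsilon n_k$ for sufficiently large $n_k$.
\end{proofsketch}
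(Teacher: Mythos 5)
Your proof is correct. It differs from the paper's in packaging: the paper argues directly from the assumption $k_1(n)\ge cn$ by summing $\lambda_l$ over dyadic blocks $l\in(\lfloor cn2^i\rfloor,\lfloor cn2^{i+1}\rfloor]$, each of which contributes $\gtrsim \sum_{i>0}\lambda_i$, so that infinitely many blocks contradict $\sum_{i>0}\lambda_i<\infty$; you instead factor the argument through the standard fact that $m\lambda_m\to 0$ for a nonincreasing summable sequence, and then contradict the bound $k_1(n_k)\lambda_{k_1(n_k)}>\varepsilon c_1\sum_{i>0}\lambda_i>0$. The two routes use the same ingredients (summability plus monotonicity of the spectrum), but yours is arguably cleaner on one point: the negation of $k_1(n)=o(n)$ only gives the lower bound $k_1(n_k)\ge\varepsilon n_k$ along a subsequence, which your pointwise fact $m\lambda_m\to 0$ handles immediately, whereas the paper's dyadic construction implicitly needs the lower bound to hold at infinitely many geometrically spaced sample sizes $n2^{i+1}$ and is stated somewhat loosely in that respect. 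Your isolation of the reusable lemma $n\lambda_n\to 0$ also makes the dependence on Assumption~\ref{assumption 1} (finiteness of the trace and positivity of $\lambda_1$, which guarantees $\sum_{i>0}\lambda_i>0$) fully transparent.
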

\begin{proof}
Suppose there exists constant $c$, such that $k_1(n)\ge c n$. By definition of $k_1$, we know that $\lambda_l\ge \frac{c_1 \sum_{i>0}\lambda_i}{n}$ holds for $1\le l\le k_1(n)$. Hence we have
\begin{equation}
\sum_{l=\lfloor cn2^i\rfloor+1}^{\lfloor cn 2^{i+1}\rfloor}\lambda_l \gtrsim \frac{c_1\sum_{i>0}\lambda_i}{n 2^{i+1}}cn 2^i\gtrsim \sum_{i>0}\lambda_i.  
\end{equation}
summing up all $l$ leads to a contradiction since $\sum_{i>0}\lambda_i<\infty$, which finishes the proof.
\end{proof}

\begin{theorem}[Theorem \ref{thm:compat_main} restated]
Consider the overparameterized linear regression setting defined in section \ref{sec: linreg_def}. 
Let Assumption \ref{assumption 1},\ref{assumption 2} and \ref{assump:fullRank} hold. 
Assume the learning rate satisfies $\lambda=O\left(\frac{1}{\Tr(\mb \Sigma)}\right)$.
\begin{itemize}
    \item If the covariance satisfies $k_0=o(n),R_{k_0}(\mb \Sigma)=\omega(n), \ r(\mb \Sigma)=o(n)$, it is compatible with the region $T_n=\left(\omega\left(\frac{1}{\lambda}\right), \infty\right)$.
    \item If the covariance satisfies $k_0=O(n),k_1=o(n),r(\mb \Sigma)=o(n)$, it is compatible with the region $T_n=\left(\omega\left(\frac{1}{\lambda}\right),o\left(\frac{n}{\lambda}\right)\right)$.
    \item If the covariance does not change with $n$, and satisfies $k_0=O(n)$ and $p=\infty$, it is compatible with the region $T_n=\left(\omega\left(\frac{1}{\lambda}\right),o\left(\frac{n}{\lambda}\right)\right)$.
\end{itemize}
\end{theorem}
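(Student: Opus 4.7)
The strategy is to reduce all three cases to Corollary~\ref{sqrt n} (the $c(t,n)$-constant instantiation of Theorem~\ref{main thm}), after verifying its hypotheses in each regime. Only the extension to $t = \infty$ in the first case requires an additional ingredient, namely the benign-overfitting bound for the min-norm interpolator from~\citet{bartlett2020benign}. Throughout, I would first note the monotonicity $k_1 \le k_0$ (up to adjusting the constants $c_0, c_1$): since $\sum_{i>k_0}\lambda_i \le \sum_{i>0}\lambda_i$, the defining inequality for $k_0$ is stronger than the one for $k_1$, so whenever $k_0 = O(n)$ or $k_0 = o(n)$ holds, the same holds for $k_1$.

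\textbf{Case 2.} This is the cleanest and I would dispose of it first by simply invoking Corollary~\ref{sqrt n}. The hypotheses $k_0 = O(n)$, $k_1 = o(n)$, $r(\mb\Sigma) = o(n)$, and $\lambda = O(1/\Tr(\mb\Sigma))$ match exactly, so the corollary delivers a sequence $\delta_n \to 0$ such that, with probability at least $1 - \delta_n$, one has $R(\mb\theta_t) = o(1)$ uniformly for $t \in (\omega(1/\lambda), o(n/\lambda))$. In particular, taking the supremum over $t$ in this window still vanishes in probability, which is exactly the compatibility statement for the region $T_n = (\omega(1/\lambda), o(n/\lambda))$.

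\textbf{Case 3.} Here I would check that the hypotheses of Case 2 are automatically implied. Since $\mb\Sigma$ is fixed (independent of $n$) and Assumption~\ref{assumption 1} gives $\sum_i \lambda_i < C$ with $\lambda_1 > 0$, the effective rank $r(\mb\Sigma) = \sum_i \lambda_i / \lambda_1$ is a constant, hence $r(\mb\Sigma) = o(n)$. Lemma~\ref{lem:k1} gives $k_1 = o(n)$ for any fixed covariance. The remaining hypothesis $k_0 = O(n)$ is assumed. Thus Case 2 applies verbatim.

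\textbf{Case 1.} The hypotheses here are the benign-overfitting conditions $k_0 = o(n)$, $R_{k_0}(\mb\Sigma) = \omega(n)$, $r(\mb\Sigma) = o(n)$. The preliminary observation $k_1 \le k_0$ gives $k_1 = o(n)$, so Corollary~\ref{sqrt n} already yields compatibility in $(\omega(1/\lambda), o(n/\lambda))$. To extend the region to $(\omega(1/\lambda), \infty)$, I would patch in the benign-overfitting theorem of~\citet{bartlett2020benign}, which states that under exactly these conditions the min-norm interpolator $\hat{\mb\theta} = \mb X^\top (\mb X \mb X^\top)^{-1} \mb Y$ satisfies $R(\hat{\mb\theta}) \overset{P}{\to} 0$. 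Since $\mb\theta_t \to \hat{\mb\theta}$ as $t \to \infty$ (the learning rate condition $\lambda = O(1/\Tr(\mb\Sigma))$ ensures convergence by Corollary~\ref{lambda bound}) and the excess risk is continuous in $\mb\theta$, I would argue that the tail region $t \ge \Omega(n/\lambda)$ is controlled by the interpolator bound, while the intermediate region is controlled by Corollary~\ref{sqrt n}; stitching the two high-probability events together by a union bound gives compatibility on all of $(\omega(1/\lambda), \infty)$.

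\textbf{Main obstacle.} The routine parts are Cases 2 and 3, which are essentially bookkeeping on top of Corollary~\ref{sqrt n}. The only subtlety I anticipate is the stitching argument in Case 1: one must ensure that the high-probability event under which $R(\mb\theta_t) = o(1)$ for intermediate $t$ is compatible with the one under which $R(\hat{\mb\theta}) = o(1)$, and that the convergence $\mb\theta_t \to \hat{\mb\theta}$ is uniform enough that $\sup_{t \ge T} R(\mb\theta_t)$ can be made small. This should follow from the fact that $R$ is a quadratic form in $\mb\theta - \mb\theta^*$ and that $(\mb I - \frac{\lambda}{n}\mb X^\top \mb X)^t$ converges monotonically on the row space of $\mb X$ as guaranteed by Corollary~\ref{lambda bound}, but handling the bound uniformly across $t \in [\Omega(n/\lambda), \infty)$ is the one place where careful attention is needed.
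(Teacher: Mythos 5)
Your Cases 2 and 3 match the paper's proof exactly: Case 2 is a direct invocation of Corollary~\ref{sqrt n}, and Case 3 reduces to it via Lemma~\ref{lem:k1} together with the observation that $r(\mb\Sigma)$ is constant for a fixed covariance. Your preliminary observation $k_1\le k_0$ (up to constants) is also correct and is what makes Corollary~\ref{sqrt n} applicable in Case 1.

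For Case 1, however, the stitching argument as you have set it up has a gap at $t=\Theta(n/\lambda)$. Corollary~\ref{sqrt n} only gives consistency for $t=o(n/\lambda)$ --- its variance bound contains the term $\lambda^2t^2/n^2$, which is $\Theta(1)$ at $t=\Theta(n/\lambda)$ --- while your tail argument, phrased as ``$\mb\theta_t\to\hat{\mb\theta}$ plus continuity of $R$,'' cannot control $R(\mb\theta_t)$ at $t=\Theta(n/\lambda)$ either: the smallest eigenvalue $\mu_n$ of $\mb X\mb X^\top$ is only $\Theta\bigl(\sum_{i>k_0}\lambda_i\bigr)=O(1)$ in general, so $(1-\tfrac{\lambda}{n}\mu_n)^t$ is bounded away from $0$ at that time scale and $\mb\theta_t$ is nowhere near $\hat{\mb\theta}$. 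Qualitative convergence to the interpolator gives no uniform control over a window of finite $t$ without a quantitative rate. The fix --- which the paper uses and which you already name in your ``main obstacle'' paragraph --- makes the stitching unnecessary: the variance quadratic form $\mb\varepsilon^\top\mb C_t\mb\varepsilon$ from Lemma~\ref{lem:decomposition} is monotonically increasing in $t$ (each diagonal entry $[1-(1-\tfrac{\lambda}{n}\mu_j)^t]^2$ increases in $t$ by Corollary~\ref{lambda bound}), so for \emph{every} $t$ it is dominated by its $t=\infty$ value, which is exactly the min-norm interpolator's variance and is $o(1)$ under $k_0=o(n)$, $R_{k_0}(\mb\Sigma)=\omega(n)$ by Theorem~4 of \citet{bartlett2020benign}. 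Combining this with the bias bound of Lemma~\ref{lem:bias_main}, which is $\lesssim \tfrac{1}{\lambda t}+o(1)$ uniformly in $t$ when $r(\mb\Sigma)=o(n)$, covers the whole region $\bigl(\omega(\tfrac{1}{\lambda}),\infty\bigr)$ in one step. So the right ingredient is in your write-up, but the argument should be reorganized around it rather than around a two-region union bound.
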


\begin{proof}
For the first argument, notice that (a) the bias term can still be bounded when $t = \omega(1)$ and $r(\Sigma) = o(n)$, according to Lemma~\ref{section bias}; (b) the variance term can be bounded with $t \to \infty$ (that is, the variance loss would increase with time $t$).
Therefore, the first argument directly follows Theorem 4 in~\citet{bartlett2020benign}.

The second argument follows Corollary~\ref{sqrt n}, 
and the third argument follows Corollary~\ref{sqrt n} and Lemma~\ref{lem:k1}.
Specifically, for any $\varepsilon>0$, there exists $\{\delta_n\}_{n>0}$ and $N$ such that for any sample size $n>N$, we have
\begin{equation}
    \Pr\left[\left|\sup_{t\in T_n} R(\mb \theta_t)\right|>\varepsilon\right]\le \delta_n.
\end{equation}
Let $n\to \infty$ shows that $\sup_{t\in T_n} R(\mb \theta_t)$ converges to $0$ in probability, which completes the proof for the second and the third claim. 
\end{proof}

\section{Comparisons and Discussions}\label{appen:exa}
In this section, we provide additional discussions and calculations for the main results, and compare it with 
previous works, including benign overfitting~(Section~\ref{appendix: compare benign overfitting}), stability-based bounds~(Section~\ref{app: stability}), uniform convergence~(Section~\ref{app: uniform}), and early-stopping bounds~(Section~\ref{app: early stopping}).

\subsection{Comparisons with Benign Overfitting}
\label{appendix: compare benign overfitting}

We summarize the results in \citet{bartlett2020benign,zou2021benign} and our results in Table~\ref{tab:comparison}, and provide a detailed comparison with them below.

\textbf{Comparison to \citet{bartlett2020benign}.} In this seminal work, the authors study the excess risk of the min-norm interpolator.
As discussed before, gradient descent converges to the min-norm interpolator in the  overparameterized linear regression setting.
One of the main results in \citet{bartlett2020benign} is to provide a tight bound for the variance part
in excess risk as 
\begin{equation}
\label{eqn: bartlett et al}
V(\boldsymbol{\hat{\theta}}) = O\left(\frac{k_0}{n}+\frac{n}{R_{k_0}(\mb\Sigma)}\right), 
\end{equation}
where $\boldsymbol{\hat{\theta}} = \mb X^\top(\mb X\mb X^\top)^{-1} \mb Y$ denotes the min-norm interpolator, and $R_{k}(\mb\Sigma)={(\sum_{i>k}\lambda_i)^2}/\\({\sum_{i>k}\lambda_i^2})$ denote another type of effective rank.  

By introducing the time factor, Theorem~\ref{main thm} improves over Equation~\eqref{eqn: bartlett et al} in at least two aspects.
Firstly, Theorem~\ref{main thm} guarantees the consistency of the gradient descent dynamics for a broad range of step number $t$, while \citet{bartlett2020benign} study the limiting behavior of the dynamics of $t\to \infty$.
Secondly, Theorem~\ref{main thm} implies that the excess risk of early stopping gradient descent solution can be much better than the min-norm interpolator. 
Compared to the bound in Equation~\eqref{eqn: bartlett et al}, the bound in Corollary~\ref{sqrt n} (a.) replaces $k_0$ with a much smaller quantity $k_1$; and (b.) drops the second term involving $R_{k_0}(\mb\Sigma)$.
Therefore, we can derive a consistent bound for an early stopping solution, even though the excess risk of limiting point (min-norm interpolator) can be $\Omega(1)$.

\textbf{Comparison to \citet{zou2021benign}.}
\citet{zou2021benign} study a different setting, which focuses on the one-pass stochastic gradient descent solution of linear regression.
The authors prove a bound for the excess risk as
\begin{equation}
    \label{eqn: zou et al}
R(\Tilde{\boldsymbol{\theta}}_t) = O\left(\frac{k_1}{n}+\frac{n\sum_{i>k_1}\lambda_i^2}{(\sum_{i>0}\lambda_i)^2}\right),
\end{equation}
where $\Tilde{\boldsymbol{\theta}}_t$ denotes the parameter obtained using stochastic gradient descent (SGD) with constant step size at epoch $t$.
Similar to our bound, Equation~\ref{eqn: zou et al} also uses the effective dimension $k_1$ to characterize the variance term.
However, we emphasize that \citet{zou2021benign} derive the bound in a pretty different scenario from ours, which is one-pass SGD scenario.
During the one-pass SGD training, one uses a fresh data point to perform stochastic gradient descent in each epoch, and therefore they set $t = \Theta(n)$ by default.
As a comparison, we apply the standard full-batch gradient descent, and thus the time can be more flexible.
Besides, our results in Corollary~\ref{sqrt n} improve the bound in Equation~\eqref{eqn: zou et al} by dropping the second term.
We refer to the third and fourth example in Example \ref{Allexample} for a numerical comparison of the bounds\footnote{Due to the bias term in Theorem~\ref{main thm}, the overall excess risk bound cannot surpass the order $O(1/\sqrt{n})$, which leads to the cases that \citet{zou2021benign} outperforms our bound. However, we note that such differences come from the intrinsic property of GD and SGD, which may be unable to avoid in the GD regimes.}.

\subsection{Comparisons with Stability-Based Bounds}\label{app: stability}
In this section, we show that Theorem~\ref{main thm} gives provably better upper bounds than the stability-based method. We cite a result from~\citet{DBLP:journals/corr/abs-2106-06153}, which uses stability arguments to tackle overparameteried linear regression under similar assumptions.

\begin{theorem}[modified from Theorem 1 in~\citet{DBLP:journals/corr/abs-2106-06153}]\label{thm:stab}
Under the overparameterized linear regression settings, assume that $\|\mb x\|\le 1$, $|\varepsilon|\le V$, $w=\frac{\mb \theta^{*,\top}\mb x}{\sqrt{\mb \theta^{*,\top}\mb \Sigma\mb \theta^{*}}}$ is $\sigma_w^2$-subgaussian. Let $B_t=\sup_{\tau\in[t]}\|\mb \theta_t\|$. the following inequality holds with probability at least $1-\delta$:
\begin{equation}\label{eq:stab}
    R(\mb \theta_t)=\tilde{O}\left(\max\{1,\mb \theta^{*,\top}\mb \Sigma\mb \theta^{*}\sigma_w^2,(V+B_t)^2\}\sqrt{\frac{\log(4/\delta)}{n}}+ \frac{\|\mb\theta^*\|^2}{\lambda t}+\frac{\lambda t(V+B_t)^2}{n}\right).
\end{equation}
\end{theorem}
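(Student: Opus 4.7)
The plan is to prove the bound via a three‑way decomposition of the excess risk into a generalization gap, an empirical optimization error, and a concentration term, then bound each piece separately using algorithmic stability, convex optimization theory, and concentration of measure. Write
\begin{equation*}
R(\mb\theta_t) \;=\; \underbrace{L(\mb\theta_t)-\hat L(\mb\theta_t)}_{\text{(I) generalization gap}} \;+\; \underbrace{\hat L(\mb\theta_t)-\hat L(\mb\theta^*)}_{\text{(II) optimization error}} \;+\; \underbrace{\hat L(\mb\theta^*)-L(\mb\theta^*)}_{\text{(III) concentration error}}.
\end{equation*}

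For term (II), note that under the assumption $\|\mb x\|\le 1$, the empirical Hessian $\tfrac{1}{n}\mb X^\top\mb X$ has operator norm at most $1$, so $\hat L$ is convex and $1$‑smooth. Applying the standard descent lemma to full‑batch gradient descent from $\mb\theta_0=\mb 0$ with step size $\lambda$, the textbook rate yields $\hat L(\mb\theta_t)-\hat L(\mb\theta^*)\le \|\mb\theta^*\|^2/(2\lambda t)$, matching the middle term in~\eqref{eq:stab}.

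For term (I), I would invoke uniform algorithmic stability for gradient descent on smooth convex losses in the style of Hardt--Recht--Singer. The key observation is that along the trajectory, if $\|\mb\theta_\tau\|\le B_t$ for all $\tau\le t$, then for any data point $(\mb x,y)$ encountered with $\|\mb x\|\le 1$ and $|y|\le V+\|\mb\theta^*\|$, the square loss $\ell(\mb\theta;(\mb x,y))=\tfrac12(y-\mb x^\top\mb\theta)^2$ is locally Lipschitz with constant $O(V+B_t)$. The Hardt--Recht--Singer recursion then gives uniform stability $\varepsilon_{\mathrm{stab}}=O(\lambda t (V+B_t)^2/n)$; converting stability to a high‑probability generalization gap (e.g., via Bousquet--Elisseeff with McDiarmid, or the Feldman--Vondrák refinement) yields term (I) together with the concentration prefactor $\sqrt{\log(1/\delta)/n}$.

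For term (III), observe that $\hat L(\mb\theta^*)-L(\mb\theta^*) = \tfrac{1}{2n}\sum_i\varepsilon_i^2 - \tfrac{1}{2}\ee[\varepsilon^2]$ plus cross terms involving $\mb x_i^\top\mb\theta^*$. Using the boundedness $|\varepsilon|\le V$ and the $\sigma_w^2$‑subgaussianity of the normalized score $w$, a Bernstein / Hoeffding bound gives a deviation of order $\max\{1,V^2,\mb\theta^{*\top}\mb\Sigma\mb\theta^*\,\sigma_w^2\}\sqrt{\log(1/\delta)/n}$ with probability $1-\delta$, delivering the first term of~\eqref{eq:stab}.

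The main obstacle is the data dependence of the Lipschitz constant, since $B_t=\sup_{\tau\le t}\|\mb\theta_\tau\|$ is itself a random quantity determined by the training sample. Textbook stability results assume a deterministic, sample‑independent Lipschitz bound. To handle this, I would either (a) condition on the event $\{\sup_\tau\|\mb\theta_\tau\|\le B_t\}$ and show that replacing one sample cannot push the coupled trajectory far outside this radius so that the effective Lipschitz constant on the compared trajectories is still $O(V+B_t)$, or (b) apply a peeling / union‑bound argument over dyadic scales of $B_t$. The tildes in $\tilde O$ absorb the logarithmic overhead from this union bound and from the stability‑to‑high‑probability conversion; combining the three bounds and a final union bound over the three failure events yields~\eqref{eq:stab}.
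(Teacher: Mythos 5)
Your proposal matches the route the paper itself indicates for this result: the paper does not reprove Theorem 1 of the cited work, but describes it precisely as an application of the Hardt--Recht--Singer / Feldman--Vondr\'ak stability machinery to the bounded-domain square loss, combined with the standard three-term excess-risk decomposition you write down, and it likewise flags (rather than resolves) the data-dependence of $B_t$ by simply leaving $B_t$ inside the bound. One minor misattribution: the subgaussianity of $w$ is needed to control the magnitude of the loss along the trajectory (hence the $\mb\theta^{*\top}\mb\Sigma\mb\theta^{*}\sigma_w^2$ factor in the first term), not for the concentration of $\hat L(\mb\theta^*)$, which equals $\frac{1}{2n}\|\mb\varepsilon\|^2$ exactly (no cross terms) and needs only $|\varepsilon|\le V$.
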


Theorem~\ref{thm:stab} applies the general stability-based results~\citep{DBLP:conf/icml/HardtRS16, DBLP:conf/colt/FeldmanV19} in the overparameterized linear regression setting, by replacing the bounded Lipschitz condition with the bounded domain condition. A fine-grained analysis~\citep{DBLP:conf/icml/LeiY20} may remove the bounded Lipschitz condition, but it additionally requires zero noise or decaying learning rate, which is different from our setting.
We omit the excess risk decomposition technique adopted in \citet{DBLP:journals/corr/abs-2106-06153} for presentation clarity.

Theorem~\ref{thm:stab} can not directly yield the stability argument in Theorem~\ref{thm:compat_main}, since obtaining a high probability bound of $B_t$ requires a delicate trajectory analysis and is a non-trivial task. Therefore, data-irrelevant methods such as stability-based bounds can not be directly applied to our setting. Even if one can replace $B_t$ in Equation~\ref{eq:stab} with its expectation that is easier to handle (this modification will require adding concentration-related terms, and make the bound in Equation~\ref{eq:stab} looser), we can still demonstrate that Theorem~\ref{main thm} is tighter than the corresponding stability-based analysis by providing a lower bound on $\ee [B_t^2]$, which will imply a lower bound on the righthand side of Equation~\ref{eq:stab}. 

\begin{theorem}\label{thm: stability ours}
Let Assumption~\ref{assumption 1},~\ref{assumption 2},~\ref{assump:fullRank} holds. Suppose $\lambda=O\left(\frac{1}{\sum_{i>0}\lambda_i}\right)$. Suppose the conditional variance of the noise $\varepsilon|\mb x$ is lower bounded by $\sigma_{\varepsilon}^2$. There exists constant $c$, such that with probability at least $1-n e^{-\frac{n}{c}}$, we have for $t=o(n)$,
\begin{equation}
\ee\|\mb\theta_t\|^2=\Omega\left(\frac{\lambda^2 t^2}{n}\left(\sum_{i>k_0}\lambda_i\right)\right)
\end{equation}
\end{theorem}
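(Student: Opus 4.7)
\textbf{Proof proposal for Theorem~\ref{thm: stability ours}.}
The plan is to extract a noise-only lower bound from $\ee\|\mb \theta_t\|^2$, simplify it spectrally via the closed-form of the GD iterate, and then quantify the tail energy of $\mb X\mb X^\top$ in terms of the population tail $\sum_{i>k_0}\lambda_i$. Start from Lemma~\ref{lem: theta_t formula} with $\mb \theta_0=\mb 0$, so $\mb \theta_t=\mb A_t\mb Y$ with $\mb A_t\triangleq\bigl[\mb I-(\mb I-\tfrac{\lambda}{n}\mb X^\top\mb X)^t\bigr]\mb X^\dagger$. Writing $\mb Y=\mb X\mb \theta^*+\mb\varepsilon$ and taking expectation over the noise conditional on $\mb X$, the signal/noise cross term vanishes by Assumption~\ref{assumption 2}, so $\ee[\|\mb\theta_t\|^2\mid \mb X]\ge \ee[\|\mb A_t\mb\varepsilon\|^2\mid \mb X]\ge \sigma_\varepsilon^2 \Tr(\mb A_t^\top\mb A_t)$ using the conditional-variance lower bound.

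Next I would use the commutation identity in Lemma~\ref{pseudoinverse identity} (applied to $\mb X^\top$) to rewrite $\mb A_t=\mb X^\top\bigl[\mb I-(\mb I-\tfrac{\lambda}{n}\mb X\mb X^\top)^t\bigr](\mb X\mb X^\top)^{-1}$, which reduces the trace to a scalar sum: using the SVD $\mb X=\mb U\tilde{\mb \Lambda}^{1/2}\mb W^\top$ and the cyclic property,
\begin{equation*}
\Tr(\mb A_t^\top\mb A_t)=\sum_{i=1}^n \frac{1}{\mu_i}\Bigl[1-\bigl(1-\tfrac{\lambda}{n}\mu_i\bigr)^t\Bigr]^2.
\end{equation*}
I will restrict this sum to indices $i>k_0$. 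By Lemma~\ref{eigenvalue upperbound} together with the definition of $k_0$, $\mu_{k_0+1}\lesssim \sum_{j>k_0}\lambda_j\lesssim \Tr(\mb\Sigma)$ with probability $\ge 1-e^{-n/c}$. Combined with $\lambda\Tr(\mb\Sigma)=O(1)$ and $t=o(n)$, this yields $\tfrac{\lambda t}{n}\mu_i=o(1)$ uniformly for $i>k_0$. In that regime a one-line Taylor/AM-GM argument gives $1-(1-x)^t\ge tx/2$ whenever $tx\le 1$ and $x\in[0,1]$, so each summand is at least $\lambda^2 t^2\mu_i/(4n^2)$, and the problem reduces to lower bounding $\sum_{i>k_0}\mu_i$ by $\Omega\bigl(n\sum_{i>k_0}\lambda_i\bigr)$.

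For the tail eigenvalue sum, I would decompose $\mb X=\mb X_1+\mb X_2$ where $\mb X_1$ is the projection of the rows onto $\mathrm{span}\{\mb v_1,\dots,\mb v_{k_0}\}$ and $\mb X_2$ onto its orthogonal complement. Orthogonality of the $\mb v_i$ makes $\mb X_1\mb X_2^\top=\mb 0$, so $\mb X\mb X^\top=\mb X_1\mb X_1^\top+\mb X_2\mb X_2^\top\succeq \mb X_2\mb X_2^\top$. Weyl's monotonicity then gives $\mu_i(\mb X\mb X^\top)\ge \mu_i(\mb X_2\mb X_2^\top)$, hence
\begin{equation*}
\sum_{i>k_0}\mu_i(\mb X\mb X^\top)\ge \Tr(\mb X_2\mb X_2^\top)-k_0\,\mu_1(\mb X_2\mb X_2^\top).
\end{equation*}
Applying Lemma~\ref{eigenvalue upperbound} to the tail design $\mb X_2$ with $k=0$ bounds $\mu_1(\mb X_2\mb X_2^\top)\lesssim \sum_{i>k_0}\lambda_i+n\lambda_{k_0+1}\lesssim \sum_{i>k_0}\lambda_i$ (the last step uses the defining inequality of $k_0$). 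The trace $\Tr(\mb X_2\mb X_2^\top)=\sum_{j=1}^n\|\mb x_{j,2}\|^2$ is a sum of $n$ i.i.d.\ quadratic forms in sub-Gaussian entries with mean $\sum_{i>k_0}\lambda_i$, so Hanson--Wright (or a Bernstein-type bound) gives $\Tr(\mb X_2\mb X_2^\top)\ge \tfrac{n}{2}\sum_{i>k_0}\lambda_i$ with probability $\ge 1-e^{-n/c}$. Since $k_0\le n/c$ for large $c$, the trace dominates and $\sum_{i>k_0}\mu_i\gtrsim n\sum_{i>k_0}\lambda_i$. Chaining the three lower bounds and using a union bound (which gives the $ne^{-n/c}$ factor from applying Lemma~\ref{eigenvalue upperbound} at several indices) yields the claim.

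The main obstacle is the last step: transferring the population tail sum $\sum_{i>k_0}\lambda_i$ into a lower bound on $\sum_{i>k_0}\mu_i$. The upper-bound version in Lemma~\ref{eigenvalue upperbound} is used throughout the paper, whereas here I need a matching lower bound on a partial sum of sample eigenvalues; the non-trivial step is the decomposition $\mb X=\mb X_1+\mb X_2$ combined with Weyl plus a careful reuse of Lemma~\ref{eigenvalue upperbound} on the tail block to absorb the $k_0\mu_1(\mb X_2\mb X_2^\top)$ loss, which is where the hypothesis $k_0\le n/c$ is consumed.
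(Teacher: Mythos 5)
Your proposal follows the paper's proof exactly through its first two stages: the noise-only lower bound $\ee\|\mb\theta_t\|^2\ge\sigma_\varepsilon^2\sum_{i=1}^n\mu_i^{-1}\bigl[1-(1-\tfrac{\lambda}{n}\mu_i)^t\bigr]^2$ via Lemma~\ref{lem: theta_t formula} and the conditional unbiasedness/variance bound, and the linearization $1-(1-\tfrac{\lambda}{n}\mu_i)^t=\Theta(\tfrac{\lambda}{n}\mu_i t)$ for eigenvalues with $\tfrac{\lambda t}{n}\mu_i=o(1)$. Where you diverge is the final step. The paper keeps \emph{all but} the $O(t)=o(n)$ large eigenvalues (Lemma~\ref{lem: large eigen number}), and lower-bounds each surviving $\mu_i$ by the smallest one, invoking the ready-made bound $\mu_n\ge c\sum_{i>k_0}\lambda_i$ of Lemma~\ref{lem: eigenvalue lowerbound} (the lower-bound half of Lemma~10 in Bartlett et al.), giving $\Omega\bigl(n\cdot\tfrac{\lambda^2t^2}{n^2}\mu_n\bigr)$ immediately. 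You instead keep only indices $i>k_0$ and prove a partial-trace lower bound $\sum_{i>k_0}\mu_i\gtrsim n\sum_{i>k_0}\lambda_i$ from scratch, via the block decomposition $\mb X\mb X^\top\succeq\mb X_2\mb X_2^\top$, Weyl monotonicity, Lemma~\ref{eigenvalue upperbound} applied to the tail block, and a Hanson--Wright bound on $\Tr(\mb X_2\mb X_2^\top)$. Both routes are sound and yield the same rate; the paper's is shorter because the smallest-eigenvalue lower bound is already in its toolkit, while yours is self-contained modulo the concentration step and would in principle survive even if $\mu_n$ itself were not bounded below.

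One caveat: your route genuinely consumes the hypothesis $k_0\le n/c$ for a \emph{sufficiently large} absolute constant $c$, in order to absorb the loss $k_0\,\mu_1(\mb X_2\mb X_2^\top)\lesssim k_0\sum_{i>k_0}\lambda_i$ against the trace $\tfrac{n}{2}\sum_{i>k_0}\lambda_i$. Theorem~\ref{thm: stability ours} as stated carries no such assumption, and the paper's proof needs none because Lemma~\ref{lem: eigenvalue lowerbound} holds unconditionally. This matters for the paper's own application: in Example~\ref{examp: stability} the inverse-polynomial spectrum has $k_0=\Theta(n)$, where the sign of $\Tr(\mb X_2\mb X_2^\top)-k_0\,\mu_1(\mb X_2\mb X_2^\top)$ hinges on unspecified absolute constants and your bound could become vacuous. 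To cover that regime you would either need to enlarge the cutoff index beyond $k_0$ or fall back on the smallest-eigenvalue bound, i.e., revert to the paper's argument.
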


First we prove the following lemma,  bounding the number of large $\mu_i$.
\begin{lemma}\label{lem: large eigen number}
Suppose $t=o(n)$. Let $l$ denote the number of $\mu_i$, such that $\mu_i=\Omega\left(\frac{n}{t}\right)$. Then with probability at least $1-n e^{-\frac{n}{c}}$, we have $l=O(t)$.
\end{lemma}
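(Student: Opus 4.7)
The plan is to translate the assertion ``$l$ eigenvalues $\mu_i$ exceed order $n/t$'' into a statement about the population eigenvalues $\lambda_i$ via Lemma~\ref{eigenvalue upperbound}, and then use the summability of $\{\lambda_i\}$ from Assumption~\ref{assumption 1} to bound $l$.

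First I would set up the event on which Lemma~\ref{eigenvalue upperbound} holds for every index simultaneously. For each fixed $k \in \{0,1,\ldots,n-1\}$, the lemma gives with probability at least $1-e^{-n/c}$ that $\mu_{k+1} \le c\bigl(\sum_{i>k}\lambda_i + n\lambda_{k+1}\bigr)$. A union bound over the $n$ choices of $k$ yields the claimed $1 - n e^{-n/c}$ probability event on which this estimate is valid for all $k$ at once. Restrict attention to this event for the remainder of the argument.

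Next, let the hidden constant in the $\Omega(n/t)$ be $C_0$, so $l = \#\{i : \mu_i \ge C_0 n/t\}$, and in particular $\mu_l \ge C_0 n/t$. Applying the uniform bound at $k = l-1$ gives
\begin{equation*}
\frac{C_0 n}{t} \;\le\; \mu_l \;\le\; c\Bigl(\sum_{i \ge l}\lambda_i + n\lambda_l\Bigr) \;\le\; c\bigl(\Tr(\mb\Sigma) + n\lambda_l\bigr).
\end{equation*}
Since $\Tr(\mb\Sigma) = \sum_{i>0}\lambda_i$ is a fixed constant by Assumption~\ref{assumption 1} while the assumption $t = o(n)$ forces $n/t \to \infty$, the trace term is dominated for $n$ large enough and we obtain $\lambda_l \ge C_0/(2ct)$.

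Finally I would exploit monotonicity together with the bounded total mass. Because $\lambda_1 \ge \lambda_2 \ge \cdots$, every $\lambda_i$ with $i \le l$ is at least $\lambda_l \ge C_0/(2ct)$, so
\begin{equation*}
\frac{C_0\, l}{2ct} \;\le\; \sum_{i=1}^{l} \lambda_i \;\le\; \sum_{i>0}\lambda_i \;\le\; C,
\end{equation*}
which rearranges to $l = O(t)$, as desired. The only subtle point I anticipate is handling the union bound cleanly (since $l$ itself is random we cannot apply the lemma at a single random $k$), but taking the union over all $k \in \{0,\ldots,n-1\}$ resolves this at the cost of the factor $n$ in front of $e^{-n/c}$; nothing in the remaining algebra appears to be a real obstacle.
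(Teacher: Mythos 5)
Your proof is correct, and it rests on the same two ingredients as the paper's: a union bound over $k\in\{0,\dots,n-1\}$ applied to Lemma~\ref{eigenvalue upperbound} (which is exactly where the factor $n$ in $1-ne^{-n/c}$ comes from in the paper as well), and the summability $\sum_{i>0}\lambda_i\le C$ from Assumption~\ref{assumption 1}. The only difference is the final counting step. The paper sums the top $l$ sample eigenvalues, getting $\frac{n}{t}l\lesssim\sum_{k=1}^{l}\mu_k\lesssim\sum_{k=1}^{l}\bigl(\sum_{i\ge k}\lambda_i+n\lambda_k\bigr)\lesssim l+n$, and then invokes $t=o(n)$ to absorb the $l$ on the right and conclude $l=O(t)$. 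You instead apply the bound only at the single index $k=l-1$, use $t=o(n)$ to kill the trace term, transfer the conclusion to the population eigenvalue ($\lambda_l\gtrsim 1/t$), and finish with $l\lambda_l\le\Tr(\mb\Sigma)$. The two arguments are of essentially equal length and strength; yours is arguably slightly cleaner in that it isolates where $t=o(n)$ is needed, while the paper's avoids the (harmless) need to handle $l=0$ and the "for $n$ large enough" qualifier separately. No gap.
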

\begin{proof}
According to Lemma~\ref{eigenvalue upperbound}, we know that with probability at least $1-n e^{-\frac{n}{c}}$, Equation~\ref{eq:eigenvalue upperbound} holds for all $0\le k\le n-1$. Conditioned on this, we have
\begin{equation}
    \frac{n}{t}l\lesssim\sum_{k=1}^l \mu_i\lesssim\sum_{k=1}^l(\sum_{i\ge k}\lambda_i+\lambda_{k}n)\lesssim (l+n)\sum_{i>0}\lambda_i\lesssim l+n.
\end{equation}
Since $t=o(n)$, we have $l=O(t)$ as claimed.
\end{proof}

We also need the result from \citet{bartlett2020benign}, which gives a lowerbound of $\mu_n$.
\begin{lemma}(Lemma 10 in \citet{bartlett2020benign})\label{lem: eigenvalue lowerbound}
For any $\sigma_x$, there exists a constant c, such that with probability at least $1-e^{-\frac{n}{c}}$ we have,
\begin{equation}\label{eq:eigenvalue lowerbound}
    \mu_{n}\ge c\left(\sum_{i>k_0}\lambda_i\right).
\end{equation}
\end{lemma}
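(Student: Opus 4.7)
My plan is to decompose the Gram matrix $\mb X \mb X^\top$ into the contribution of the top $k_0$ eigendirections and that of the tail eigendirections, discard the head as a positive semidefinite lower bound, and show the tail concentrates around a multiple of $\mb I_n$. Using the whitening $\mb x_i = \mb V \mb\Lambda^{1/2}\tilde{\mb x}_i$ from Assumption~\ref{assumption 1}, I write
\begin{equation*}
\mb X \mb X^\top = \sum_{i\ge 1}\lambda_i \tilde{\mb x}_{\cdot i}\tilde{\mb x}_{\cdot i}^\top = \mb A_0 + \mb A_1,
\end{equation*}
where $\mb A_0$ sums over $i\le k_0$ and $\mb A_1 = \sum_{i>k_0}\lambda_i \tilde{\mb x}_{\cdot i}\tilde{\mb x}_{\cdot i}^\top$, with $\tilde{\mb x}_{\cdot i}\in\bR^n$ the $i$-th column of $\tilde{\mb X}$. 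Since $\mb A_0\succeq 0$, it suffices to prove $\lambda_{\min}(\mb A_1)\gtrsim \sum_{i>k_0}\lambda_i$ with the required probability.

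Under Assumption~\ref{assumption 1}, the entries of $\tilde{\mb X}$ are jointly independent, mean-zero, unit-variance, and $\sigma_x$-subgaussian, so $\ee[\mb A_1] = \bigl(\sum_{i>k_0}\lambda_i\bigr)\mb I_n$. The goal reduces to showing $\|\mb A_1-\ee[\mb A_1]\|\le \tfrac{1}{2}\sum_{i>k_0}\lambda_i$ with probability at least $1-e^{-n/c}$, which I would establish by an $\varepsilon$-net plus Bernstein argument. Pick a $1/4$-net $\mathcal{N}$ of $S^{n-1}$ with $|\mathcal{N}|\le 9^n$; for each fixed $\mb v\in\mathcal N$,
\begin{equation*}
\mb v^\top \mb A_1 \mb v - \sum_{i>k_0}\lambda_i \;=\; \sum_{i>k_0}\lambda_i\bigl[(\mb v^\top \tilde{\mb x}_{\cdot i})^2-1\bigr],
\end{equation*}
which is a weighted sum of independent centered sub-exponential variables, each with $\psi_1$-norm $\lesssim \sigma_x^2$ (since $\mb v^\top\tilde{\mb x}_{\cdot i}$ is $\sigma_x$-subgaussian, its square is sub-exponential with $\psi_1$-norm $\lesssim \sigma_x^2$). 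Bernstein's inequality then yields a pointwise tail of the form $2\exp\bigl(-c\min\{t^2/\sum_{i>k_0}\lambda_i^2,\ t/\lambda_{k_0+1}\}\bigr)$.

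The decisive step is to take $t=\tfrac{1}{2}\sum_{i>k_0}\lambda_i$ and convert both branches of the minimum into $\Omega(n)$ via the definition of $k_0$. Directly, $t/\lambda_{k_0+1}\ge n/(2c_0)$ since $\lambda_{k_0+1}\le c_0\sum_{i>k_0}\lambda_i/n$. For the quadratic branch, $\sum_{i>k_0}\lambda_i^2\le \lambda_{k_0+1}\sum_{i>k_0}\lambda_i\le c_0\bigl(\sum_{i>k_0}\lambda_i\bigr)^2/n$, giving $t^2/\sum_{i>k_0}\lambda_i^2\ge n/(4c_0)$. Thus each pointwise deviation probability is $\le 2e^{-n/c'}$; a union bound over $\mathcal N$ multiplies this by at most $9^n = e^{n\log 9}$, which is absorbed by shrinking $c'$ appropriately. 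The standard symmetric net-to-sphere lifting $\|\mb M\|\le 2\sup_{\mb v\in\mathcal N}|\mb v^\top \mb M\mb v|$ then upgrades the uniform bound on $\mathcal N$ to the operator norm on $S^{n-1}$, yielding $\mu_n\ge \lambda_{\min}(\mb A_1)\ge \tfrac{1}{2}\sum_{i>k_0}\lambda_i$ with probability $\ge 1-e^{-n/c}$.

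The main obstacle is the sub-exponential bookkeeping: one must verify that the per-term $\psi_1$ parameter is actually governed by $\lambda_{k_0+1}$ (not by the full tail sum), so that the defining inequality of $k_0$ feeds cleanly into both Bernstein branches and turns each into $\Omega(n)$. Once that is handled, the rest is routine; the only remaining care is to pick a single universal constant $c$ that simultaneously tames the Bernstein constant, the $\log 9$ from the net cardinality, and the factor $\tfrac{1}{2}$ appearing in the final conclusion.
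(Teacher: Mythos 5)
The paper does not prove this statement at all: it is imported verbatim as Lemma~10 of \citet{bartlett2020benign}, so there is no internal proof to compare against. Your self-contained argument is essentially the standard proof of that result and is correct in its main lines: the decomposition $\mb X\mb X^\top=\tilde{\mb X}\mb\Lambda\tilde{\mb X}^\top=\mb A_0+\mb A_1$ is valid under the whitening of Assumption~\ref{assumption 1}, discarding $\mb A_0\succeq 0$ via Weyl is fine, $\ee[\mb A_1]=\bigl(\sum_{i>k_0}\lambda_i\bigr)\mb I_n$ holds because the whitened entries are jointly independent with unit variance, and your use of the definition of $k_0$ to turn both Bernstein branches into $\Omega(n)$ is exactly the right mechanism.

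The one point you gloss over is the interplay between the union bound and the constants, and your phrasing there is slightly off. You say the $9^n$ net cardinality "is absorbed by shrinking $c'$ appropriately," but $c'$ is not a free parameter: the pointwise exponent is $-c_B\,n/(4c_0)$ where $c_B$ is the fixed Bernstein/subexponential constant (depending on $\sigma_x$) and $c_0$ is the constant in the definition of $k_0$. For the union bound to close you need $c_B/(4c_0)>\log 9$ plus slack, i.e.\ you must \emph{choose $c_0$ small enough} (equivalently, require $r_{k_0}(\mb\Sigma)\ge bn$ for a sufficiently large $b$). This is legitimate here because the paper's theorems existentially quantify over $c_0$ ("there exist constants $c,c_0,\dots$"), and it is precisely the condition under which Bartlett et al.'s Lemma~10 gives the lower bound; but it should be stated as a constraint on $c_0$ rather than as freedom in the concentration constant. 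With that adjustment your proof is complete.
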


We are now ready to prove Theorem~\ref{thm: stability ours}.
\begin{proof}
We begin with the calculation of $\|\mb \theta_t\|^2$. By Lemma~\ref{lem: theta_t formula}, the conditional unbiasedness of noise in Assumption~\ref{assumption 2} and the noise variance lower bound, we have
\begin{align}\label{eq: theta lower bound}
    \begin{aligned}
        \ee\|\mb\theta_t\|^2&=\left\|\left(\mb I-\frac{\lambda}{n}\mb X^{\top}\mb X\right)^t(\mb \theta_0-\mb X^{\dagger}\mb Y)+\mb X^{\dagger}Y\mb \right\|^2\\
        &=\ee\left\|\left(\mb I-\left(\mb I-\frac{\lambda}{n}\mb X^{\top}\mb X\right)^t\right)\mb X^{\dagger}\left(\mb X\mb \theta^*+\mb \varepsilon\right)\right\|^2\\
        &=\ee\left\|\left(\mb I-\left(\mb I-\frac{\lambda}{n}\mb X^{\top}\mb X\right)^t\right)\mb X^{\dagger}\mb X\mb \theta^*\right\|^2+\ee\left\|\left(\mb I-\left(\mb I-\frac{\lambda}{n}\mb X^{\top}\mb X\right)^t\right)\mb X^{\dagger} \mb \varepsilon\right\|^2\\
        &\ge \ee\left\|\left(\mb I-\left(\mb I-\frac{\lambda}{n}\mb X^{\top}\mb X\right)^t\right)\mb X^{\dagger} \mb \varepsilon\right\|^2\\
        &=\ee\Tr\left[\left(\mb I-\left(\mb I-\frac{\lambda}{n}\mb X^{\top}\mb X\right)^t\right)\mb X^{\dagger} \mb \varepsilon \mb \varepsilon^\top\mb  X^{\dagger,\top}\left(\mb I-\left(\mb I-\frac{\lambda}{n}\mb X^{\top}\mb X\right)^t\right)\right]\\
        &\ge \sigma_{\varepsilon}^2\ee\Tr\left[\left(\mb I-\left(\mb I-\frac{\lambda}{n}\mb X^{\top}\mb X\right)^t\right)\mb X^{\dagger}\mb X^{\dagger,\top}\left(\mb I-\left(\mb I-\frac{\lambda}{n}\mb X^{\top}\mb X\right)^t\right)\right]\\
        &=\sigma_{\varepsilon}^2\sum_{i=1}^n \frac{[1-(1-\frac{\lambda}{n}\mu_i)^t]^2}{\mu_i}.
    \end{aligned}
\end{align}

When $\mu_i=o\left(\frac{n}{t}\right)$, we have 
\begin{equation}
    1-(1-\frac{\lambda}{n}\mu_i)^t=1-1+\frac{\lambda}{n}\mu_it+O\left(\left(\frac{\lambda}{n}\mu_i t\right)^2\right)=\Theta\left(\frac{\lambda}{n}\mu_it\right).
\end{equation}
Plugging it into Equation~\ref{eq: theta lower bound} and then use Lemma~\ref{lem: large eigen number},~\ref{lem: eigenvalue lowerbound}, we know that under the high probability event in Lemma ~\ref{lem: large eigen number} and ~\ref{lem: eigenvalue lowerbound},
\begin{equation}
    \ee\|\mb\theta_t\|^2=\Omega\left((n-l)\frac{\lambda^2}{n^2}\mu_n t^2\right)=\Omega\left(\frac{\lambda^2}{n}\mu_n t^2\right)=\Omega\left(\frac{\lambda^2 t^2}{n}\left(\sum_{i>k_0}\lambda_i\right)\right)
\end{equation}
\end{proof}

Therefore, the stability-based bound, i.e., the right hand side of Equation~\ref{eq:stab}, can be lower bounded in expectation as
$\Omega\left(\frac{\lambda^3 t^3}{n^2}\sum_{i>k_0}\lambda_i\right)$. This implies that the stability-based bound is vacuous when $t=\Omega\left(\frac{n^{\frac{2}{3}}\left(\sum_{i>k_0}\lambda_i\right)^{-\frac{1}{3}}}{\lambda}\right)$. Thus, stability-based methods will provably yield smaller compatibility region than $\left(\omega\left(\frac{1}{\lambda}\right),o\left(\frac{n}{\lambda}\right)\right)$ in Theorem~\ref{thm:compat_main} when $\sum_{i>k_0}\lambda_i$ is not very small, as demonstrated in the examples below.

\begin{example}\label{examp: stability}
Let Assumption~\ref{assumption 1},~\ref{assumption 2},~\ref{assump:fullRank} holds. Assume without loss of generality that $\lambda=\Theta(1)$.
We have the following examples:
\begin{enumerate}
    \item \textbf{\emph{(Inverse Polynominal).}} If the spectrum of $\mb \Sigma$ satisfies $$\lambda_k=\frac{1}{k^\alpha},$$ for some $\alpha>1$, we derive that $k_0=\Theta(n)$, $\sum_{i>k_0}\lambda_i=\Theta(\frac{1}{n^{\alpha-1}})$. Therefore, the stability bound in Theorem~\ref{thm:stab} is vacuous when 
    $$t=\Omega\left(n^{\frac{\alpha+1}{3}}\right),$$
    which is outperformed by the region in Theorem~\ref{main thm} when $\alpha<2$.
    
    \item \textbf{\emph{(Inverse Log-Polynominal).}} If the spectrum of $\mb \Sigma$ satisfies $$\lambda_k=\frac{1}{k \log^\beta (k+1)},$$ for some $\beta>1$
    , we derive that $k_0=\Theta\left(\frac{n}{\log n}\right)$, $\sum_{i>k_0}\lambda_i=\tilde{\Theta}(1)$. Therefore, the stability bound in Theorem~\ref{thm:stab} is vacuous when 
    $$t=\tilde{\Omega}\left(n^{\frac{2}{3}}\right),$$
    which is outperformed by the region in Theorem~\ref{main thm}.

    \item \textbf{\emph{(Constant).}} If the spectrum of $\mb \Sigma$ satisfies $$\lambda_k=\frac{1}{n^{1+\varepsilon}},1\le k \le n^{1+\varepsilon},$$
    for some $\varepsilon>0$, we derive that $k_0=0$, $\sum_{i>k_0}\lambda_i=1$. Therefore, the stability bound in Theorem~\ref{thm:stab} is vacuous when 
    $$t=\Omega\left(n^{\frac{2}{3}}\right),$$
    which is outperformed by the region in Theorem~\ref{main thm}.
    
    \item \textbf{\emph{(Piecewise Constant).}} If the spectrum of $\mb \Sigma$ satisfies\\ $$\lambda_k=\begin{cases}\frac{1}{s}&1\le k\le s,\\ \frac{1}{d-s}& s+1\le k\le d,\end{cases}$$ where $s= n^r,d=n^q,0<r\le 1,q\ge 1$, we derive that $k_0=n^r$, $\sum_{i>k_0}\lambda_i=1$. Therefore, the stability bound in Theorem~\ref{thm:stab} is vacuous when 
    $$t=\Omega\left(n^{\frac{2}{3}}\right),$$
    which is outperformed by the region in Theorem~\ref{main thm}.
\end{enumerate}
\end{example}

\subsection{Comparisons with Uniform Convergence Bounds}\label{app: uniform}
We first state a standard bound on the Rademacher complexity of linear models.

\begin{theorem}[Theorem in~\citet{DBLP:books/daglib/0034861}]
Let $S\subseteq \{\mb x: \|x\|_2\le r\}$ be a sample of size $n$ and let $\cH=\{x\mapsto \left<w,x\right>:\|w\|_2\le \Lambda\}$. Then, the empirical Rademacher complexity of $\cH$ can be bounded as follows:
\begin{equation}
    \hat{\cR}_S(\cH)\le \sqrt{\frac{r^2\Lambda^2}{n}}.
\end{equation}
\end{theorem}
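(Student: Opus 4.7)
The plan is to unwind the definition of empirical Rademacher complexity and then successively apply Cauchy-Schwarz, Jensen's inequality, and a direct second-moment calculation exploiting independence of the Rademacher variables. Specifically, I would start by writing
\begin{equation*}
\hat{\cR}_S(\cH) = \frac{1}{n}\,\ee_{\sigma}\!\left[\sup_{\|w\|_2\le \Lambda}\sum_{i=1}^n \sigma_i \langle w, \mb x_i\rangle\right] = \frac{1}{n}\,\ee_{\sigma}\!\left[\sup_{\|w\|_2\le \Lambda}\Big\langle w,\sum_{i=1}^n \sigma_i \mb x_i\Big\rangle\right],
\end{equation*}
where the $\sigma_i$ are i.i.d.\ Rademacher signs. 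The inner supremum over the Euclidean ball of radius $\Lambda$ is attained at $w = \Lambda \cdot (\sum_i \sigma_i \mb x_i)/\|\sum_i \sigma_i \mb x_i\|_2$, giving $\sup_w \langle w,\sum_i\sigma_i\mb x_i\rangle = \Lambda\,\|\sum_i \sigma_i \mb x_i\|_2$ by Cauchy-Schwarz.

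Next I would push the expectation inside the square root via Jensen's inequality, since $u\mapsto \sqrt{u}$ is concave. This yields
\begin{equation*}
\hat{\cR}_S(\cH) \le \frac{\Lambda}{n}\,\ee_{\sigma}\!\left[\Big\|\sum_{i=1}^n \sigma_i \mb x_i\Big\|_2\right] \le \frac{\Lambda}{n}\,\sqrt{\ee_{\sigma}\!\left[\Big\|\sum_{i=1}^n \sigma_i \mb x_i\Big\|_2^2\right]}.
\end{equation*}
Then I would expand the squared norm as a double sum $\sum_{i,j} \sigma_i\sigma_j \langle \mb x_i,\mb x_j\rangle$ and take expectation: independence of the $\sigma_i$ combined with $\ee[\sigma_i\sigma_j] = \delta_{ij}$ collapses this to $\sum_{i=1}^n \|\mb x_i\|_2^2$, which is at most $nr^2$ by the sample-boundedness assumption.

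Combining the pieces gives $\hat{\cR}_S(\cH) \le \frac{\Lambda}{n}\sqrt{n r^2} = \sqrt{r^2\Lambda^2/n}$, completing the argument. There is no substantive obstacle here: each of the three tools (Cauchy-Schwarz to linearize the supremum, Jensen to move the expectation under the square root, and orthogonality of Rademacher variables for the second moment) is elementary. The only mild subtlety is being careful about the direction of Jensen's inequality, and noting that the bound is dimension-free, so it applies verbatim even when $\mb x_i$ lives in the infinite-dimensional Hilbert space $\mathcal{H}$ used elsewhere in the paper.
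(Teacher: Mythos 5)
Your proof is correct and is exactly the standard argument for this result, which the paper simply cites from \citet{DBLP:books/daglib/0034861} without reproving: Cauchy--Schwarz to evaluate the supremum over the ball, Jensen to pass to the second moment, and orthogonality of the Rademacher signs to collapse the double sum to $\sum_i\|\mb x_i\|_2^2\le nr^2$. Your closing remark that the bound is dimension-free is also apt, since the paper applies it in a possibly infinite-dimensional Hilbert space.
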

Furthermore, Talagrand's Lemma (See Lemma 5.7 in ~\citet{DBLP:books/daglib/0034861}) indicates that 
\begin{equation}
    \hat{\cR}_S(l\circ\cH)\le L \hat{\cR}_S(\cH)=\frac{\Theta(\Lambda^2)}{\sqrt{n}},
\end{equation}
where $L=\Theta(\Lambda)$ is the Lipschitz coefficient of the square loss function $l$ in our setting. Therefore, the Rademacher generalization bound is vacuous when $\Lambda=\Omega(n^\frac{1}{4})$. By Theorem~\ref{thm: stability ours}, we know that $\ee\|\mb\theta_t\|^2=\Omega(n^\frac{1}{2})$ when $t=\Omega\left(\frac{n^{\frac{3}{4}}}{\lambda\left(\sum_{i>k_0}\lambda_i\right)^{\frac{1}{2}}}\right)$. A similar comparison as in Example~\ref{examp: stability} can demonstrate that uniform stability arguments will provably yield smaller compatibility region than that in Theorem~\ref{main thm} for example distributions.

\subsection{Comparison with Previous Works on Early Stopping}
\label{app: early stopping}
A line of works focuses on deriving the excess risk guarantee of linear regression or kernel regression with early stopping (stochastic) gradient descent.
We refer to Section~\ref{sec: related works} for details.
Here we compare our results with some most relevant works, including~\citep{yao2007early,DBLP:journals/jmlr/LinR17,DBLP:conf/nips/Pillaud-VivienR18}. 

\textbf{Comparison with \citet{yao2007early}.} \citet{yao2007early} study kernel regression with early stopping gradient descent. Their approaches are different from ours in the following aspects. 

Firstly, the assumptions used in the two approaches are different, due to different goals and techniques. \citet{yao2007early} assume that the input feature and data noise have bounded norm (see Section 2.1 in \citet{yao2007early}), while we require that the input feature is subgaussian with independent entries.  

Furthermore, although \citet{yao2007early} obtain a minimax bound in terms of the convergence rate, it is suboptimal in terms of compatibility region. Specifically, The results in our paper show a region like $(0, n)$ while the techniques \citet{yao2007early} can only lead to a region like $(0, \sqrt{n})$. See Proof of the Main Theorem in section 2 in \citet{yao2007early} for details. Such differences come from different goals of the two approaches, where \citet{yao2007early} focus on providing the optimal early-stopping time while we focus on providing a larger time region in which the loss is consistent.

\textbf{Comparison with \citet{DBLP:journals/jmlr/LinR17}.} 
\citet{DBLP:journals/jmlr/LinR17} study stochastic gradient descent with arbitrary batchsize, which is reduced to full batch gradient descent when setting the batchsize to sample size $n$. Their results are different from ours, since they require the boundness assumption, and focus more on the optimal early stopping time rather than the largest compatibility region, in the same spirit of~\citet{yao2007early}. Specifically,~\citet{DBLP:journals/jmlr/LinR17} derive a region like $(0,n^{\frac{\zeta+1}{2\zeta+\gamma}})$, where $\zeta$ and $\gamma$ are problem dependent constants (See Theorem 1 in~\citet{DBLP:journals/jmlr/LinR17} for details). The following examples demonstrate that this paper's results yield larger regions for a wide range of distribution classes. 

\begin{example}
\textbf{\emph{(Inverse Polynominal).}} If the spectrum of $\mb \Sigma$ satisfies $$\lambda_k=\frac{1}{k^\alpha},$$ for some $\alpha>1$. For this distribution, we have $\zeta=\frac{1}{2}$, $\gamma=\frac{1}{\alpha}$, and their region is $(0,n^{\frac{3\alpha}{2\alpha+1}})$, which is smaller than $(0,n^{\frac{3\alpha+1}{2\alpha+1}})$ given in Example~\ref{eg: non-constant Example}. 
\end{example}

\begin{example}
\textbf{\emph{(Inverse Log-Polynominal).}} If the spectrum of $\mb \Sigma$ satisfies $$\lambda_k=\frac{1}{k \log^\beta (k+1)},$$ for some $\beta>1$. 
For this distribution, we have $\zeta=\frac{1}{2}$, $\gamma=1$, and their region is $(0,n^{\frac{3}{4}})$, which is smaller than $(0,n)$ given Corollary~\ref{sqrt n}. 
\end{example}

\subsection{Calculations in Example~\ref{Allexample}}
We calculate the quantities $r(\Sigma),k_0,k_1,k_2$ for the example distributions in \ref{Allexample}. The results validate that $k_1$ is typically a much smaller quantity than $k_0$.
\begin{enumerate}
\item \textbf{Calculations for $\lambda_k=\frac{1}{k^\alpha},\alpha>1$.}

Define $r_k(\mb \Sigma)=\frac{\sum_{i>k}\lambda_i}{\lambda_{k+1}}$ as in \citet{bartlett2020benign}. Since $\sum_{i>k}\frac{1}{i^\alpha}=\Theta(\frac{1}{k^{\alpha-1}})$, we have $r_k(\mb \Sigma)=\Theta\left(\frac{\frac{1}{k^{\alpha-1}}}{\frac{1}{k^{\alpha}}}\right)=\Theta(k)$. Hence, $k_0=\Theta(n)$
\footnote{The calculations for $k_0, k_1$ and $k_2$ in this section only apply when $n$ is sufficiently large.}, and the conditions of theorem \ref{main thm} is satisfied.

As $\sum_{i>0}\lambda_i<\infty$, By its definition we know that $k_1$ is the smallest $l$ such that $\lambda_{l+1}=O(\frac{1}{n})$. Therefore, $k_1=\Theta(n^{\frac{1}{\alpha}})$. 

\item \textbf{Calculations for $\lambda_k=\frac{1}{k \log^\beta (k+1)},\beta>1$.}

$\sum_{i>k}\frac{1}{i\log^\beta (i+1)}=\Theta(\int_{k}^{\infty}\frac{1}{x\log^\beta x}dx)=\Theta(\frac{1}{\log^{\beta-1}k})$, which implies $r_k(\mb \Sigma)=k\log k$. Solving $k_0\log k_0\ge \Theta(n)$, we have $k_0=\Theta(\frac{n}{\log n})$.

By the definition of $k_1$, we know that $k_1$ is the smallest $l$ such that $l\log^\beta (l+1)\ge \Theta(n)$. Therefore, $k_1=\Theta(\frac{n}{\log^\beta n})$. 

\item \textbf{Calculations for $\lambda_i=\frac{1}{n^{1+\varepsilon}},1\le i \le n^{1+\varepsilon},\varepsilon>0$.}

Since $r_0(\mb \Sigma)=n^{1+\varepsilon}$, we have $k_0=0$. By the definition of $k_1$, we also have $k_1=0$.

\item \textbf{Calculations for $\lambda=\begin{cases}\frac{1}{s}&1\le k\le s\\ \frac{1}{d-s}& s+1\le k\le d\end{cases},s= n^r,d=n^q,0<r\le 1,q\ge 1$.}

For $0\le k<s$, $r_k(\mb \Sigma)=\Theta(\frac{1}{\frac{1}{s}})=\Theta(n^r)=o(n)$, while $r_s(\mb \Sigma)=\frac{1}{\frac{1}{d-s}}=\Theta(n^q)=\omega(n)$. Therefore, $k_0=s=n^r$.

Similarly, noting that $\lambda_k=\frac{1}{n^r}=\omega(n)$ for $0\le k<s$ and $\lambda_s=\Theta(\frac{1}{n^d})=o(n)$, we know that $k_1=s=n^r$. 
\end{enumerate}

\subsection{Calculations for \texorpdfstring{$\lambda_k=1/k^\alpha,\alpha>1$}{Lg} in Section \ref{further examples}}
Set $c(t,n)=\frac{1}{n^\beta}$, where $\beta>0$ will be chosen later. First we calculate $k_2$ under this choice of $c(t,n)$. Note that $\sum_{i>k}\frac{1}{i^\alpha}=\Theta\left(\frac{1}{k^{\alpha-1}}\right)$. Therefore, $k_2$ is the smallest $k$ such that $\frac{1}{k^{\alpha-1}}+\frac{n}{k^\alpha}=O(\frac{1}{n^\beta})$. For the bound on $V(\theta_t)$ to be consistent, we need $k_2=o(n)$. Hence, $\frac{1}{k^{\alpha-1}}=O(\frac{n}{k^\alpha})$, which implies $k_2=n^{\frac{\beta+1}{\alpha}}$.

Plugging the value of $c(t,n)$ and $k_2$ into our bound, we have 
$$V(\theta_t)=O\left(n^{(\frac{1}{\alpha}-1)+(\frac{1}{\alpha}+1)\beta}+n^{2\tau-\beta-2}\right)$$
which attains its minimum $\Theta(n^{\frac{2\alpha\tau-3\alpha+2\tau-1}{2\alpha+1}})$ at $\beta=\Theta\left(\frac{2\alpha\tau-\alpha-1}{2\alpha+1}\right)$.

For $V(\theta)=O(1)$, we need $\tau\le \frac{3\alpha+1}{2\alpha+2}$. For $\beta\ge 0$, we need $\tau\ge\frac{\alpha+1}{2\alpha}$. Putting them together gives the range of $t$ in which the above calculation applies.

\subsection{Discussions on \texorpdfstring{$\cD_n$}{Lg}}
In this paper, the distribution $\cD$ is regarded as a sequence of distribution $\{ \cD_n\}$ which may dependent on sample size $n$. 
The phenomenon comes from overparameterization and asymptotic requirements.  
In the definition of compatibility, we require $n \to \infty$.
In this case, overparameterization requires that the dimension $p$ (if finite) cannot be independent of $n$ since $n \to \infty$ would break overparameterization. 
Therefore, the covariance $\Sigma$ also has $n$-dependency, since $\Sigma$ is closely related to $p$.

Several points are worth mentioning: (1) Similar arguments generally appear in related works, for example, in~\citet{bartlett2020benign} when discussing the definition of benign covariance. (2) One can avoid such issues by considering an infinite dimensional feature space. This is why we discuss the special case $p = \infty$ in Theorem~\ref{thm:compat_main}. (3) If $p$ is a fixed finite constant that does not alter with $n$, the problem becomes underparameterized and thus trivial to get a consistent generalization bound via standard concentration inequalities.

\section{Additional Experiment Results}\label{section additional experiments}

\subsection{Details for Linear Regression Experiments}
In this section, we provide the experiment details for linear regression experiments  and present additional empirical results.

The linear regression experiment in Figure~\ref{fig:intro} follows the setting described in section \ref{sec: exp}.
Although the final iterate does not interpolate the training data, the results suffice to demonstrate the gap between the early-stopping and final-iterate excess risk. The training plot for different covariances are given in Figure~\ref{fig:linear_appen}.

\begin{figure*}[t]  
\centering
\subfigure[$\lambda_i=\frac{1}{i}$]{
\begin{minipage}{0.29\linewidth}
\centerline{\includegraphics[width=1\textwidth]{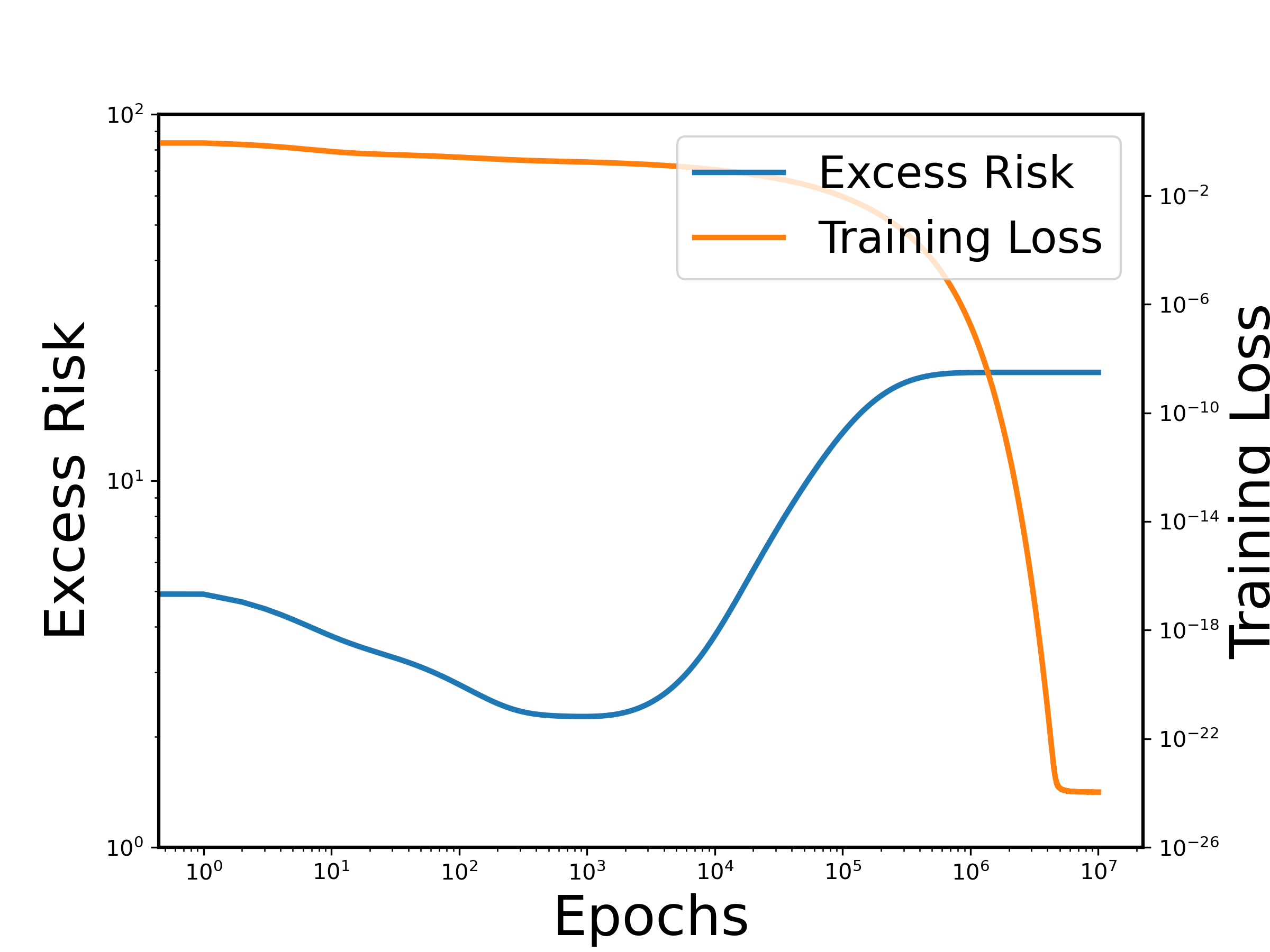}}
\end{minipage}
}
\quad
\subfigure[$\lambda_i=\frac{1}{i^2}$]{
\begin{minipage}{0.29\linewidth}
\centerline{\includegraphics[width=1\textwidth]{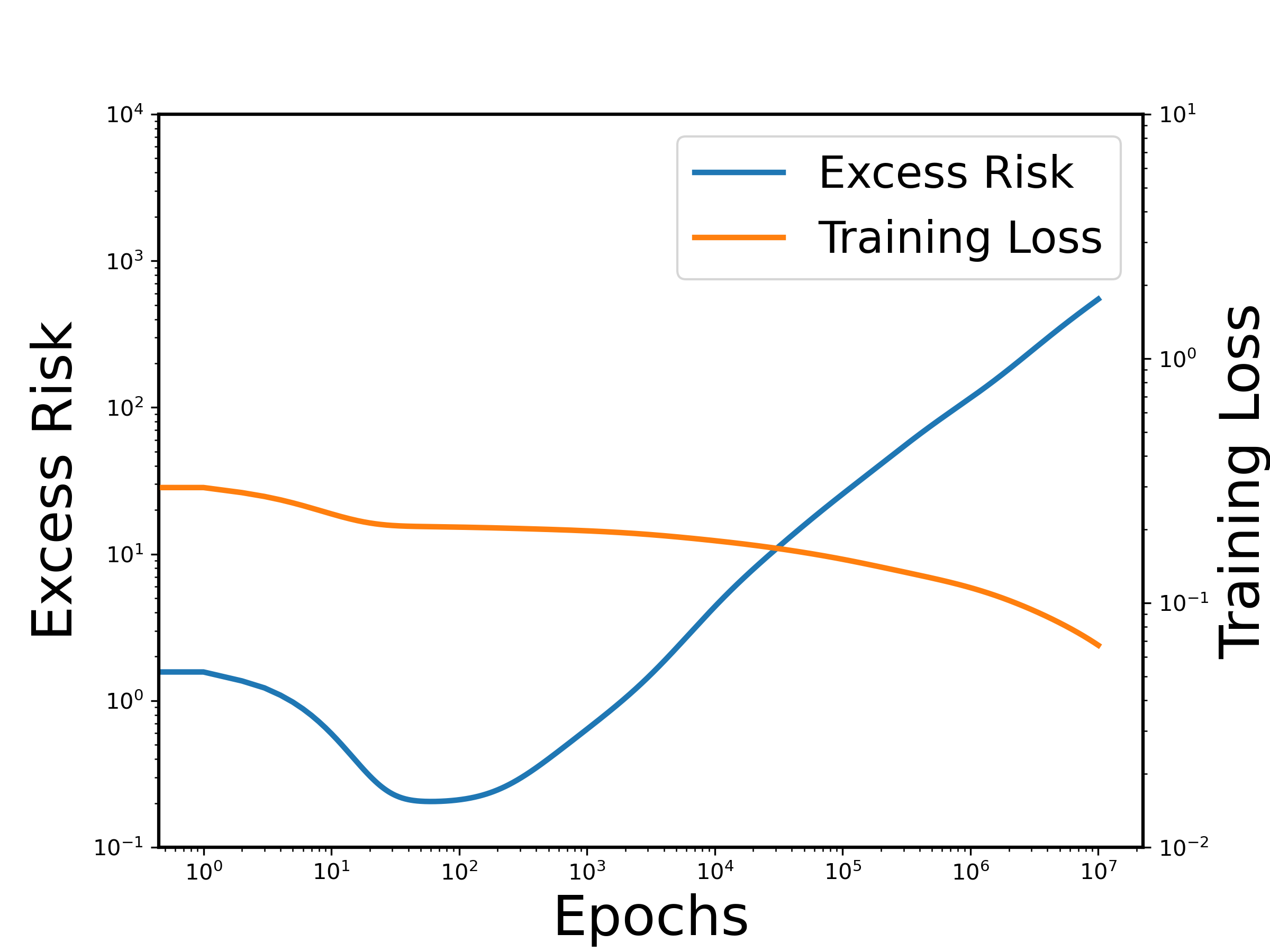}}
\end{minipage}}
\quad
\subfigure[$\lambda_i=\frac{1}{i^3}$]{
\begin{minipage}{0.29\linewidth}
\centerline{\includegraphics[width=1\textwidth]{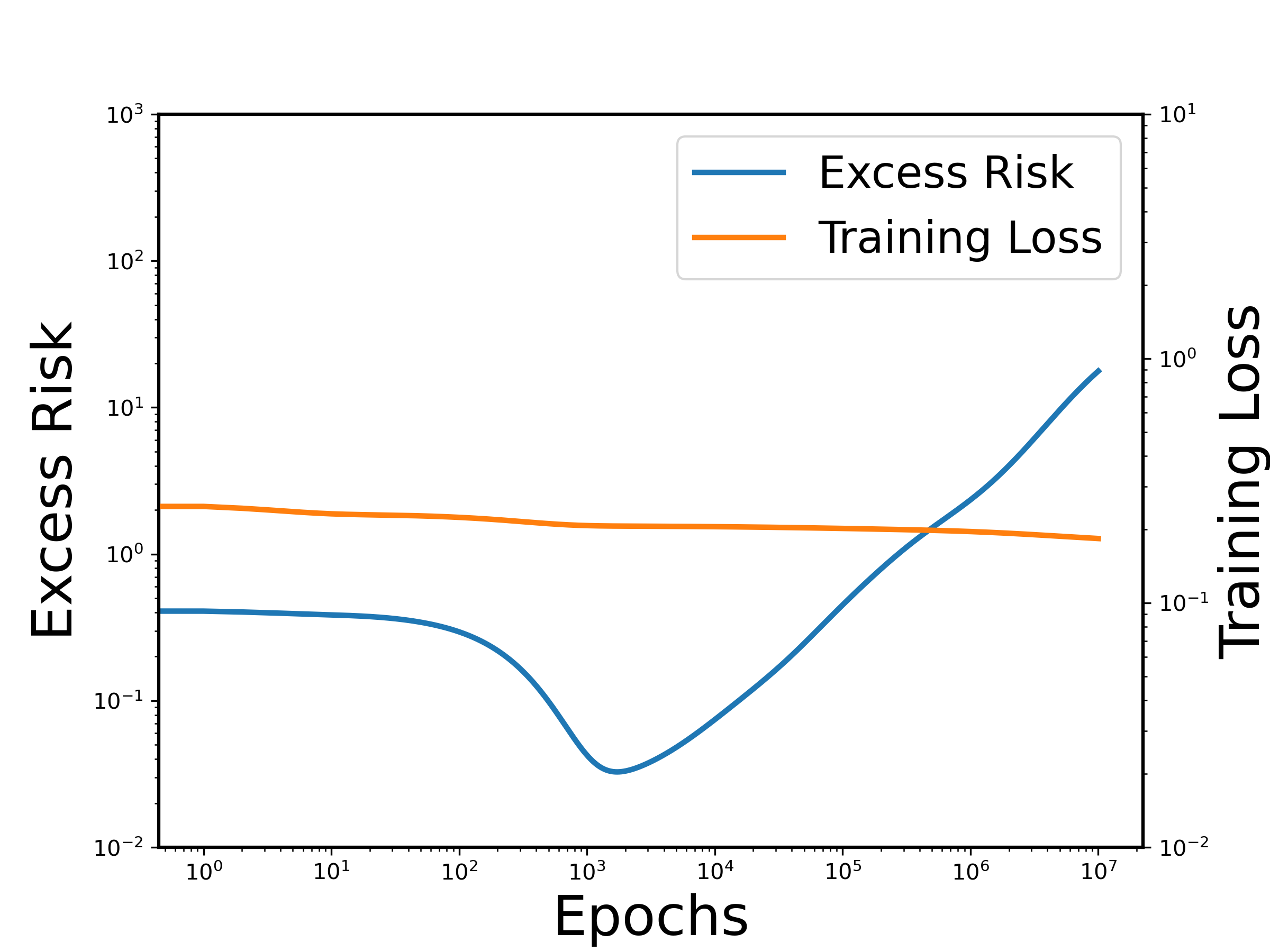}}
\end{minipage}
}
\\
\subfigure[$\lambda_i=\frac{1}{i\log(i+1)}$]{
\begin{minipage}{0.29\linewidth}
\centerline{\includegraphics[width=1\textwidth]{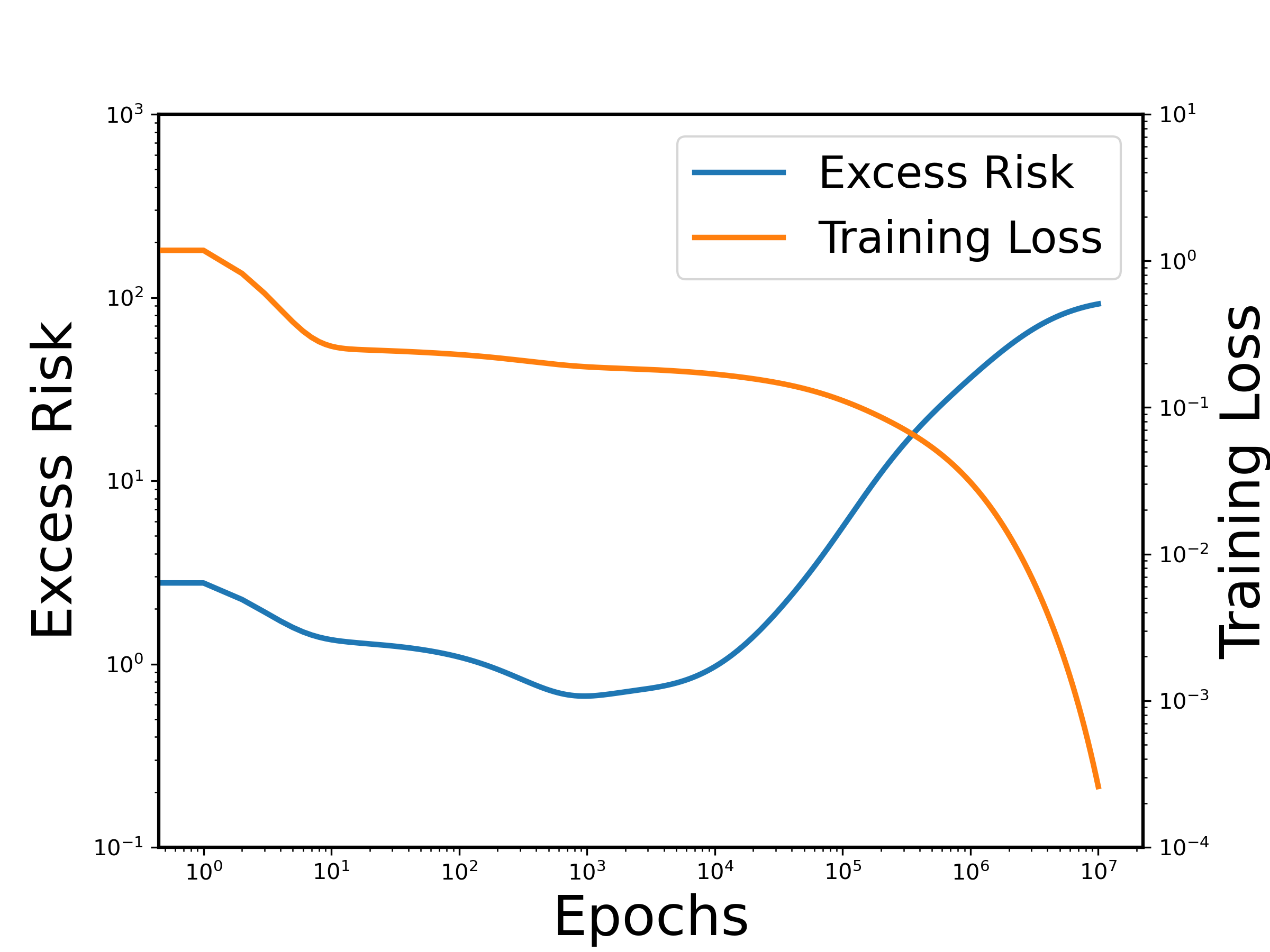}}
\end{minipage}}
\quad
\subfigure[$\lambda_i=\frac{1}{i\log^2(i+1)}$ ]{
\begin{minipage}{0.29\linewidth}
\centerline{\includegraphics[width=1\textwidth]{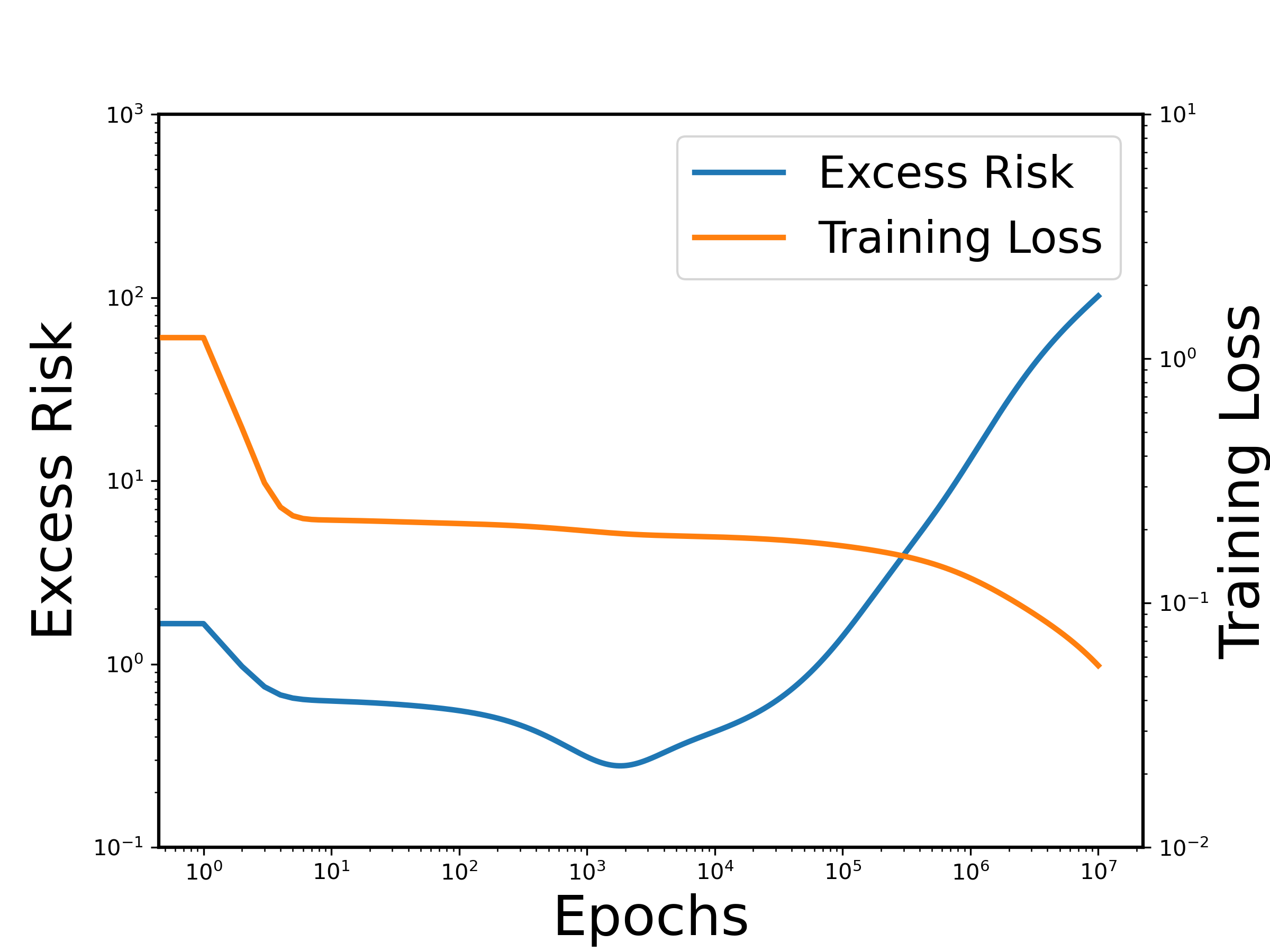}}
\end{minipage}
}
\quad
\subfigure[$\lambda_i=\frac{1}{i\log^3(i+1)}$]{
\begin{minipage}{0.29\linewidth}
\centerline{\includegraphics[width=1\textwidth]{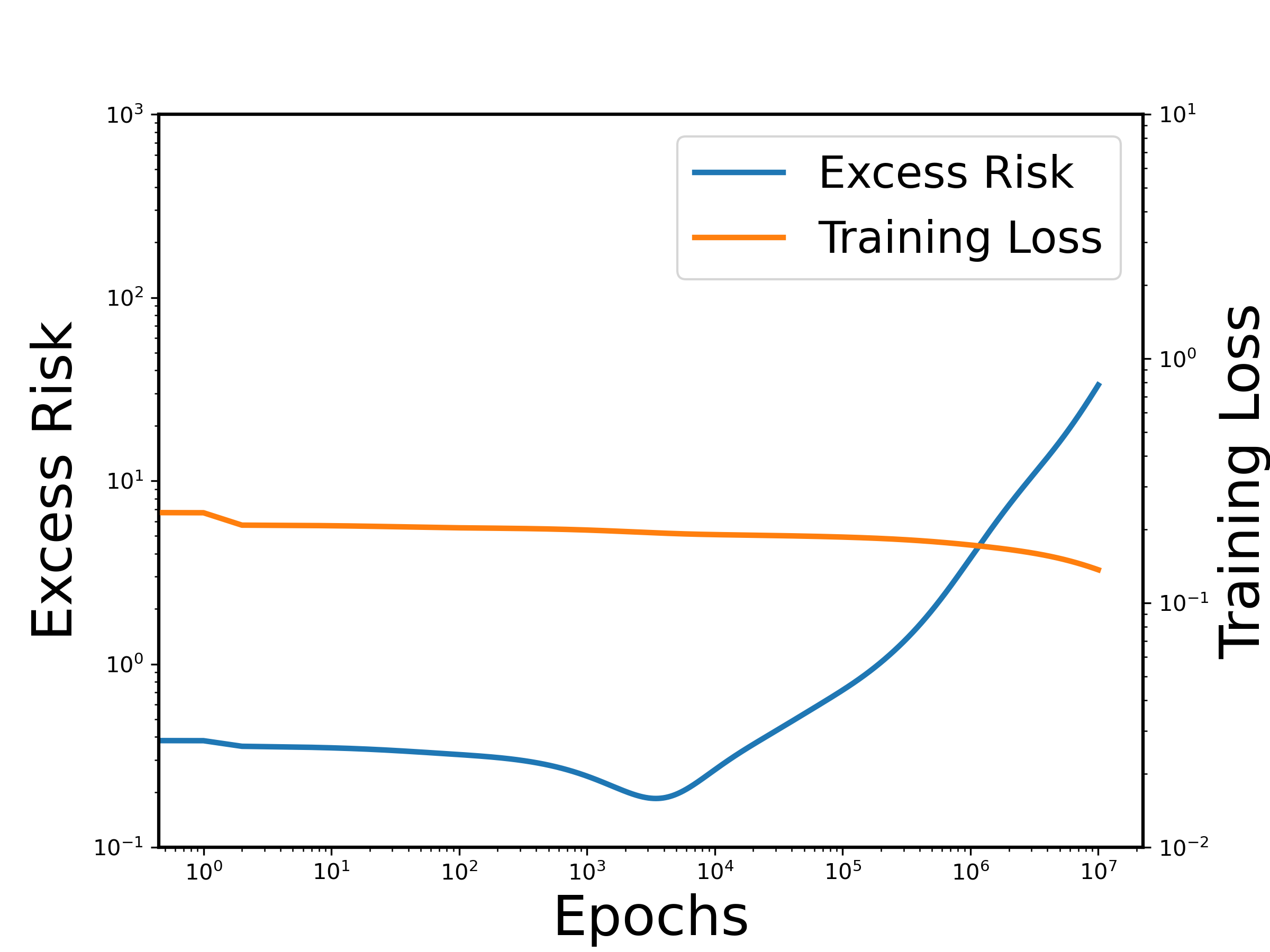}}
\end{minipage}}

\caption{\textbf{The training plot for overparameterized linear regression with different covariances using GD.}}
\label{fig:linear_appen}
\end{figure*}

Next, we provide the experiment results of sample sizes $n=50$, $n=200$ and feature dimensions $p=500$, $p=2000$. The settings are the same as described above, except for the sample size. The optimal excess risk and min-norm excess risk are provided in Table~\ref{sample-table 50},~\ref{sample-table 200},~\ref{sample-table 100 2000} and~\ref{sample-table 100 500}. The tables indicate that the two observations stated above hold for different sample size $n$.

\begin{table}[t]
\caption{\textbf{The effective dimension $k_1$, the optimal early stopping excess risk and min-norm excess risk for different feature distributions, with sample size $n=50$, $p=1000$.}
We calculate the 95\% confidence interval for the excess risk.}
\label{sample-table 50}
\vskip 0.15in
\begin{center}
\begin{small}
\begin{sc}
\begin{tabular}{lcccr}
\toprule
Distributions & $k_1$ & Optimal Excess Risk & Min-norm excess risk \\
\midrule
$\lambda_i=\frac{1}{i}$& $\Theta(n)$& $2.515 \pm 0.0104$ & $12.632 \pm 0.1602 $ \\
$\lambda_i=\frac{1}{i^2}$ &$\Theta(n^{\frac{1}{2}})$ &$0.269 \pm 0.0053$ & $50.494 \pm 0.9378 $ \\
$\lambda_i=\frac{1}{i^3}$ &$\Theta(n^{\frac{1}{3}})$ &$0.083 \pm 0.0011$& $13.208 \pm 0.4556 $ \\
$\lambda_i=\frac{1}{i\log(i+1)}$ & $\Theta(\frac{n}{\log n})$ & $0.808 \pm 0.0090$& $46.706\pm 0.6983 $ \\
$\lambda_i=\frac{1}{i\log^2(i+1)}$ & $\Theta(\frac{n}{\log^2 n})$ & $0.381 \pm 0.0076$& $74.423 \pm 1.1472 $  \\
$\lambda_i=\frac{1}{i\log^3(i+1)}$&$\Theta(\frac{n}{\log^3 n})$ & $0.233 \pm 0.0052$ & $43.045 \pm 0.8347 $ \\

\bottomrule
\end{tabular}
\end{sc}
\end{small}
\end{center}
\vskip -0.1in
\end{table}

\begin{table}[t]
\caption{
\textbf{The effective dimension $k_1$, the optimal early stopping excess risk and min-norm excess risk for different feature distributions, with sample size $n=200$ , $p=1000$.}
We calculate the 95\% confidence interval for the excess risk.}
\label{sample-table 200}
\vskip 0.15in
\begin{center}
\begin{small}
\begin{sc}
\begin{tabular}{lcccr}
\toprule
Distributions & $k_1$ & Optimal Excess Risk& Min-norm excess risk \\
\midrule
$\lambda_i=\frac{1}{i}$& $\Theta(n)$& $2.173 \pm 0.0065$& $52.364 \pm 0.4009 $ \\
$\lambda_i=\frac{1}{i^2}$ &$\Theta(n^{\frac{1}{2}})$ &$0.161\pm 0.0039$&$36.855 \pm 0.4833 $ \\
$\lambda_i=\frac{1}{i^3}$ &$\Theta(n^{\frac{1}{3}})$ &$0.068\pm 0.0012$&$8.1990 \pm 0.2313 $  \\
$\lambda_i=\frac{1}{i\log(i+1)}$& $\Theta(\frac{n}{\log n})$& $0.628\pm 0.0034$ &$152.70 \pm 1.1073 $ \\
$\lambda_i=\frac{1}{i\log^2(i+1)}$&$\Theta(\frac{n}{\log^2 n})$ & $0.241 \pm 0.0036$&$83.550 \pm 0.7596 $ \\
$\lambda_i=\frac{1}{i\log^3(i+1)}$&$\Theta(\frac{n}{\log^3 n})$ & $0.146\pm 0.0108 $ &$33.469 \pm 0.4540 $\\

\bottomrule
\end{tabular}
\end{sc}
\end{small}
\end{center}
\vskip -0.1in
\end{table}

\begin{table}[t]
\caption{
\textbf{The effective dimension $k_1$, the optimal early stopping excess risk and min-norm excess risk for different feature distributions, with sample size $n=100$ , $p=500$.}
We calculate the 95\% confidence interval for the excess risk.}
\label{sample-table 100 500}
\vskip 0.15in
\begin{center}
\begin{small}
\begin{sc}
\begin{tabular}{lcccr}
\toprule
Distributions & $k_1$ & Optimal Excess Risk& Min-norm excess risk \\
\midrule
$\lambda_i=\frac{1}{i}$& $\Theta(n)$& $1.997 \pm 0.0876$& $27.360 \pm 0.3019 $ \\
$\lambda_i=\frac{1}{i^2}$ &$\Theta(n^{\frac{1}{2}})$ &$0.211\pm 0.0050$&$43.531\pm 0.7025 $ \\
$\lambda_i=\frac{1}{i^3}$ &$\Theta(n^{\frac{1}{3}})$ &$0.076\pm 0.0011$&$10.062 \pm 0.3022 $  \\
$\lambda_i=\frac{1}{i\log(i+1)}$& $\Theta(\frac{n}{\log n})$& $0.645\pm 0.0056$ &$96.465 \pm 1.0594 $ \\
$\lambda_i=\frac{1}{i\log^2(i+1)}$&$\Theta(\frac{n}{\log^2 n})$ & $0.289 \pm 0.0055$&$83.694\pm 0.9827 $ \\
$\lambda_i=\frac{1}{i\log^3(i+1)}$&$\Theta(\frac{n}{\log^3 n})$ & $0.181\pm 0.0045 $ &$38.090 \pm 0.6378 $\\

\bottomrule
\end{tabular}
\end{sc}
\end{small}
\end{center}
\vskip -0.1in
\end{table}

\begin{table}[t]
\caption{
\textbf{The effective dimension $k_1$, the optimal early stopping excess risk and min-norm excess risk for different feature distributions, with sample size $n=100$ , $p=2000$.}
We calculate the 95\% confidence interval for the excess risk.}
\label{sample-table 100 2000}
\vskip 0.15in
\begin{center}
\begin{small}
\begin{sc}
\begin{tabular}{lcccr}
\toprule
Distributions & $k_1$ & Optimal Excess Risk& Min-norm excess risk \\
\midrule
$\lambda_i=\frac{1}{i}$& $\Theta(n)$& $2.662 \pm 0.0066$& $23.111 \pm 0.2278 $ \\
$\lambda_i=\frac{1}{i^2}$ &$\Theta(n^{\frac{1}{2}})$ &$0.219\pm 0.0050$&$43.130\pm 0.6421 $ \\
$\lambda_i=\frac{1}{i^3}$ &$\Theta(n^{\frac{1}{3}})$ &$0.077\pm 0.0010$&$10.031 \pm 0.2942 $  \\
$\lambda_i=\frac{1}{i\log(i+1)}$& $\Theta(\frac{n}{\log n})$& $0.749\pm 0.0055$ &$88.744 \pm 0.9414 $ \\
$\lambda_i=\frac{1}{i\log^2(i+1)}$&$\Theta(\frac{n}{\log^2 n})$ & $0.312 \pm 0.0057$&$82.859 \pm 0.9394 $ \\
$\lambda_i=\frac{1}{i\log^3(i+1)}$&$\Theta(\frac{n}{\log^3 n})$ & $0.190\pm 0.0047 $ &$37.782 \pm 0.5945 $\\

\bottomrule
\end{tabular}
\end{sc}
\end{small}
\end{center}
\vskip -0.1in
\end{table}

\subsection{Details for MNIST Experiments}
In this section, we provide the experiment details and additional results in MNIST dataset.

The MNIST experiment details are described below. We create a noisy version of MNIST with label noise rate $20\%$, i.e. randomly perturbing the label with probability $20\%$ for each training data, to simulate the label noise which is common in real datasets, e.g ImageNet~\citep{DBLP:conf/eccv/StockC18,DBLP:conf/icml/ShankarRMFRS20,DBLP:conf/cvpr/YunOHHCC21}. We do not inject noise into the test data. 

We choose a standard four layer convolutional neural network as the classifier. We use a vanilla SGD optimizer without momentum or weight decay. The initial learning rate is set to $0.5$. Learning rate is decayed by 0.98 every epoch. Each model is trained for 300 epochs. The training batch size is set to 1024, and the test batch size is set to 1000. We choose the standard cross entropy loss as the loss function.

We provide the plot for different levels of label noise in Figure~\ref{fig:mnist_appen}. We present the corresponding test error of the best early stopping iterate and the final iterate in Table \ref{tab: mnist error}. Since the theoretical part of this paper focuses on GD, we also provide a corresponding plot of GD training in Figure~\ref{fig:mnist_gd} for completeness.

\begin{figure*}[t]  
\centering
\subfigure[0\% label noise]{
\begin{minipage}{0.29\linewidth}
\centerline{\includegraphics[width=1\textwidth]{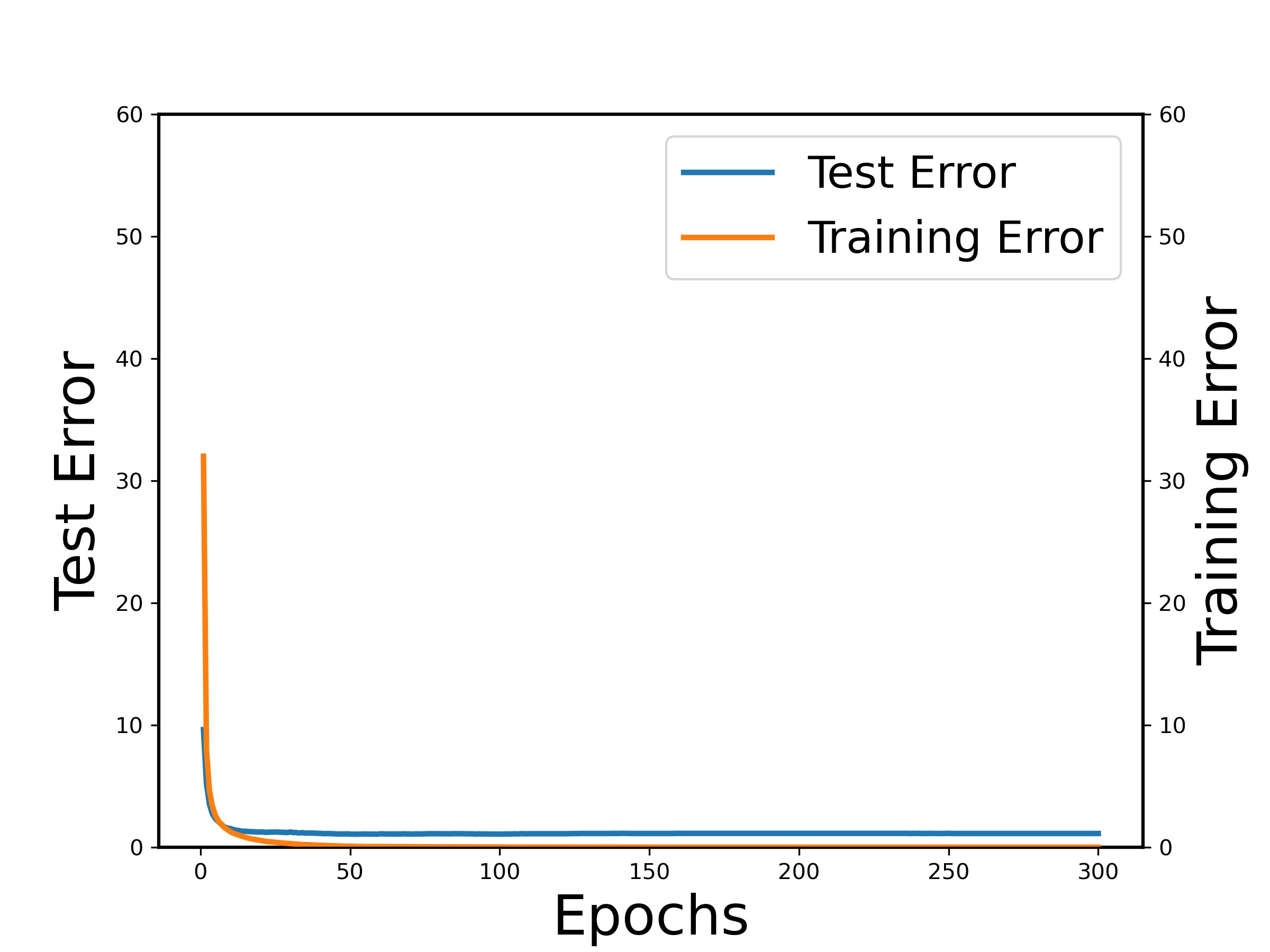}}
\end{minipage}
}
\quad
\subfigure[10\% label noise]{
\begin{minipage}{0.29\linewidth}
\centerline{\includegraphics[width=1\textwidth]{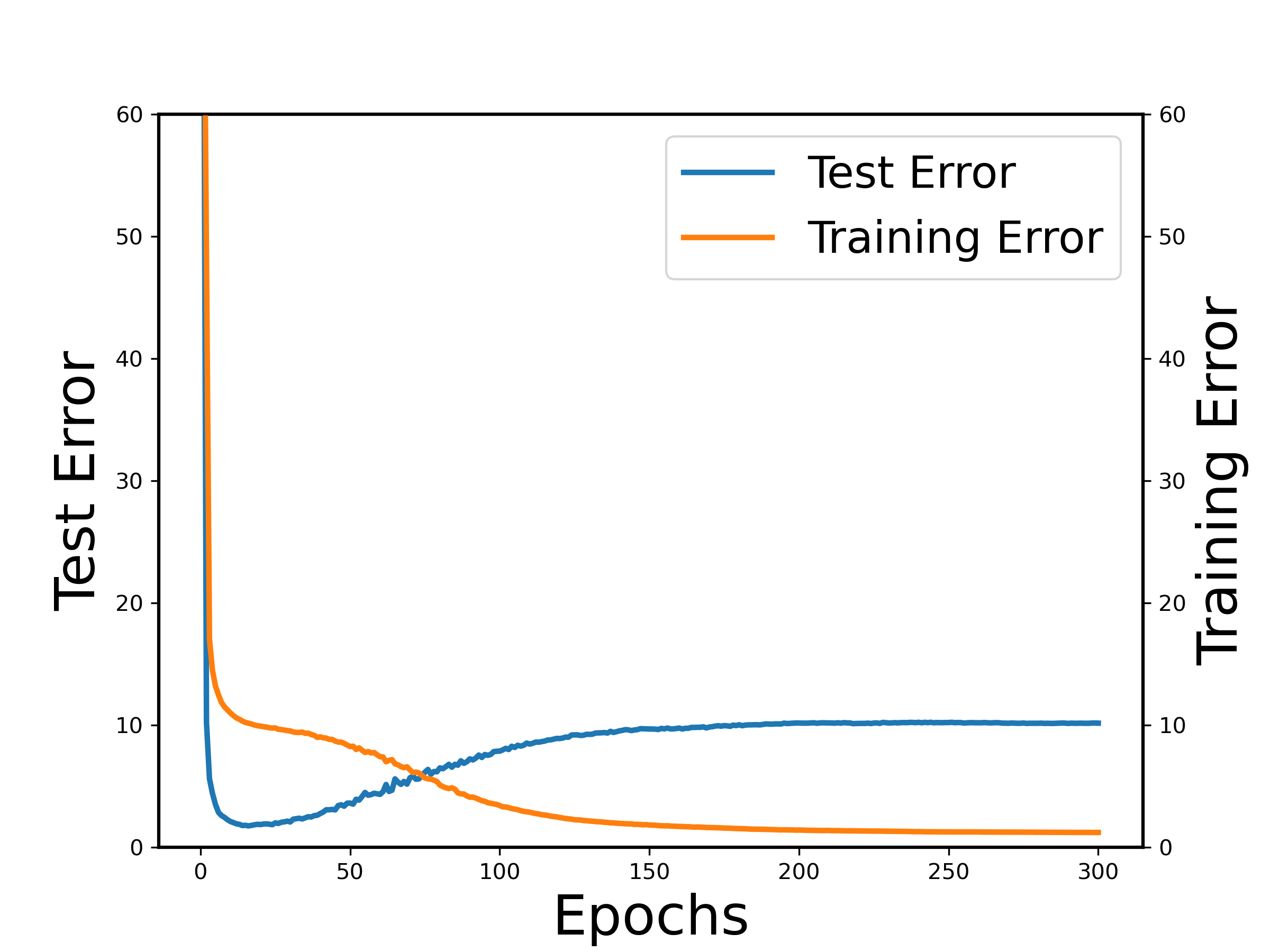}}
\end{minipage}}
\quad
\subfigure[20\% label noise]{
\begin{minipage}{0.29\linewidth}
\centerline{\includegraphics[width=1\textwidth]{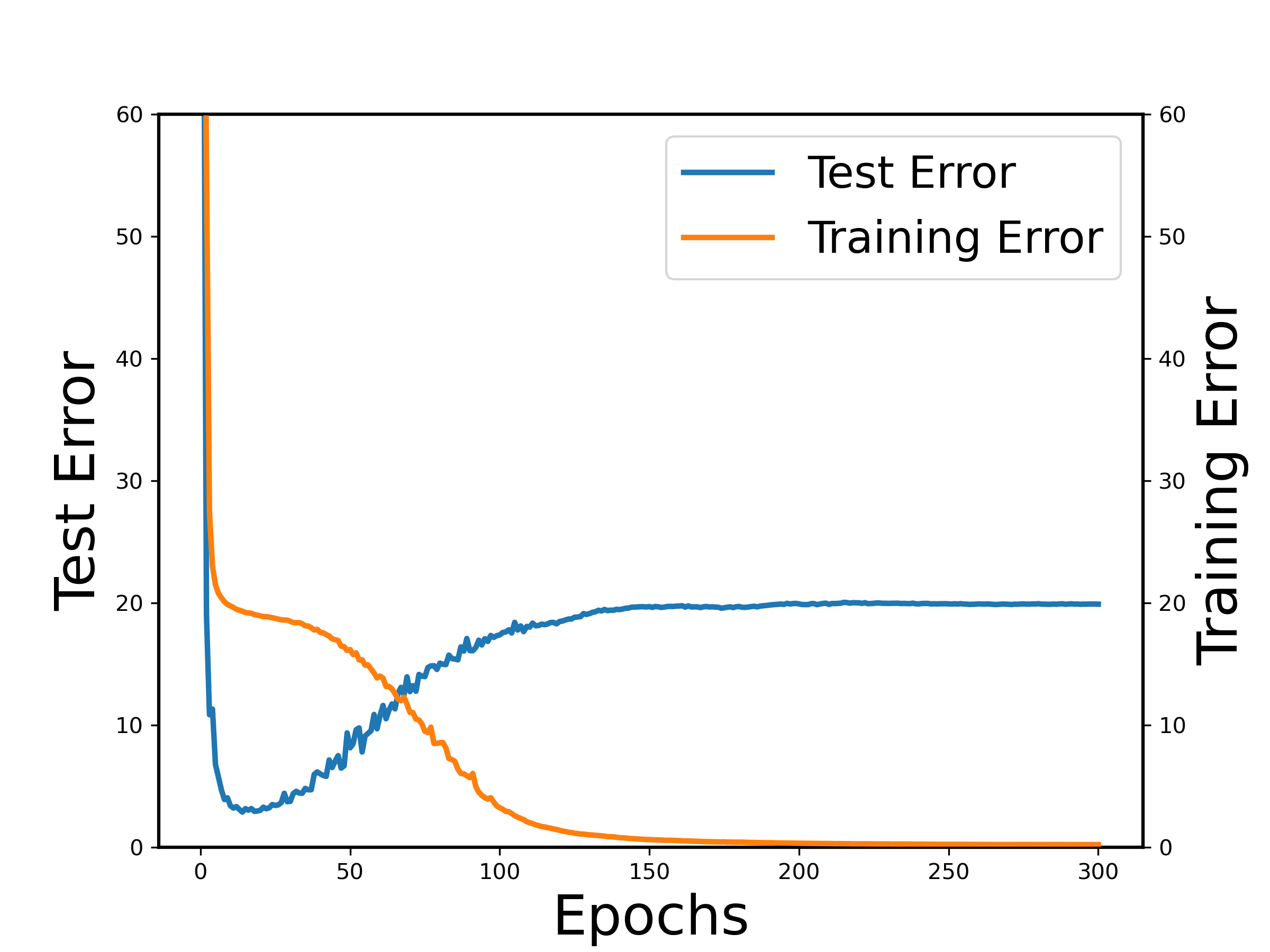}}
\end{minipage}
}
\\
\subfigure[30\% label noise]{
\begin{minipage}{0.29\linewidth}
\centerline{\includegraphics[width=1\textwidth]{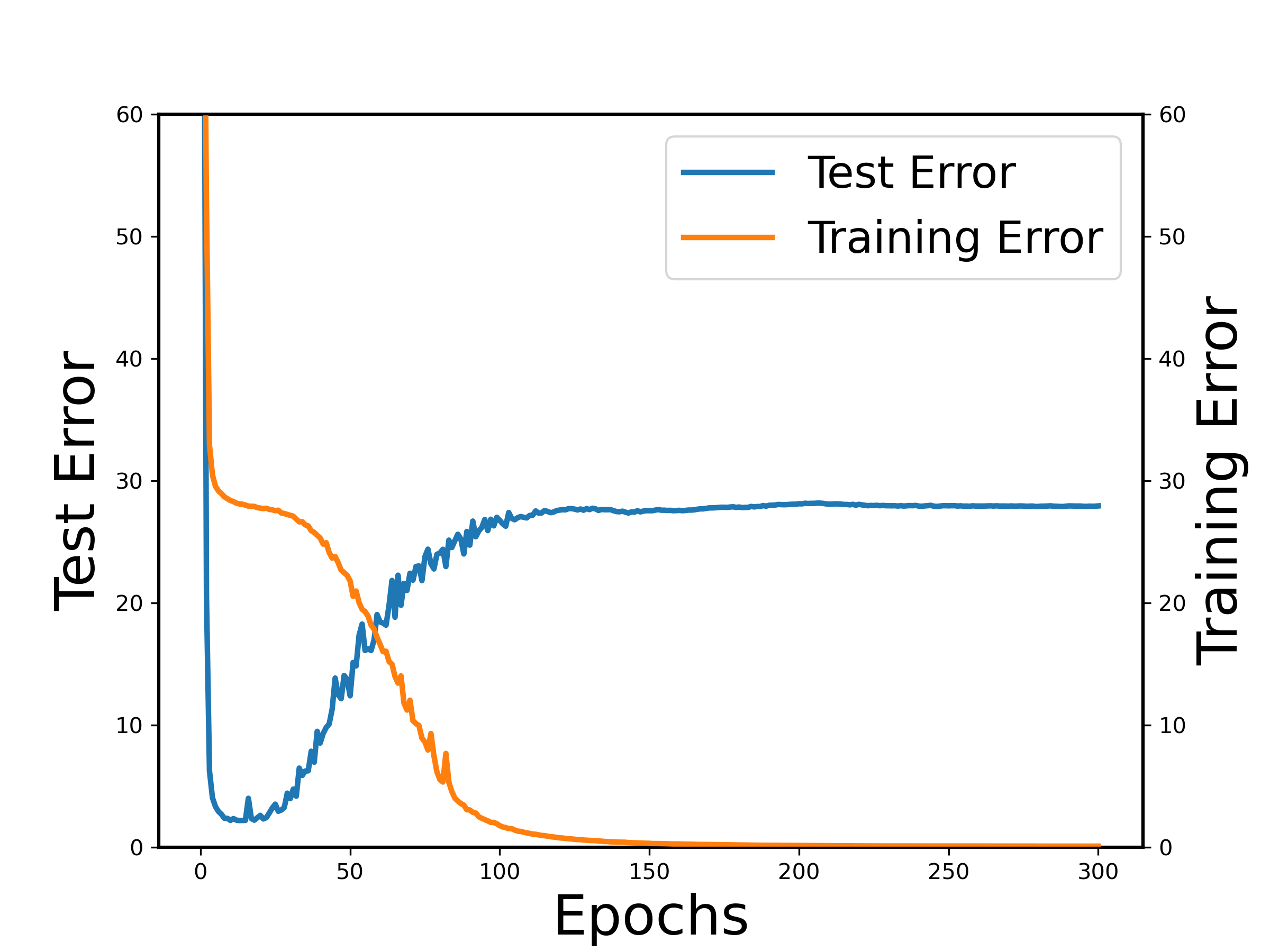}}
\end{minipage}}
\quad
\subfigure[40\% label noise]{
\begin{minipage}{0.29\linewidth}
\centerline{\includegraphics[width=1\textwidth]{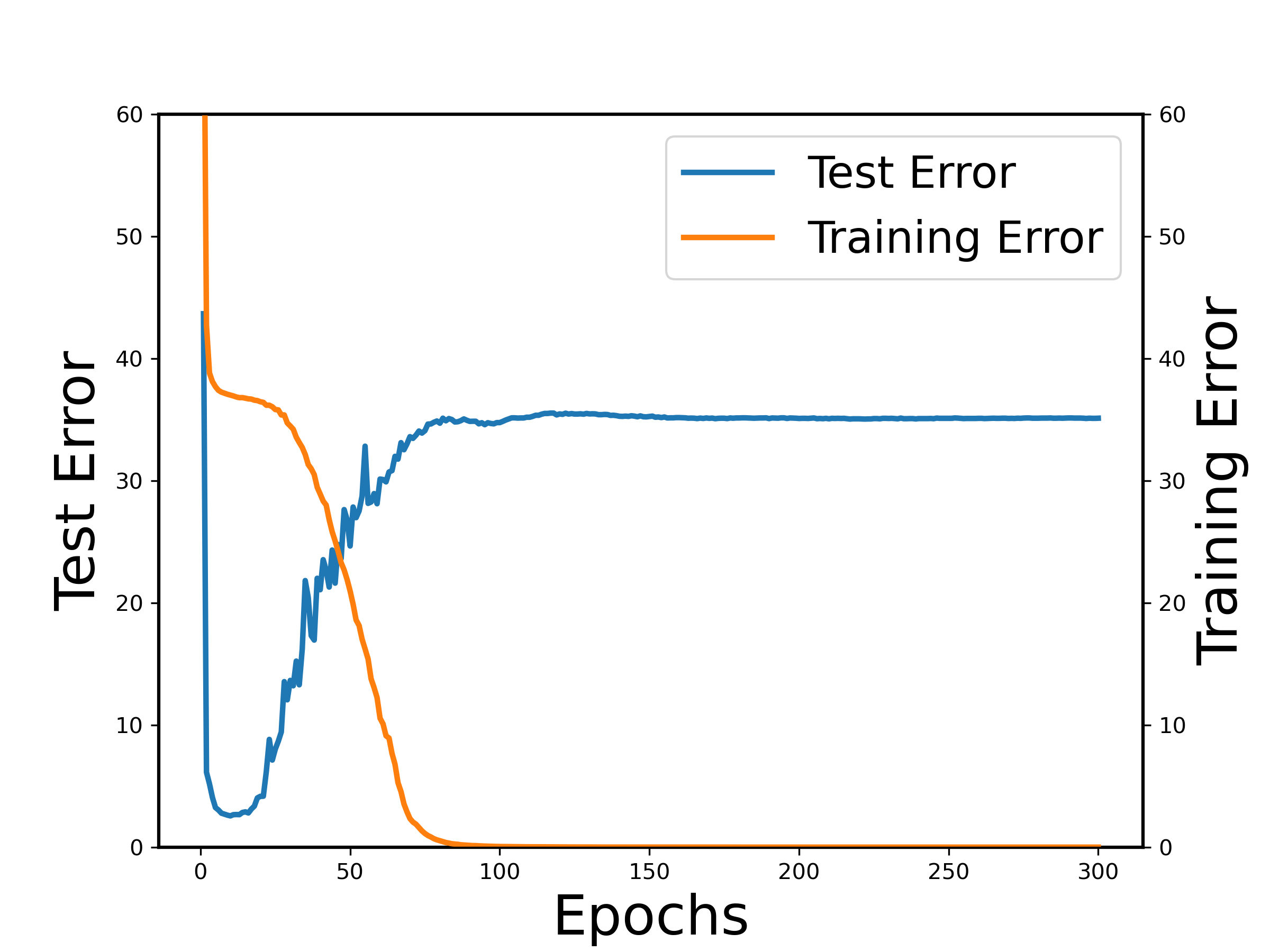}}
\end{minipage}
}
\quad
\subfigure[50\% label noise]{
\begin{minipage}{0.29\linewidth}
\centerline{\includegraphics[width=1\textwidth]{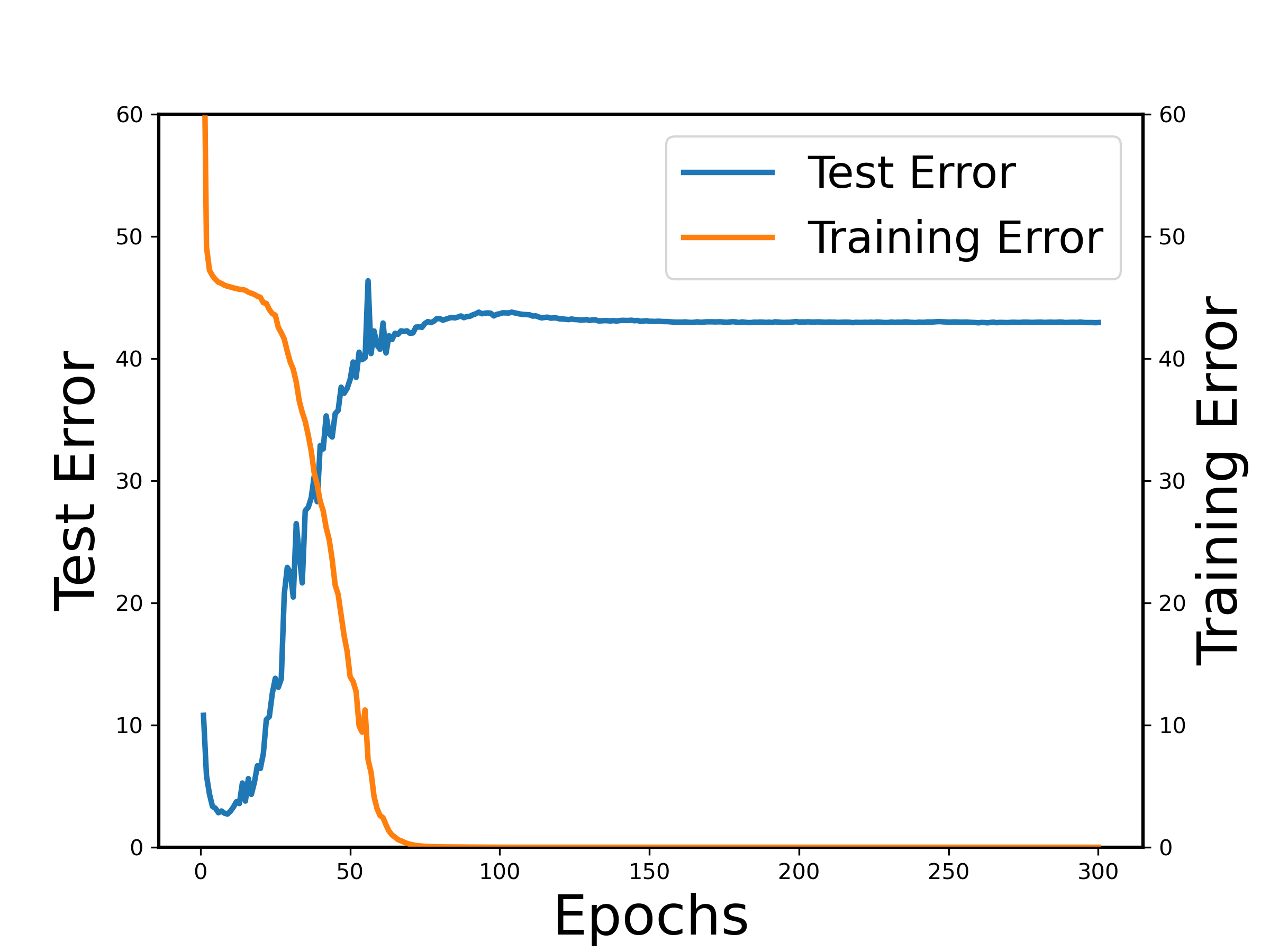}}
\end{minipage}}

\caption{\textbf{The training plot for corrupted MNIST with different levels of label noise using SGD.} Figure (c) is copied from Figure \ref{fig:intro}.}
\label{fig:mnist_appen}
\end{figure*}

\begin{figure*}[t]  
\centering
{
\begin{minipage}{0.29\linewidth}
\centerline{\includegraphics[width=1\textwidth]{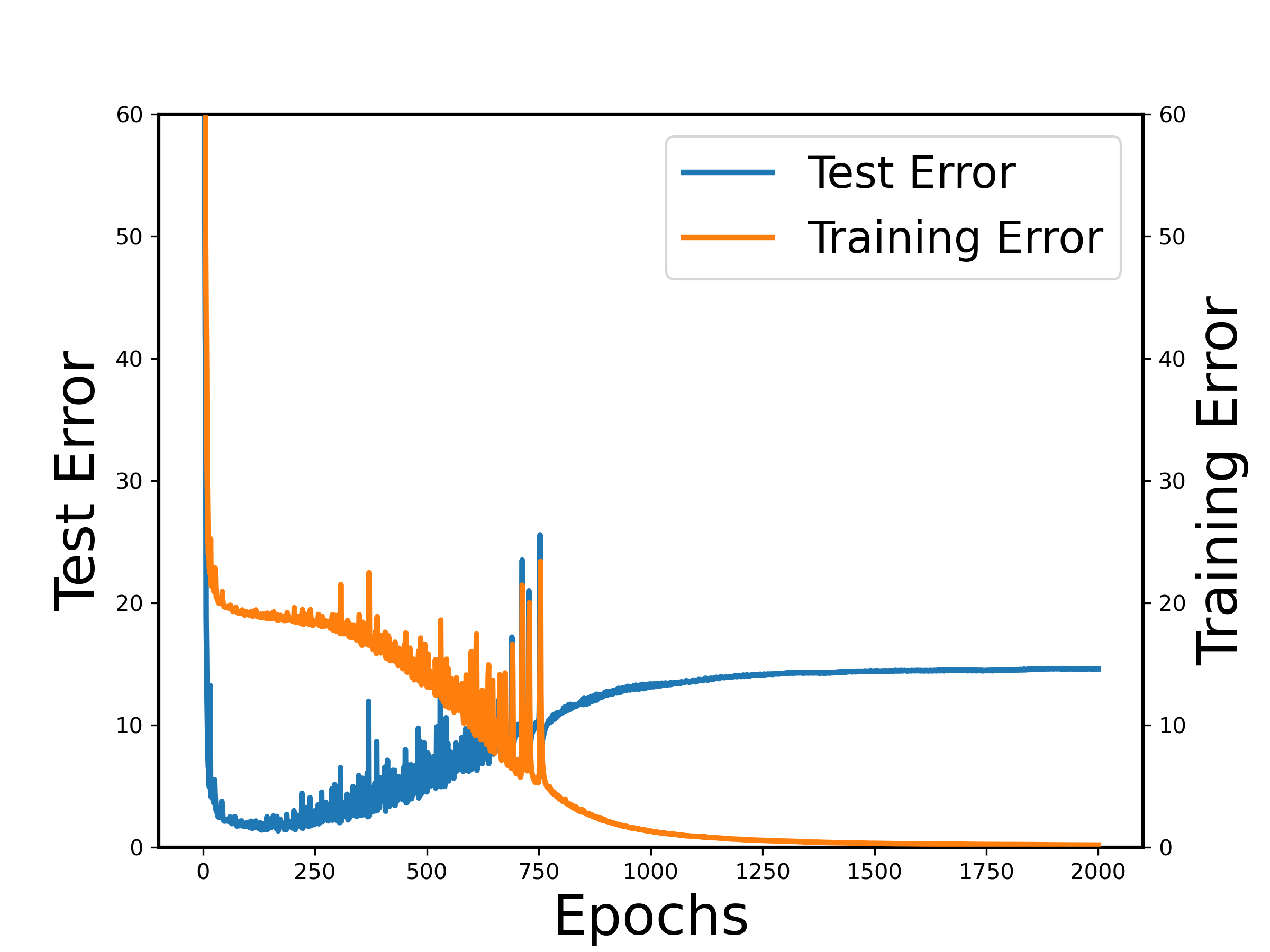}}
\end{minipage}
}

\caption{\textbf{The training plot for corrupted MNIST with 20\% label noise using GD.} }
\label{fig:mnist_gd}
\end{figure*}

\begin{table}[t]
\caption{
\textbf{The test error of optimal stopping iterate and final iterate on MNIST dataset with different levels of label noise.}
The results demonstrate that stopping iterate can have significantly better generalization performance than interpolating solutions for real datasets.}
\label{tab: mnist error}
\vskip 0.15in
\begin{center}
\begin{small}
\begin{sc}
\begin{tabular}{ccc}
\toprule
Noise Level & Optimal Test Error& Final Test Error \\
\midrule
$0\%$&  $1.07\%$& $1.13\% $ \\
$10\%$&  $1.75\%$& $10.16\% $ \\
$20\%$&  $2.88\%$& $19.90\% $ \\
$30\%$&  $2.18\%$& $27.94\% $ \\
$40\%$&  $2.57\%$& $35.15\% $ \\
$50\%$&  $2.71\%$& $42.95\% $ \\

\bottomrule
\end{tabular}
\end{sc}
\end{small}
\end{center}
\vskip -0.1in
\end{table}

\end{document}